\DeclareMathOperator*{\argmax}{arg\,max}
\definecolor{vadim}{rgb}{0.0,0.0,1.0}
\definecolor{vova}{rgb}{0.05,0.5,0.05}
\newcommand{\prob}[1]{\ensuremath{{\mathbb{P}({#1})}}}
\algrenewcommand\algorithmicrequire{\textbf{Input:}}
\algnewcommand{\LineComment}[1]{\State \(\triangleright\) #1}
\newcommand{\poses}{\ensuremath{{\cal X}}\xspace}
\newcommand{\likelihoods}{\ensuremath{{\cal L}}\xspace}
\newcommand{\motionmodel}{\ensuremath{{\cal M}}\xspace}
\newcommand{\loggamma}{\ensuremath{{l\gamma}}\xspace}
\newcommand{\logGamma}{\ensuremath{{l\Gamma}}\xspace}
\newcommand{\loglambda}{\ensuremath{{l\lambda}}\xspace}
\newcommand{\his}{\ensuremath{{\cal H}}\xspace}
\newcommand{\appropto}{\mathrel{\vcenter{
			\offinterlineskip\halign{\hfil$##$\cr
				\propto\cr\noalign{\kern2pt}\sim\cr\noalign{\kern-2pt}}}}}
\newcommand{\sembsp}{\texttt{EUS-BSP}\xspace}		
\newcommand{\mh}{\texttt{MH}\xspace}			
\newcommand{\jlp}{\texttt{JLP}\xspace}
\newcommand{\mhbsp}{\texttt{MH-BSP}\xspace}
\newcommand{\jlpbsp}{\texttt{JLP-BSP}\xspace}		
\newcommand{\weu}{\texttt{WEU}\xspace}
\title{Epistemic Uncertainty Aware Semantic Localization and Mapping for Inference and Belief Space Planning}
\author{Vladimir Tchuiev and Vadim Indelman% <-this % stops a space
\thanks{The authors are with the Department of Aerospace Engineering, Technion - Israel Institute of Technology, Haifa 32000, Israel. {\tt\{vovatch, vadim.indelman\}@technion.ac.il}. 
This work was  partially supported by US NSF/US-Israel BSF, and by the Israel Ministry of Science \& Technology (MOST).
}
}
\date{}
\newtheorem{lemma}{Lemma}
\def\endthebibliography{%
	\def\@noitemerr{\@latex@warning{Empty `thebibliography' environment}}%
	\endlist
}
\begin{document}
	
	\maketitle
	\thispagestyle{empty}
	\pagestyle{empty}
	
	% !TeX root = Paper Main.tex

\begin{abstract}

We investigate the problem of autonomous object classification and semantic SLAM, which in general exhibits a tight coupling between classification, metric SLAM and planning under uncertainty. We contribute a unified framework for inference and  belief space planning (BSP) that addresses  prominent sources of uncertainty in this context: classification aliasing (classifier cannot distinguish between candidate classes from certain viewpoints), classifier epistemic uncertainty (classifier receives data "far" from its training set), and  localization uncertainty (camera and object poses are uncertain). Specifically, we develop two methods for maintaining a joint distribution over robot and object poses, and over posterior class probability vector that  consider epistemic uncertainty in a Bayesian fashion. The first approach is Multi-Hybrid (MH), where multiple hybrid beliefs over poses and classes are maintained to approximate the joint belief over poses and posterior class probability. The second approach is Joint Lambda Pose (JLP), where the joint belief is maintained directly using a novel JLP factor. Furthermore, we extend both methods to BSP, planning while reasoning about future posterior epistemic uncertainty indirectly, or directly via a novel information-theoretic reward function. Both inference methods utilize a novel viewpoint-dependent classifier uncertainty model that leverages the coupling between poses and classification scores, and predicts the epistemic uncertainty from certain viewpoints.  In addition, this model is used to generate predicted measurements during planning. To the best of our knowledge, this is the first work that reasons about classifier epistemic uncertainty within semantic SLAM and BSP.  We  evaluate extensively our inference and BSP approaches in simulation and using real data from the Active Vision Dataset. Results clearly indicate superior classification performance of our methods compared to an approach that is not epistemic uncertainty aware.

\end{abstract}

	% ugly?
	%\setlength{\abovecaptionskip}{-10pt}
	%\setlength{\textfloatsep}{14pt}
	
	%\vspace{-5pt}
	
	\section{Introduction}
	\label{sec:introduction}
	% !TeX root = Paper Main.tex

Simultaneous localization and mapping (SLAM) is a fundamental problem in robotics and computer vision, with wide reaching applications such as autonomous vehicles and UAVs, agriculture, medical, search and rescue, and more \cite{Cadena16tro}. Specifically semantic SLAM, where a robot localizes itself and maps the environment using information from objects within it, is an actively researched field. For semantic SLAM, object classification is a crucial problem. With advances in recent years with deep-learning-based algorithms, classifiers today outperform humans in multiple classification tasks. Yet, classifiers are limited by their training, and as such, may provide unreliable results in different conditions such as lighting, image resolution, and occlusions.  In addition, from certain viewpoints the classifier may struggle distinguishing between different classes, resulting in classification aliasing. Faced with these uncertainties, classification scores may appear sporadic and unreliable, making reliable decision making a significant challenge. State-of-the-art semantic SLAM approaches do not directly reason about these uncertainties, a gap which we aim to address.

In recent years, the field of object classification saw many advances with the introduction of deep-learning-based classifiers; Most modern deep-learning based classifiers provide, given a set of candidate classes, a vector of class probabilities for a photographed object. These classifiers are trained on a set of examples for each class, and during deployment infer the observed objects' class based on said training set. If the observation does not match images on the training set, the classification result is unreliable, and if not accounted for may result in erroneous classification. Consequentially, a slight variation in classifier weights or the input may greatly change the output. This variation is referred to as \emph{epistemic} uncertainty, or model uncertainty. Several approaches were proposed to identify this uncertainty, such as Monte-Carlo (MC) dropout \cite{Gal16icml} or Bootstrapping \cite{Paass93nips}. In our work, we utilize an epistemic-uncertainty-aware classifier and incorporate it within our semantic SLAM framework.

In general, the classifier output depends on the relative viewpoint between camera and object. This dependency can be modeled \cite{Kopitkov18iros, Tchuiev19iros, Tchuiev20ral, Feldman20arj}, and then be used to improve classification and localization accuracy within a SLAM setting. But, this kind of model was not used in epistemic uncertainty aware classification. While approaches that consider the accumulated epistemic uncertainty from multiple images, i.e. the posterior epistemic uncertainty exist (see e.g. \cite{Tchuiev18ral}), they decouple the relative pose between object and camera, which is a gap we address. We introduce a viewpoint dependent classifier uncertainty model that can be both utilized for inference and later in planning.

Eventually, semantic SLAM with epistemic-uncertainty-aware classification opens the possibility of performing 'safe' decision making based on a new type of reward functions that consider epistemic uncertainty. Thereafter, this paper presents a novel active semantic SLAM approach that reasons about epistemic uncertainty. To the authors' best knowledge, this is the first work that plans over classifier epistemic uncertainty with uncertain localization as well.

%The general problem that corresponds to decision-making under uncertainty is a partially observable Markov decision process (POMDP) \cite{Kaelbling98ai}. 

Specifically, we formulate the active semantic SLAM problem within the belief space planning (BSP) framework, which is an instantiation of a partially observable Markov decision process (POMDP) \cite{Kaelbling98ai}, and consider belief-dependent reward functions. Our approach considers both localization and classifier epistemic uncertainty within BSP by maintaining a joint belief over the robot and object poses, and importantly, over the objects' class posterior probabilities. Having access to such a joint belief within BSP allows to consider classifier posterior epistemic uncertainty \emph{implicitly} using standard reward functions over the state and information-theoretic rewards. Crucially, it enables also utilizing \emph{novel} reward functions, directly over the classifier's posterior epistemic uncertainty. In this paper we introduce such a reward function and develop methods for its computation (Section \ref{sec:Uncertainty_rewards}).

Further, an inherent aspect in BSP is belief propagation and reward calculation considering different candidate actions while accounting for possible future observations (see e.g.~\cite{Indelman15ijrr, Farhi19icra}). As an analytical calculation of the corresponding expectation operator is generally not available, a common approach is to resort to a sampling-based approximation, which however involves generating future observations.  In our context, one may consider doing so by generative new images, such as, e.g.~in \cite{Ha18arxiv}. With this alternative, these images would be fed to a classifier to get the corresponding cloud of future semantic measurements (that represents the epistemic uncertainty). However, our key observation is that we can use instead the viewpoint-dependent classifier uncertainty model to generate these future semantic measurements directly.

\subsection{Related Work}

Various works presented approaches for sequential classification. Coates and Y. Ng \cite{Coates10icra} presented an approach that maintains a posterior class probability via multiplication of classification scores from an image with the prior class probability. Static State Bayes Filter (SSBF) by Omidshaifei \cite{Omidshafiei16arxiv} expanded the aforementioned approach for multiple classes. Hierarchical Bayesian Noise Inference by Omidshafiei at el. \cite{Omidshafiei16arxiv} maintains a posterior class probability vector by utilizing a Dirichlet distributed classifier model. All of these approaches do not consider epistemic uncertainty. Tchuiev and Indelman \cite{Tchuiev18iros} presented an epistemic-uncertainty-aware sequential method, while utilizing MC-dropout by Gal et al. \cite{Gal16icml} as the mechanism for extracting the epistemic uncertainty for each image. An alternative mechanism might be e.g. Bootstrapping \cite{Paass93nips} where multiple classifiers are trained on the same training set, or using auxiliary training techniques and post hoc statistics to detect out-of-distribution input data as proposed by Nitsch et al. \cite{Nitsch20arxiv}. Malinin and Gales \cite{Malinin18nips} proposed prior networks for reasoning about epistemic uncertainty in neural network outputs. These works either did not reason about epistemic uncertainty in classification, or did so without considering localization uncertainty as well. We propose a semantic SLAM approach that performs sequential classification, addresses localization uncertainty and reasons about posterior epistemic uncertainty in classification.

Some works utilized a viewpoint dependent classifier model; Velez at el. \cite{Velez12jair} and Teacy et al. \cite{Teacy15aamas} utilized a viewpoint dependent classifier model in the context of active classification with known poses. Segal and Reid \cite{Segal14iros} proposed an inference approach for general hybrid beliefs based on message passing. Kopitkov and Indelman \cite{Kopitkov18iros} presented a Gaussian viewpoint dependent classifier model, and used it for robot localization in a setting where the object class and pose are already known. Feldman and Indelman \cite{Feldman18icra} presented a sequential classification approach with a viewpoint dependent classifier model where the poses are known a-priori. Tchuiev et al. \cite{Tchuiev19iros} showed that utilizing a viewpoint dependent classifier model in a setting of semantic SLAM assists in solving the data association problem. The approach utilized a hybrid belief over poses and classes. This approach was expanded upon to a multi-robot semantic SLAM setting in \cite{Tchuiev20ral}. Ok et al. \cite{Okliu19icra} presented an approach for objected based SLAM that used a viewpoint-dependent texture plane measurement model, which is similar in concept to a viewpoint dependent classifier model. All these approaches utilized a viewpoint dependent classifier model that did not consider epistemic uncertainty, while on the other hand we do consider the epistemic uncertainty for inference and planning.

Approaches that incorporate object classification within planning include 
Atanasov et al. \cite{Atanasov14tro} and Patten et al. \cite{Patten18arj} which presented approaches for active classification using a viewpoint dependent classifier mode using a sampling based method. In the former, the robot and object poses are known, while in the latter they are part of the state. Continuous state partially observable Markov decision process (CPOMDP) by Burks et al. \cite{Burks19tro} is also capable of reasoning about hybrid beliefs. These approaches, however, did not consider the classifier's epistemic uncertainty.

Several planning approaches that do reason about epistemic uncertainty were proposed;
Faddoul et al. \cite{Faddoul15ejor} reasoned about epistemic uncertainty in MDP and POMDP transition matrices, creating a framework for decision making. 
Hayashi et al. \cite{Hayashi19icra} proposed an approach that actively trains uncertain dynamic models via neural network priors. These works do not consider epistemic uncertainty in the context of classification.
Lutjens et al. \cite{Lutjens18arxiv} presented a reinforcement learning approach that reasons about epistemic uncertainty for obstacle avoidance with known object poses. The approach utilized both MC dropout and bootstrapping for extracting epistemic uncertainty from measurements. On the other hand, we consider a BSP approach with a belief over poses and class probabilities, jointly considering both localization and classifier epistemic uncertainty within a semantic SLAM framework.

To generate measurements, one may consider generating raw images when performing classifier epistemic-uncertainty-aware planning.
Ha et al. \cite{Ha18arxiv} proposed World Models: a neural network that creates an image given a pose within the environment the network was trained on. Wang et al. \cite{Wang19cvpr} proposed an image extrapolation approach using feature expansion network (FEN) and context prediction network (CPN). Mildenhall et al. \cite{Mildenhall20eccv} presented Neural Radiant Fields (NeRF) which rendered images using volume-rendering techniques with a neural network trained on images of the environment with corresponding poses.
On the other hand, we present an approach that generates measurements via our proposed viewpoint dependent classifier uncertainty model.

\subsection{Contributions}

In this paper we contribute a unified framework for epistemic uncertainty aware inference and belief space planning in the context of semantic perception and SLAM. Our framework considers prominent sources of uncertainty --- classification aliasing, classifier epistemic uncertainty, and localization uncertainty --- within inference and BSP.

Specifically, the main contributions of this paper are as follows.
\begin{enumerate}
	
	\item We develop two methods for maintaining a joint distribution over robot and object poses, and over the posterior class probability vector that considers epistemic uncertainty in a Bayesian fashion. The first approach is Multi-Hybrid (MH), where multiple hybrid beliefs over poses and classes are maintained to approximate the joint belief over poses and posterior class probability. The second approach is Joint Lambda Pose (JLP), where the joint belief is maintained directly using a novel JLP factor. 
	
	\item We extend both methods to a BSP framework, planning over posterior epistemic uncertainty indirectly, or directly via a novel information-theoretic reward over the distribution of posterior class probability.
	
	\item Our inference and BSP methods utilize a novel viewpoint dependent classifier model that predicts epistemic classifier uncertainty given a candidate class and relative viewpoint, allowing us to reason about the coupling between poses and classification scores, and predict future epistemic classifier uncertainty, while avoiding predicting and generating entire images.		
	
	\item We extensively study our inference and BSP methods in simulation and using real data from the Active Vision Dataset \cite{Ammirato17icra}. 	
\end{enumerate}

\subsection{Paper Structure}

This paper is structured as follows: We cover preliminary material  and formulate the addressed problem in Sec.~\ref{sec:preliminaries}, and then provide a brief approach overview in Sec.~\ref{sec:ApproachOverview}. 
 In Sec.~\ref{sec:approach-inference} we address epistemic-uncertainty-aware inference; MH and JLP are introduced,  first for the single object case and afterwards for the  multiple objects case. In Sec.~\ref{sec:approach-planning} we expand both approaches to BSP; specifically, in Sec.~\ref{sec:Uncertainty_rewards} we introduce  and develop the calculation of our novel information-theoretic reward over the distribution of posterior class probability. Finally, we validate our approaches first in simulation in Sec.~\ref{sec:Sim}, and then using Active Vision Dataset and BigBIRD in Sec.~\ref{sec:Exp}.

	\section{Background and Problem Formulation}
	\label{sec:preliminaries}
	% !TeX root = Paper Main.tex

In this section we introduce notations, provide preliminary material, and formulate the problem addressed in this work. First, we introduce our setting and simulatneous localization and mapping (SLAM) notations. Afterwards, we introduce notations specifically for classification in the context of epistemic uncertainty. Finally, we  briefly introduce belief space planning (BSP), and present the problem formulation for epistemic uncertainty aware semantic inference and planning.

For the reader's convenience, main notations used in this paper are  summarized in Table \ref{table:Notation}.

\begin{table}\scriptsize{
		\caption{Main notations used in the paper.\label{table:Notation}}
		\begin{tabularx}{\textwidth}{p{0.12\textwidth}X}
			
			\textbf{Parameters} \\
			$x$ & Robot pose\\
			$x^o$ & Object $o$'s pose \\
			$\poses_k$ & All robot and object poses up to $k$ \\
			$x^{rel}$ & Relative pose between $x$ and $x^o$ \\ 
			$O_k$ & Set of all objects observed at time $k$ \\ 
			$x^{inv}_k$ & Set that contains the last robot pose and all object poses from $O_k$ \\
			$c^o$ & Object $o$'s class \\
			$C$ & Class realization of all objects \\
			$z^g$ & Geometric measurement \\
			$z^s$ & Semantic measurement \\
			$n$ & The amount of all objects in the environment \\
			$n_k$ & Number of objects observed at time $k$ \\
			$N_k$ & Number of objects observed up to time $k$ \\
			$\motionmodel_k$ & Motion model from $x_{k-1}$ to $x_k$ \\
			$a$ & Robot action \\
			$\his_k$ & History of measurements and action up to time $k$ \\
			$\his^g_k$ & History of geometric measurements and action up to time $k$ \\
			$Z^g_k$ & All geometric measurements for all objects at time $k$ \\
			$\likelihoods^s$ & Semantic measurement likelihood \\
			$h_c$ & Expectation of class $c$'s classifier uncertainty model \\
			$\Sigma_c$ & Covariance of class $c$'s classifier uncertainty model \\
			$\likelihoods_k$ & Geometric and semantic measurement likelihood at time $k$ \\
			$D$ & Classifier training dataset \\
			$\{ \cdot \}$ & Set or point cloud \\
			$I$ & Raw image \\
			$l\square$ & Logit transformation of probability vector \\
			$\gamma$ & Probability vector classifier output \\
			$\gamma^c$ & Element of $\gamma$ of class $c$ \\
			$\Gamma_k$ & Set of all $\gamma$ observed at time $k$, one per object \\ 
			$\logGamma_k$ & Set of all logit transformations for all $\gamma_k \in \Gamma_k$ \\
			$\lambda$ & Posterior class probability vector \\
			$\lambda^c$ & Element of $\lambda$ of class $c$ \\
			$\Lambda_k$ & Posterior probability vector for class realizations \\
			$\bar{\loglambda}_k$ & Set of $\loglambda$ of all objects observed up to $k$ \\
			$W$ & Set of all possible classifier weight realizations $w$ \\ 
			$b[\cdot]$ & Belief, probability conditioned on history $\prob{\cdot|I_{1:k},\his^g_k,D}$. \\
			$b^c_w$ & Continuous belief conditioned on history, $c$, and $w$ \\
			$hb_w$ & Hybrid belief conditioned on $w$ \\
			$l\likelihoods^s$ & Logit transformation of semantic measurement likelihood \\

			\textbf{Subscripts} \\
			$w$ & Classifier weight realization\\
			$k$ & Time step \\
			$L$ & Planning horizon \\
			
			\textbf{Superscript} \\
			$o$ & Object $o$ \\
			$c$ & Class hypothesis of an object \\ 
			$C$ & Class hypothesis of all objects \\
			
	\end{tabularx}}
\end{table}

\subsection{Simultaneous Localization and Mapping (SLAM)}\label{sec: SLAMHybrid}

Consider a robot operating in an unknown environment represented by object landmarks. For inference and planning over a distribution of posterior class probabilities, we need to solve an underlying object based simultaneous localization and mapping problem (SLAM). 
The robot's and objects pose, and objects' classes are all unknown. Let $x_k$ denote the robot pose at time $k$; Let $x^o$ and $c$ denote object pose and class respectively. To shorten notations, denote $\poses_x \triangleq \{ x^o, x_{1:k} \}$ as all poses of robot and the observed (expanded later to multiple objects) up until time $k$.

The robot receives from observed objects both geometric and semantic measurements. Let $z_{k}$ denote a measurement received at time $k$ from the object. This measurement is split into geometric $z^g_k$ and semantic $z^s_k$ measurements; All those measurements are aggregated to a set $z_k \triangleq \{ z^g_k, z^s_k \}$. The robot action at time $k$ is denoted $a_k$, and finally we  denote the measurement history as $\his_k \triangleq \{ z_{1:k}, a_{0,k-1} \}$. We assume independence between semantic and geometric measurements, as well between different time steps. 

We utilize a known Gaussian motion model with constant parameters, denoted $\motionmodel_{k}$, and defined as:
\begin{equation}\label{eq:Motion_Model}
	\mathcal{M}_k \triangleq \prob{x_k|x_{k-1},a_{k-1}}
\end{equation}
and a known geometric model $\prob{z^g_k|x^o,x_k}$. In addition, we use an externally trained viewpoint dependent classifier and uncertainty model $\prob{z^s_{k,n}|c_n,x^o,x_k}$ that will be discussed in Section \ref{sec:VDCM}. Let us denote the corresponding measurement likelihood term,
\begin{equation}\label{eq:Likelihoods}
	\mathcal{L}_k \triangleq \prob{z^g_k|x^o,x_k} \cdot \prob{z^s_k|c,x^o,x_k},
\end{equation}
where, both geometric and classifier models are considered Gaussian as well.

%----------------------------------------------------------------------------------------
\subsection{Distribution Over Class Probability Vector}\label{sec:Gamma_Lambda}

During inference the robot receives a raw image in which observed objects are segmented. In standard (deep-learning) approaches a classification model, i.e. a classifier, is learned beforehand and used to classify the objects within each segment (e.g. bounding box) by producing an output of a class probability vector. Given fixed classifier weights $w$, we  denote a probability vector from a classifier at time $k$ as
\begin{equation}
	\gamma_{k} \triangleq \prob{c|I_{k},w},
\end{equation}
where $I_{k}$ is the raw image of the object. Also, denote $\gamma_{k,w}$ as the probability vector given a specific $w$. In practice, the image fed into the classifier is a cropped image of an object via a bounding box. Note that $\gamma_k \triangleq [ \gamma^1_k,...,\gamma^m_k ] \in \mathbb{R}^m$ is a probability vector, thus it must satisfy the following conditions:
\begin{itemize}
	
	\item All its elements must sum to 1, i.e. $\sum_{i=1}^m \gamma^i_k = 1$.
	
	\item Each element is bounded between 0 and 1, i.e. $0 \leq  \gamma^i_k \leq 1, \;\; \forall i = 1,...,m$.
	
\end{itemize}

In contrast to this standard approach, in this work we reason about classifier epistemic uncertainty. Denote $D$ as the classifier's training set. In literature, these approaches rely on describing the trained weights $w$ as random variables by themselves distributed $w \sim \prob{w|D}$, thus making $\gamma_k$ a random variable. In this paper we create a set $W$ of sampled $w$ to produce a point cloud of $\gamma_k$ vectors per object and time step, such that we can describe the distribution over $\gamma_k$ with the delta Dirac function $\delta(\cdot)$:
%\textsl{}
\begin{equation}
	\gamma_{k} \sim \prob{\gamma_k|I_k,D} = \int_w \delta(\gamma_k = \prob{c|I_{k},w}) \prob{w|D} dw,
\end{equation}
which we approximate via sampling as:
\begin{equation}
	\prob{\gamma_k|I_k,D} \approx \frac{1}{|W|} \sum_w \delta(\gamma = \prob{c|I_{k},w}).
\end{equation}
Thus for each time step we get a point cloud $\{\gamma_k\}$ per object where its spread describes the epistemic model uncertainty of the classifier. See a simplified illustration in Fig.~\ref{fig:SUV_demo}, where an object is observed from multiple viewpoints, and the classifier outputs a cloud of $\gamma$'s for each viewpoint. For example, the cloud $\{\gamma_k\}$ obtained  by observing the object from the bottom right corner is spread widely, therefore the epistemic uncertainty from that viewpoint is high. Contrast it with the upper-right viewpoint where the spread is tight, representing low epistemic uncertainty. In this paper the semantic measurements are those point clouds within the $m-1$ simplex, such that $z^s_k = \{ \gamma_{k} \}$. The set of sampled $w$ can be created by, for example, MC-dropout \cite{Gal16icml} or Bootstrapping \cite{Paass93nips}.

\begin{figure}[!htbp]
	\centering
	
	\includegraphics[width=0.5\textwidth]{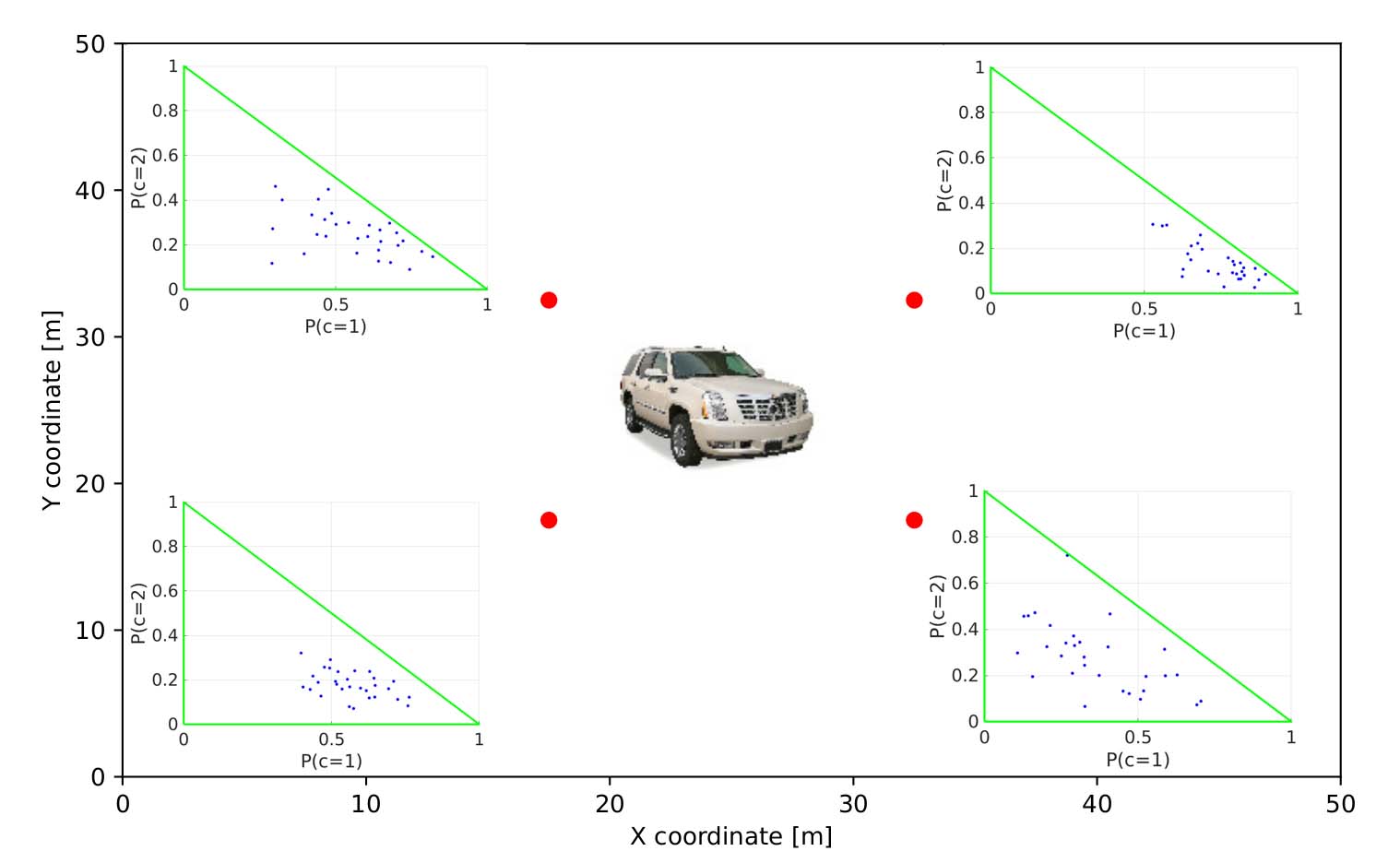}
	\caption{Illustration of viewpoint dependency for both classification scores and epistemic uncertainty. The figure presents simplex graphs for different viewpoints, where $m=3$. The individual class probability scores are shown as blue points in the simplex, where it's borders are in green. The red points represent possible viewpoints observing the SUV in the middle.}
	
	\label{fig:SUV_demo}
\end{figure}

%--------------------------------------------
\subsection{Distribution Over Posterior Class Probability Vector}

Eventually the posterior over a \emph{sequence} of $\gamma$ vectors can be inferred. This posterior takes into account both the epistemic uncertainty from multiple observations of an object, as well as localization uncertainty induced by coupling between relative poses and class probabilities. The posterior is defined as follows: 
\begin{equation}\label{eq:Lambda_Def}
	\lambda_k \triangleq \prob{c|\gamma_{1:k},z^g_{1:k}},
\end{equation}
where $\lambda_k$ is deterministically determined by both a sequence $\gamma_{1:k}$ and the geometric measurement history. For a specific $\gamma_{1:k,w}$ sequence which is created by a specific $w$, we use the notation $\lambda_{k,w}$. Because we consider $\gamma_{1:k}$ to be a random variable (as $w$ is a random variable), so is $\lambda_k$. As such, we can define a belief over $\lambda_k$ the following way:
\begin{equation}\label{eq:Belief_Lambda_Def}
	b[\lambda_k] \triangleq \prob{\lambda_k|I_{1:k},\his^g_k,D}.
\end{equation}
The belief $b[\lambda_k]$ encompasses both the posterior classification probability vector via $\mathbb{E}(\lambda_k)$, and the epistemic and localization uncertainty via $Cov(\lambda_k)$.
The belief $b[\lambda_{k}]$ representation is more expressive than a single class probability vector representation, and it can reflect four possible archetypes, as seen in Fig.~\ref{fig:simplex_figures} (see \cite{Malinin18nips}). Fig.~\ref{fig:Fig_Unknown_Unknown} presents an out-of-distribution case where the inputs to the classifier are totally alien, therefore the output is completely unpredictable. Fig.~\ref{fig:Fig_Known_Known} represent a case where the classifier can safely identify the object with high degree of certainty, i.e. the input is close to the training set. Intuitively, this is the case that we aim for, and generally has the highest reward. Fig.~\ref{fig:Fig_Known_Unknown} represents the case of high data uncertainty where the classifier certainly cannot disambiguate between different classes, i.e. the classifier "knows" that it does not know. This can be resulted from ambiguity in the training set between different classes, when objects from different classes look identical from certain viewpoints. Finally, Fig.~\ref{fig:Fig_Uncertain_Edge} represent a case where the classifier can vaguely infer the object class, but it's still far from the training set (e.g. a car of an unusual shape that there are no similar images in the training set), therefore with a large degree of uncertainty.

\begin{figure*}[!htbp]
	
	\begin{subfigure}[b]{0.23\textwidth}
		\includegraphics[width=\textwidth]{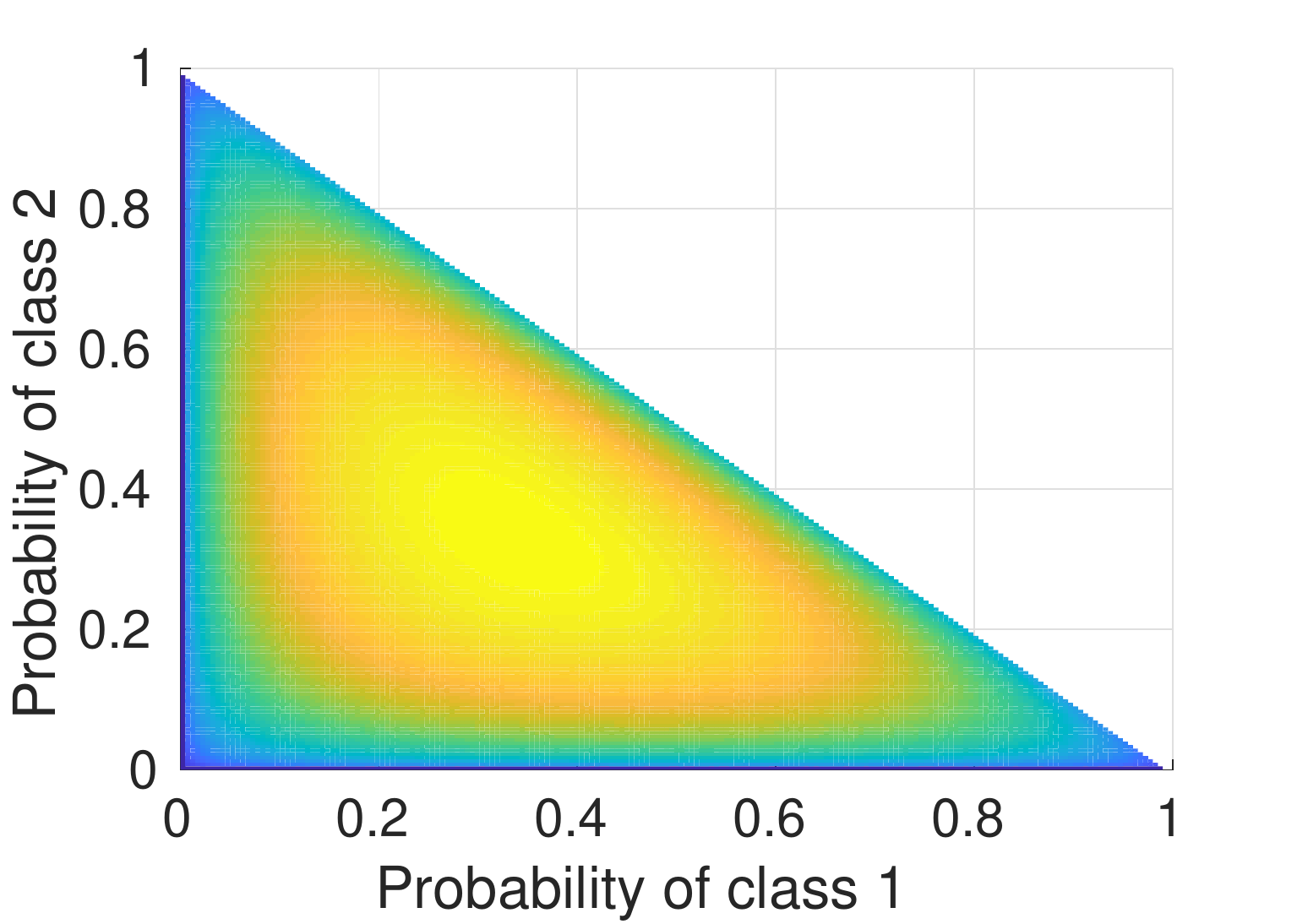}
		\caption{Unknown-unknown}\label{fig:Fig_Unknown_Unknown}
	\end{subfigure}
	\begin{subfigure}[b]{0.23\textwidth}
		\includegraphics[width=\textwidth]{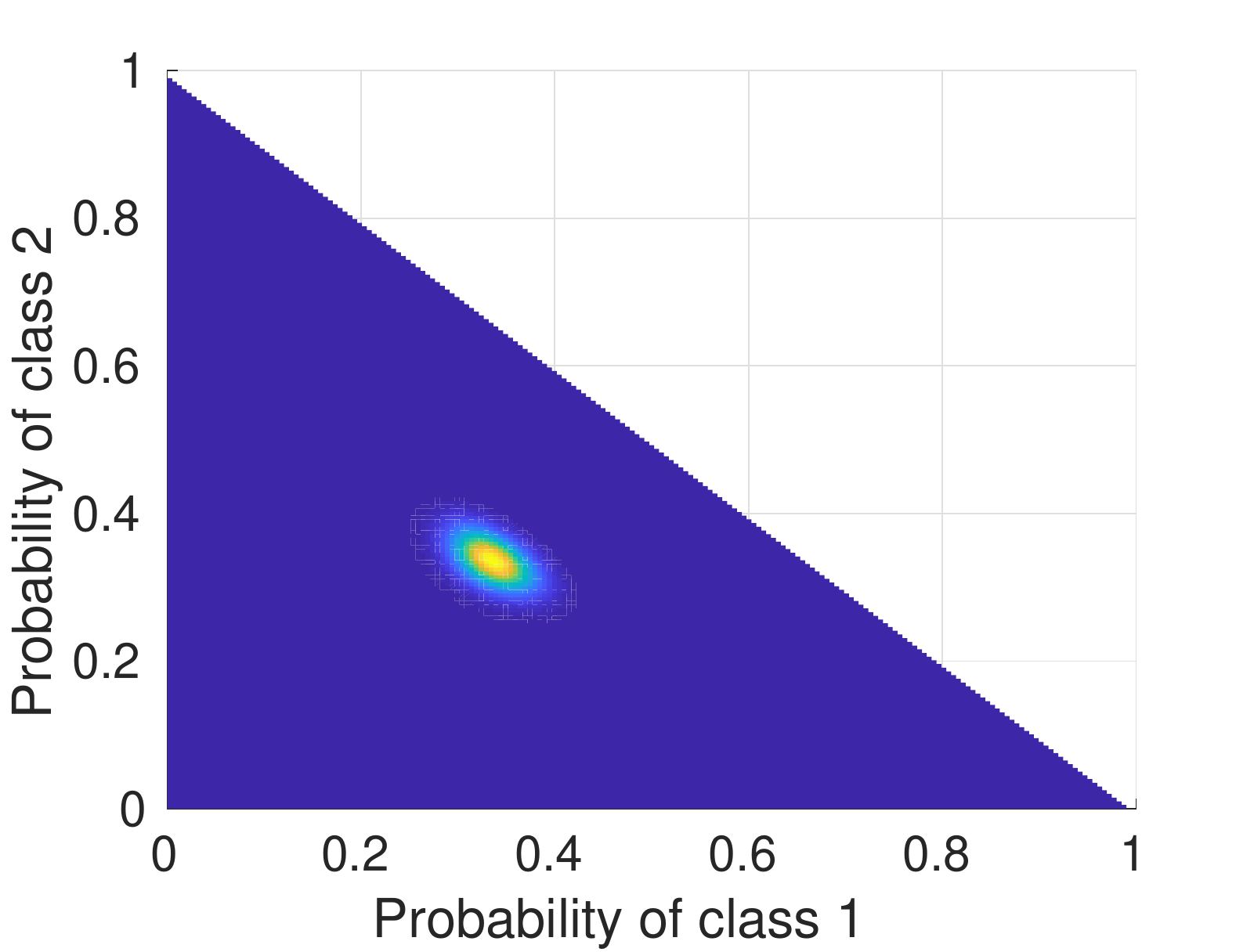}
		\caption{Known-unknown}\label{fig:Fig_Known_Unknown}
	\end{subfigure}
	\begin{subfigure}[b]{0.23\textwidth}
		\includegraphics[width=\textwidth]{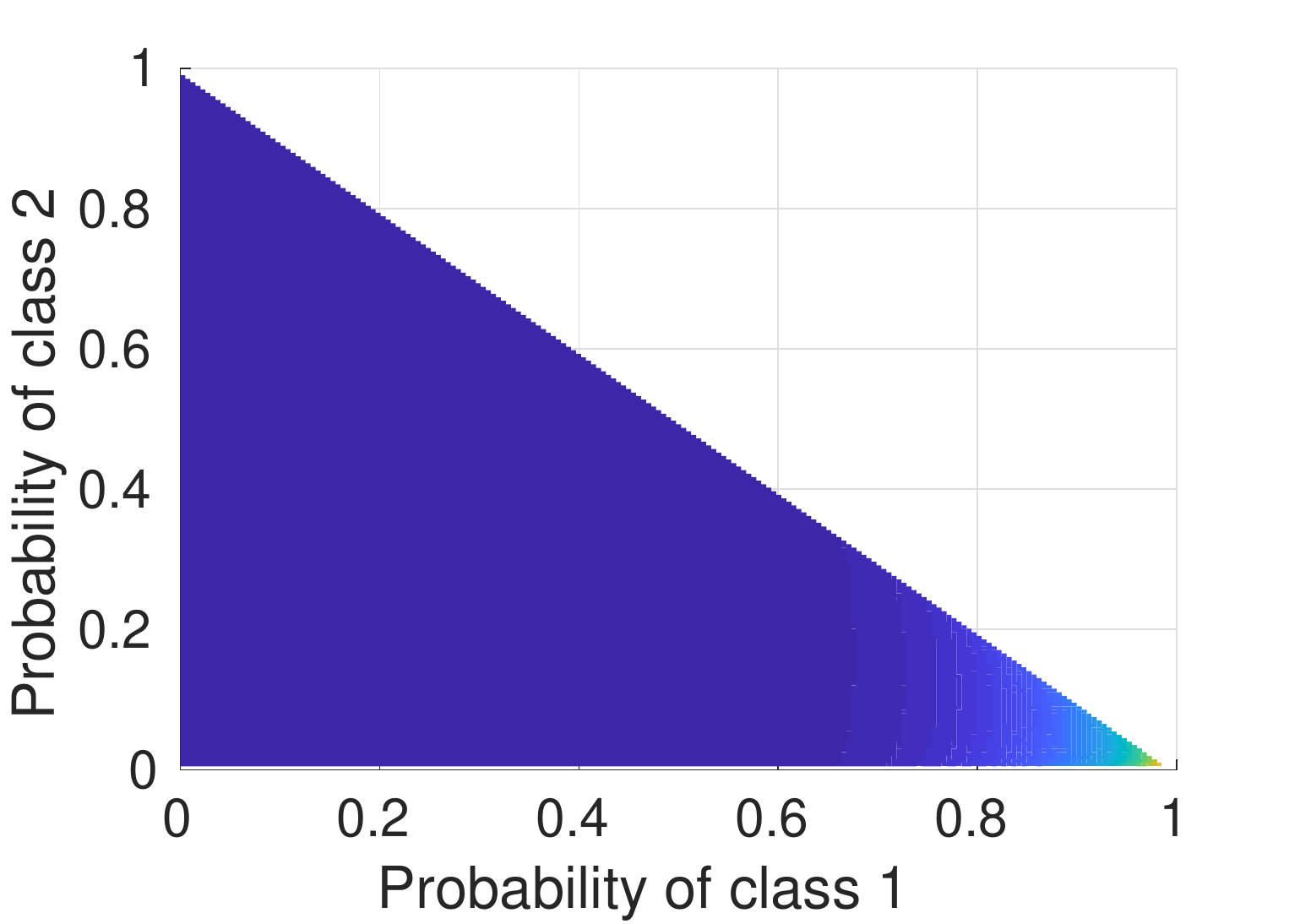}
		\caption{Known-known}\label{fig:Fig_Known_Known}
	\end{subfigure}
	\begin{subfigure}[b]{0.23\textwidth}
		\includegraphics[width=\textwidth]{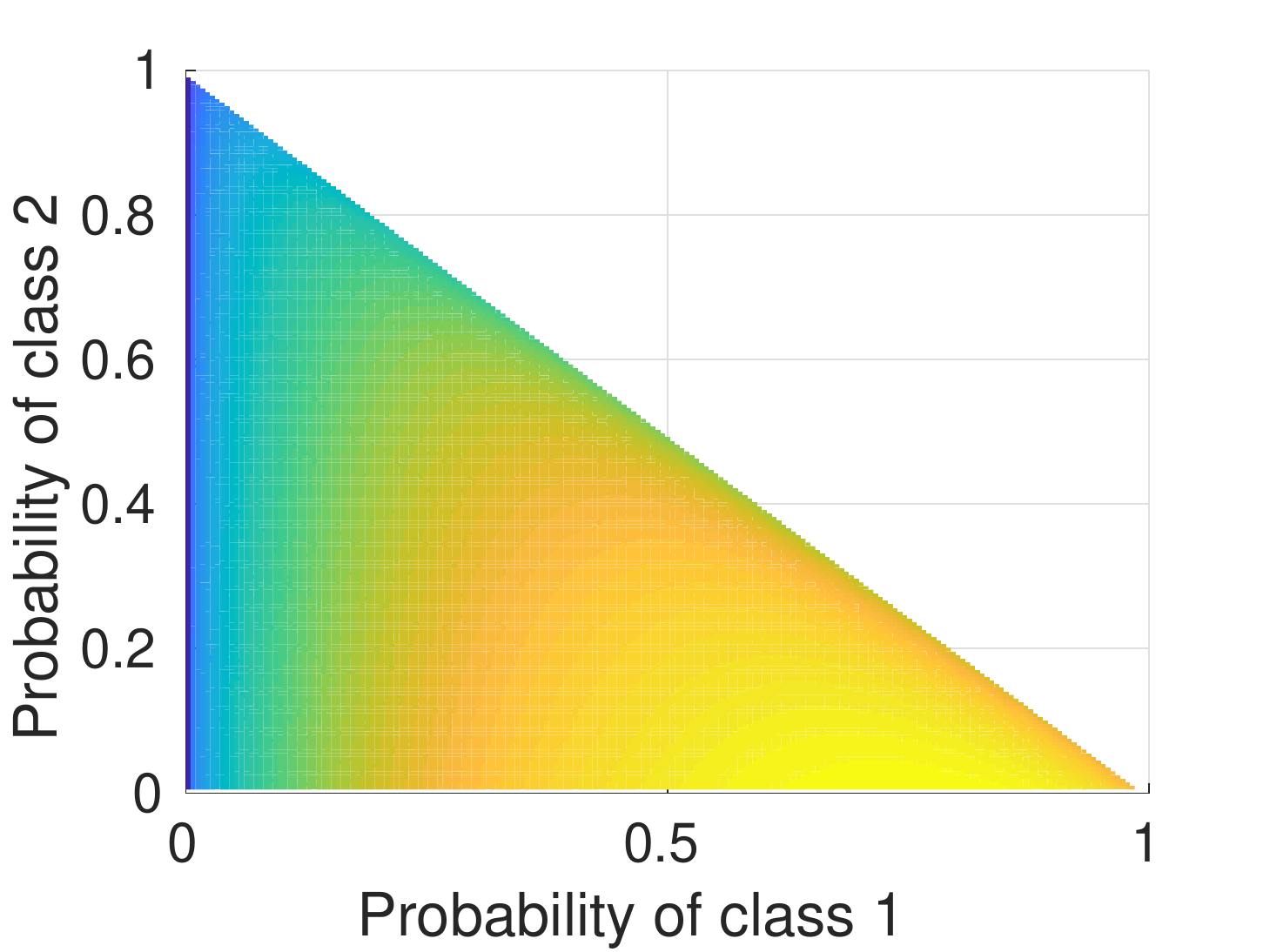}
		\caption{Uncertain classification}\label{fig:Fig_Uncertain_Edge}
	\end{subfigure}
	
	\vspace{-0.2cm}
	\caption[]{The 4 archetypes of $b[\lambda_k]$, shown in a 3 dimensional Dirichlet simplex example, where blue to yellow correspond to low to high probability respectively. \textbf{(a)} is an out-of-distribution setting, where the classifier does not identify the object and the epistemic uncertainty is high. \textbf{(b)} is high data uncertainty setting, which close to $D$, but the class is identifiable in the training set itself. In \textbf{(c)} the classifier recognizes that the object of a certain class with certainty, a scenario we aim for. In \textbf{(d)} the classifier gives preference to one of the classes, but with a high degree of uncertainty.}
	\label{fig:simplex_figures}
\end{figure*} 

As we shall see, this belief can be used within belief space planning, e.g. going to relative poses where the epistemic uncertainty is the smallest to safely classify objects, or vice-versa going to relative poses with high epistemic uncertainty to potentially learn a model online.

More generally $\lambda_k$ is coupled with object and camera poses $\poses_k$. A joint belief over $\lambda_k$ and $\poses_k$ can be maintained, denoted as: 
\begin{equation}\label{eq:Belief_Lambda_Pose_Def}
	b[\lambda_k,\poses_k] \triangleq \prob{\lambda_k,\poses_k|I_{1:k},\his^g_k,D}.
\end{equation}
We may require to compute $\mathbb{E}(\lambda_k)$ to, for example, compute a reward function that depends on $\mathbb{E}(\lambda_k)$. Consider that $\prob{c=i|\lambda_k,\poses_k} = \lambda_k^i$, then for every object class $c=i$:
\begin{equation}\label{eq:PosteriorClass}
	\prob{c=i|I_{1:k},Z^g_k,D} = \int_{\lambda_k,\poses_k} \lambda_k^i  \cdot
	b[\lambda_k,\poses_k] d\lambda_k d\poses_k = \mathbb{E}(\lambda_k^i).
\end{equation}
In \cite{Tchuiev18ral} we presented an approach to maintain $b[\lambda_k]$ in a setting with a single object, didn't consider the coupling between $\poses_k$ and $\lambda_k$, and the approach was limited to inference. On the other hand, here we account for the coupling between $\lambda_k$ and $\poses_k$, present an active approach, and expand to a multi-object setting. First we consider the formulation for a single object. In the approach sections \ref{sec:approach-inference} and \ref{sec:approach-planning} we extend the formulation to the multiple object case, with each method having its specific notations.

%----------------------------------------------------------
\subsection{Belief Space Planning (BSP)}\label{sec:Prelim_BSP}

Given a general current belief $b_k$, one can reason about the best future action from a set of action to maximize (or minimize) an object function. With $b_k$ and a set of future actions $a_{k:k+L}$, it is common to define the objective function as the expected cumulative reward,
%expectation over reward $r(\cdot)$ as an objective function: 
%
\begin{equation}\label{eq:Obj_Func_Max}
	J(b_k,a_{k:k+L}) = \mathbb{E}_{\mathcal{Z}_{k+1:k+L}}(\sum_{i=0}^L r(b_{k+i}(\mathcal{Z}_{k+1:k+i})), a_{k+i}),
\end{equation} 
where $r(\cdot)$ is a belief-dependent reward function, and $L$ is the planning horizon. This formulation can be extended to policies as well.

The above equation can also be also written in a recursive form as in, %such that $J(b_k,a_{k:k+L})$ is a function of the future objective function $J(b_k,a_{k:k+L})$:
\begin{equation}\label{eq:Obj_Func_Recursive}
\begin{split}
	J(b_k,a_{k:k+L}) =& \int_{\mathcal{Z}_{k+1}} \prob{\mathcal{Z}_{k+1}|\his_k,a_k} \cdot \\ 
	& \cdot J(b_{k+1},a_{k+1:k+L})  d\mathcal{Z}_{k+1},
\end{split}
\end{equation}
where $b_{k+1} = b_{k+1}(\mathcal{Z}_{k+1})$. The term $\prob{\mathcal{Z}_{k+1}|\his_k,a_k}$ is the measurement likelihood of future measurement history thus far and and $a_k$, and is essential for BSP. In practice, most of the time the integral in Eq.~\eqref{eq:Obj_Func_Recursive} cannot be analytically computed, thus it is approximated in sampled form:
\begin{equation}\label{eq:Obj_Func_Recursive_Sample}
	J(b_k,a_{k:k+L}) \approx \frac{1}{N_z} \sum_{\mathcal{Z}_{k+1}} J(b_{k+1}(\mathcal{Z}_{k+1}),a_{k+1:k+L}),
\end{equation}
where $N_z$ is the number of $Z_{k+1}$ samples, and $Z_{k+1} \sim \prob{Z_{k+1}|\his_k,a_k}$.

The optimal action sequence $a^*_{k:k+L}$ is chosen such that it maximizes the objective function:
\begin{equation}
	a^*_{k:k+L} = \argmax_{a_{k:k+L}} \left( J(b_k,a_{k:k+L}) \right).
\end{equation}
To evaluate the optimal action sequence, one must consider all possible sequences (possibly via search algorithms) and select the one that produces the highest objective function.

Specifically, in this paper  we consider the belief $b_k = b[\lambda_k,\poses_k]$ for BSP, and discuss planning using various reward functions, while focusing on classifier epistemic uncertainty reward function, namely the entropy of $b[\lambda_{k+i}]$ for a future time $k+i$. Yet, first, we must address the corresponding inference problem.

%--------------------------------------------------------------------------------------
\subsection{Problem Formulation}\label{sec:Problem_Formulation}

Given geometric measurement history $\his^g_k$, an image sequence $I_{1:k}$, actions $a_{0;k-1}$, an epistemic-uncertainty-aware classifier trained on training dataset $D$ with a set $W$ of weight realizations $w \in W$, the problems of inference and planning are defined as follows:

\begin{enumerate}
	
	\item \emph{Inference:} Infer the posterior joint belief $b[\lambda_{k}, \poses_k]$, as defined in Eq.~\eqref{eq:Belief_Lambda_Pose_Def}.
	
	\item \emph{Planning:} Given $b[\lambda_{k}, \poses_k]$, find the future action sequence $a^*_{k:k+L}$ that maximizes $J(b[\lambda_k, \poses_k], a_{k:k+L})$ with the reward function $r(b[\lambda_k, \poses_k])$.
	
\end{enumerate}
In this paper we address the inference and planning problems in Sections \ref{sec:approach-inference} and \ref{sec:approach-planning}, respectively.%, first for a single object, and then considering multiple objects.

%In our approach, we address the inference problem first for a single object, then expand it to multiple objects. The planning problem, in a direct continuation of the inference approaches, considers multiple objects.

%\VI{[Revisit above. Consider formulating as Problem 1 and 2.]}\VT{[Rewritten, still conflicted how to address the single/multiple object formulation]}

\section{Approach Overview}\label{sec:ApproachOverview}

Two approaches are presented for solving each of the problems presented in Sec.~\ref{sec:Problem_Formulation}. The first approach is Multi-Hybrid (\mh), a particle-based approach where multiple hybrid beliefs are maintained simultaneously. The second approach is Joint Lambda Pose (\jlp), where a single continuous belief is maintained and the posterior class probabilities are states within this belief. 

The approach sections are divided to inference and planning; Starting with inference,
we introduce the viewpoint dependent classifier uncertainty model, which predicts the distribution of the classifier output and is used by both methods for inference and planning. In particular, the classifier uncertainty model is used to generate predicted measurements during planning. Then, we introduce \mh and \jlp for inference; First, for simplicity we consider the single object case, afterwards the formulation is expanded to multiple objects. The section concludes with a computation complexity analysis and comparison.

Section \ref{sec:approach-planning} addresses the planning problem; First we discuss measurement generation in general for a single object, then delve into the specifics of both \mh and \jlp of generating measurements for multiple objects. Afterwards we discuss reward functions, and specifically expand upon information-theoretic reward for $b[\lambda]$. Finally, we discuss Dirichlet distribution and LG as possible distributions of $\lambda$ when using \mh (\jlp is limited to LG).

%\VI{[consider adding a diagram that highlights the aspects you consider; maybe better also to have an approach overview section - can decide later]} \VT{[I think that the approach overview overlaps with the paragraph in the introduction that talks about, I prefer to have a diagram at least for now, and refer to it in the problem formulation section instead of open a new (and probably redundant) one]}

	\section{Epistemic Uncertainty Aware Inference}
	\label{sec:approach-inference}
	%\input{03-Approach}
	%\input{03-Approach-Concept}
	%Two approaches are presented. The first one is Multi-Hybrid, a general approach without assumptions on the classifier uncertainty model that relies on maintaining multiple hybrid beliefs simultaneously. The second approach is Joint Lambda Belief; Given some assumptions on the measurements and classifier uncertainty model, we can maintain a single continuous belief that in most cases is faster computationally.

%---------------------------------------------------------
\subsection{Viewpoint Dependent Classifier Uncertainty Model}\label{sec:VDCM}

We use a classifier uncertainty model that accounts both for the coupling between localization and classification, and epistemic model uncertainty. As an example, Fig.~\ref{fig:SUV_demo} illustrates that $\{\gamma_k\}$ measurements varies across different viewpoints, with some containing high epistemic uncertainty and some low. The model we propose learns to predict these measurements, and subsequently which viewpoints will contain high epistemic uncertainty. In contrast, previous works that used a viewpoint dependent classifier model (e.g. \cite{Tchuiev19iros,Tchuiev20ral,Teacy15aamas,Atanasov14tro}) did not consider epistemic uncertainty while learning the model.

The conditions for $\gamma_k$ being a probability vector must be considered when one requires to sample from the classifier model, thus unlike previous works \cite{Tchuiev19iros, Tchuiev20ral} we cannot use a Gaussian distributed classifier model. One possible solution is to consider the classifier model as Dirichlet distributed (see \cite{Tchuiev18ral}), but that model cannot be incorporated into a Gaussian optimization framework (e.g. iSAM2 \cite{Kaess12ijrr}) with unknown poses which are coupled with classification results. Instead, we consider the following solution: we  use a logit transformation for $\gamma$ to a vector $\loggamma \in \mathbb{R}^{m-1}$ space, such that the support of each element $(-\infty,\infty)$:
\begin{equation}\label{eq:Logit_transformation}
\loggamma \triangleq \left[ \log \left( \frac{\gamma^1}{\gamma^m} \right),
\log \left( \frac{\gamma^2}{\gamma^m} \right),...,
\log \left( \frac{\gamma^{m-1}}{\gamma^m} \right) \right]^T.
\end{equation}
Then, $\loggamma$ can be assumed Gaussian such that:
\begin{equation}\label{eq:Classifier_Uncertainty_Model}
\prob{\loggamma_k|c,x^o,x_k} = \mathcal{N}(h_c(x^o,x_k),\Sigma_c(x^o,x_k)),
\end{equation}
and as a consequence $\gamma_k$ is distributed Logistical Gaussian with parameters $\{ h_c, \Sigma_c \}$. The probability density function (PDF) of $\gamma_k$ is as follows:
\begin{equation}\label{eq:Logistical_Gaussian_PDF}
\prob{\gamma_k|c,x^o,x_k} = \frac{1}{\sqrt{|2\pi \Sigma_c|}} \cdot
{\frac{1}{\prod_{i=1}^m \gamma_k^i}} \cdot
e^{\left(-\frac{1}{2}||\loggamma_k - h_c||^2_{\Sigma_c}\right)}.
\end{equation}
In practice, a classifier provides us with a cloud $\{\gamma_k \}$, and each $\gamma_k \in \{\gamma_k \}$ is transformed to $\loggamma_k$. There are $m$ such models, one for each class. The training set consists of tuples of relative pose and $\loggamma$ point clouds such that $D_{cm} \triangleq \{ x^{rel} , \{ \loggamma \} \}$ for each class, where $x^{rel} \triangleq x^o \ominus x$ is the relative pose between object and robot; the expectation (classification scores) and covariance (epistemic uncertainty) is extracted from $\{ \loggamma \}$ and fitted as known points either in the model using e.g. Gaussian Processes or deep-learning based approaches. Real-life application may require creating $D_{cm}$ from multiple different instances of the same objects, e.g. for class "car" multiple types of cars may be used. Fig.~\ref{fig:Model_demo} illustrates the training data shown in black dots versus the trained model shown in blue. The model attempts to "predict" the epistemic uncertainty based on a given training set.

\begin{figure}[!htbp]
	
	\begin{subfigure}[b]{0.45\textwidth}
		\includegraphics[width=\textwidth]{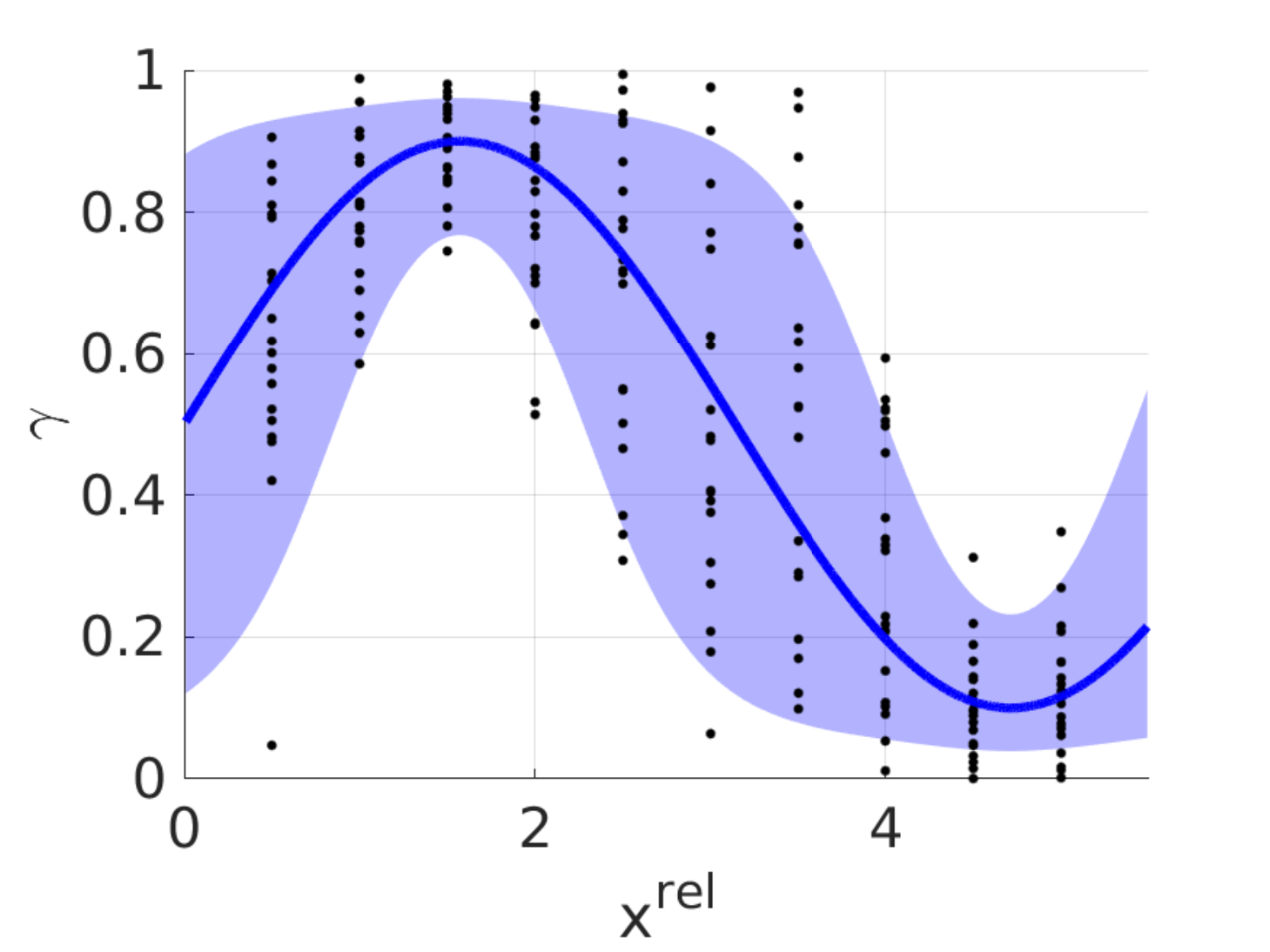}
		\caption{$\gamma$ space}\label{fig:Model_Illustration}
	\end{subfigure}
	\begin{subfigure}[b]{0.45\textwidth}
		\includegraphics[width=\textwidth]{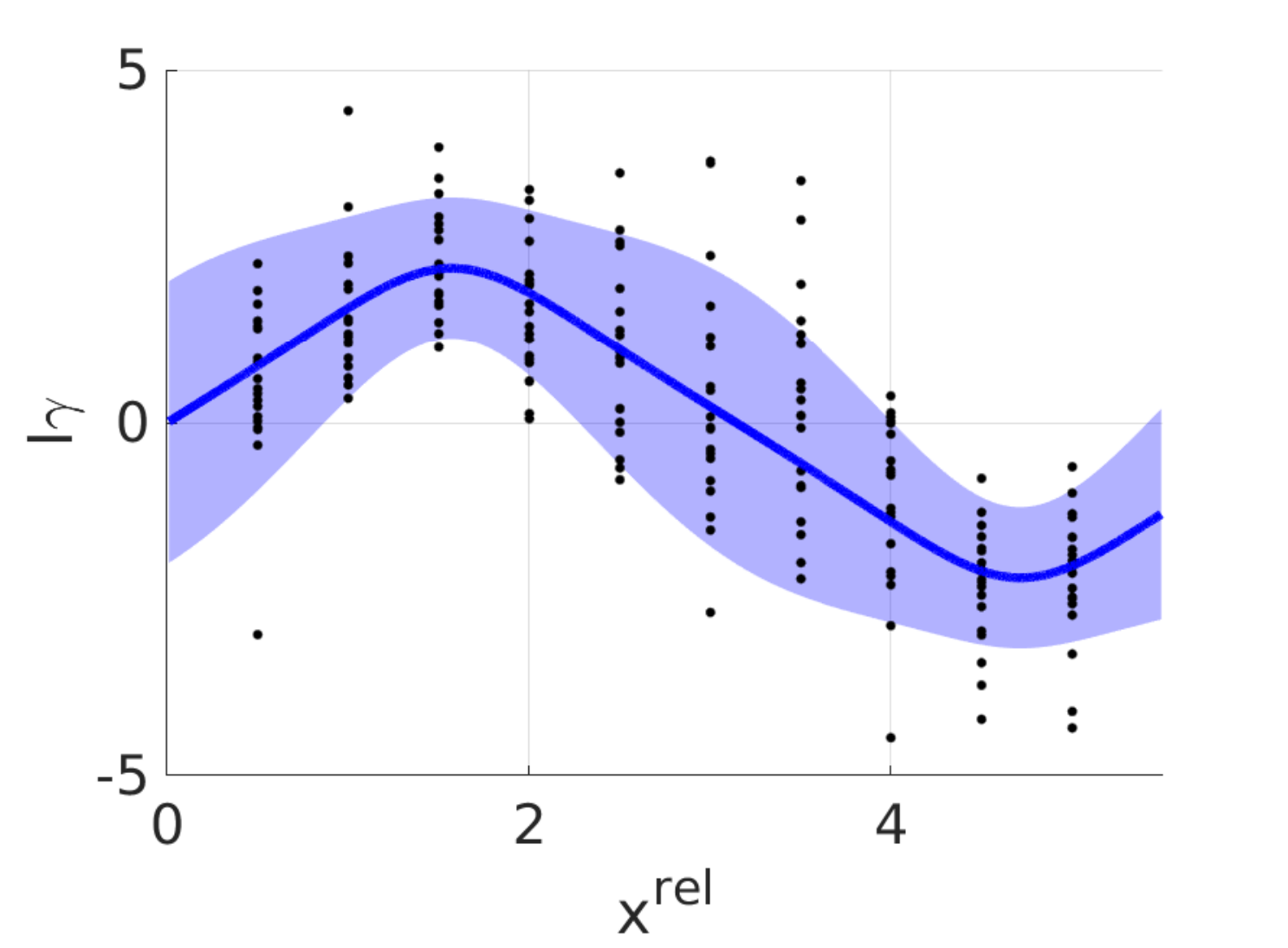}
		\caption{$\loggamma$ space}\label{fig:Model_Illustration_LG}
	\end{subfigure}
	
	\caption{Simplified illustration of the classifier uncertainty model we use in the paper. \textbf{(a)} and \textbf{(b)} represent $\gamma$ and $\loggamma$ space respectively. The black dots represent the corresponding $\gamma(x^{rel},w) \in \{ \gamma \}(x^{rel})$ and $\loggamma(x^{rel},w) \in \{ \loggamma \}(x^{rel})$. The expectation and covariance are learned in \textbf{(b)} and interpolated for new queries of $x^{rel}$, potentially returning to \textbf{(a)} via the inverse logit transformation. The expectation is represented in dark blue, while the one sigma covariance is represented in light blue.}
	
	\label{fig:Model_demo}
\end{figure}

% ---------------------------------------------------------------
\subsection{Multi-Hybrid Inference}\label{subsec:MH-Inference}

In this section we present the Multi-Hybrid (\mh) inference approach to maintain the belief from $b[\lambda_k, \poses_k]$ \eqref{eq:Belief_Lambda_Pose_Def}. With this method, we maintain $b[\lambda_k, \poses_k]$ indirectly via a set of hybrid beliefs, each for a realization of classifier weights. The posterior class probabilities from each hybrid belief together represent the posterior classifier epistemic uncertainty. From there, we can compute marginal distributions for both $\lambda_k$ and $\poses_k$ if needed, e.g.~when computing reward functions for planning (see Section \ref{sec:approach-planning}). We present this approach first when observing a single object, then we extend it to multiple objects, and finally address computational complexity aspects.

\subsubsection{Single Object}\label{sec:MH_single_lambda_inf}

The belief $b[\lambda_{k}, \poses_k]$, as defined by Eq.~\eqref{eq:Belief_Lambda_Pose_Def}, is conditioned both on geometric measurements $\his^g_k$ and raw images $I_{1:k}$, and the classifier training set $D$. As discussed in Sec.~\ref{sec:Gamma_Lambda}, an epistemic uncertainty aware classifier provides us a cloud $\{ \loggamma_{1:k} \}$, which we consider as semantic measurements. In the \mh approach, we  maintain $b[\lambda_{k}, \poses_k]$ by splitting it to components by marginalizing over object classes and classifier weight realization $w \in W$, where $W$ is a predetermined discrete set of $w$ that are used throughout the entire scenario. First, we marginalize $b[\lambda_k,\poses_k]$ over $w$:
\begin{equation}
\begin{split}
	b[\lambda_k,\poses_k] & = 
	\int_w \mathbb{P}(\poses_k,\lambda_k|I_{1:k},\his^g_k,w) \cdot
	\mathbb{P}(w|D) dw \\ &
	\approx \frac{1}{|W|} \sum_w \mathbb{P}(\poses_k,\lambda_k|I_{1:k},\his^g_k,w).
\end{split}
\end{equation}
Then, using chain rule yields%we separate between $\poses_k$ and $\lambda_k$:
\begin{equation}\label{eq:MH_dev_1}
	b[\lambda_k,\poses_k] \approx \frac{1}{|W|} \sum_w
	\prob{\poses_k|\lambda_k,w,I_{1:k},\his^g_k} \cdot
	\prob{\lambda_k|w,I_{1:k},\his^g_k}.
\end{equation}
Each term in the right-hand side of the above is addressed separately; $\prob{\poses_k|\lambda_k,w,I_{1:k},\his^g_k}$ is marginalized over $c$ and using chain-rule can be split into the following distributions:
\begin{equation}\label{eq:MH_conditioned_poses}
\begin{split}
	\prob{\poses_k|\lambda_k,w,I_{1:k},\his^g_k} = & \sum_c \prob{\poses_k|c,\lambda_k,w,I_{1:k},\his^g_k} \\ & \cdot 
	\prob{c|\lambda_k,w,I_{1:k},\his^g_k}
\end{split}
\end{equation}
$\poses_k$ is conditioned on $c$, thus $\lambda_k$ can be omitted. For $c$, given the posterior probability vector $\lambda_k$, the rest can be omitted, and $\prob{c|\lambda_k} = \lambda^c_k$ where $\lambda^c_k$ is the element $c$ of $\lambda_k$. 

$\lambda_k$ is a function of $w$, $I_{1:k}$, and $\his^g_k$; therefore, $\prob{\lambda_k|w,I_{1:k},\his^g_k}$ is a Dirac function $\delta(\cdot)$, such that 
\begin{equation}
	\prob{\lambda_k|w,I_{1:k},\his^g_k} = \delta(\lambda_k - \lambda_{k,w}).
\end{equation}

As such, Eq.~\eqref{eq:MH_dev_1} is rewritten as:
\begin{equation}\label{eq:MH_Breakup}
\begin{split}
	b[\lambda_k,\poses_k] \approx \frac{1}{|W|} \sum_c \sum_w \underbrace{\prob{\poses_k | c,\loggamma_{1:k,w},\his^g_k}}_{b^c_w[\poses_k]} \cdot
	\lambda^c_k\cdot \delta(\lambda_k - \lambda_{k,w}),
\end{split}
\end{equation}
where $b^c_w[\poses_k]$ is the continuous belief conditioned on $w$ and $c$.
Each $w \in W$ is constant throughout the scenario with the reasoning of keeping the number of particles constant, thus avoiding  managing an exponentially increasing number of components (such as in \cite{Tchuiev18ral}). This can be achieved by, e.g.,  training multiple models on the same dataset via bootstrapping, or re-using classifier weight sets created by MC-dropout. That way, Eq.~\eqref{eq:MH_Breakup} shows that maintaining $b[\lambda_k,\poses_k]$ is equivalent to maintaining $b^c_w[\poses_k]$ and $\lambda^c_k$ for all $w \in W$ and $c$.	

Each class probability $\lambda_{k,w}^c$ within the particle $\lambda_{k,w}$ is updated using Bayes rule as follows:
\begin{equation}\label{eq:MH_lambda_update_single}
\lambda_{k,w}^c = \eta \cdot \lambda_{k-1,w}^c \cdot \prob{z^g_k,\loggamma_{k,w}|c},
\end{equation}
where  $\eta$ is a normalizing constant such that $\sum_c \lambda^c_k = 1$, and does not affect inference. As our classifier model is viewpoint dependent (Eq.~\eqref{eq:Classifier_Uncertainty_Model}), we must marginalize $\prob{z^g_k,\loggamma_{k,w}|c}$ over $x^o$ and $x_k$ to fully utilize our models as follows:
\begin{equation}\label{eq:Hybrid_Disc}
\lambda_{k,w}^c \propto \lambda_{k-1,w}^c \int_{x^o,x_k} \likelihoods_{k} \cdot  b^{c-}_w[x^o,x_k] dx^o d x_k,
\end{equation} 
where $b^{c-}_w[x^o,x_k]$ is the propagated conditional continuous belief, constructed as follows, as we marginalize out all other variables from $\poses_k$ beside $x^o$ and $x_k$: 
\begin{equation}
b^{c-}_w[x^o, x_k] \triangleq \int_{\poses_k / x^o, x_k} \motionmodel_k \cdot
b^c_w[\poses_{k-1}] d(\poses_k / x^o, x_k).
\end{equation}
$b^c_w[\poses_k]$ from \eqref{eq:MH_Breakup} is incrementally updated using  standard SLAM state of the art approaches (e.g. iSAM2 \cite{Kaess12ijrr}):
\begin{equation}\label{eq:Hybrid_Cont}
b^c_{w}[\poses_{k}] \propto b^c_w[\poses_{k-1}] \cdot \motionmodel_{k} \cdot \likelihoods_{k}.
\end{equation}
Essentially, for every $w$, we maintain a hybrid belief over robot and object poses, and classes, which we define as:
\begin{equation}\label{eq:hb_def}
	hb_w[\poses_x,c] \triangleq 	b_w^c[\poses_k] \cdot \lambda^c_{k,w},
\end{equation}
and using the above definition, and considering that the Dirac function only "blocks" all $\lambda_k$ except for $\lambda_{k,w}$, we can rewrite Eq.~\eqref{eq:MH_Breakup} in terms of $hb_w[\poses_x,c]$:
\begin{equation}\label{eq:MH_hb_breakup}
	b[\lambda_k,\poses_k] \approx \frac{1}{|W|} \sum_c \sum_w hb_w[\poses_k,c] \cdot \delta(\lambda_k - \lambda_{k,w}).
\end{equation}
Practically, for every $w \in W$, we maintain $b^c_w[\poses_k]$ with the accompanying $\lambda^c_{k,w}$ for every object class realization, overall maintaining $|W|$ hybrid beliefs $hb_w[\poses_k,c]$ in parallel.

Further, one may require to infer the marginals $b[\lambda_k]$ or $\prob{\poses_{k}|I_{1:k},\his^g_{k},D}$, e.g.~to compute an appropriate reward function, as we shall see in Section \ref{sec:approach-planning}. We can describe $b[\lambda_k]$ in term of $\lambda_{k,w}$ particles by marginalizing $b[\lambda_k]$ over $w$:
\begin{equation}\label{eq:MH_Lambda_Marginal}
\begin{split}
	b[\lambda_k] & \approx \frac{1}{|W|} \sum_w \prob{\lambda_k|w,I_{1:k},\his^g_k} \\ &
	= \frac{1}{|W|} \sum_w \prob{\lambda_k|\loggamma_{1:k,w},\his^g_k} \\ &
	= \frac{1}{|W|} \sum_w \delta(\lambda_k - \lambda_{k,w}).
\end{split}
\end{equation}
On the other hand, to compute $\prob{\poses_{k}|I_{1:k},\his^g_{k},D}$, we marginalize over $w$ and $c$,  
\begin{equation}\label{eq:MH_Pose_Marginal}
\prob{\poses_k|I_{1:k},\his^g_k,D} \approx \frac{1}{|W|} \sum_w \sum_c hb_w[\poses_k,c],
\end{equation}
utilizing the already-calculated individual hybrid beliefs $hb_w[\poses_k,c]$.

While theoretically this kind of maintenance is computationally expensive, in practice many class realizations can be with probability close to zero, allowing us to prune  $b_w^c[\poses_k]$ with its conditional $\lambda_{k,w}^c$ if needed (see e.g. \cite{Singh09ieee}). In this paper we set a fixed lower limit on $\lambda^c_{k,w}$ and remove the corresponding component if the value of $\lambda^c_{k,w}$ is lower than said limit. In total, $|W|$ hybrid beliefs are maintained to infer $\lambda_{k,w}$ for each $w$.

%---------------------------------------------------------------------
\subsubsection{Multiple Objects}\label{sec:MH_Multi}

We now extend our formulation to consider the environment includes multiple objects observed by the robot.  Let us introduce some notations to support this extension. 
First, we denote variables corresponding to object $o$ with a superscript $\square^o$. At time $k$ a robot may observe a subset of $n_k$ objects within the environment, and up until time $k$, $N_k$ objects. The subset of $n_k$ objects is denoted as $O_k$. Each object is segmented from the image and the classifier outputs $\{ \gamma^o_{k,w} \}_{w \in W}$ corresponding to said object, and the set of all those clouds for all $n_k$ objects in $O_k$ is denoted as $\{ \Gamma_k \}$ with $\Gamma$ defined as a realization of $\gamma$ measurements, one per each observation. For a specific $w \in W$, we define the realization of $\gamma$ measurements as $\Gamma_{k,w} \triangleq \{ \gamma^o_{k,w} \}_{o \in O_k}$, thus $\{ \Gamma_k \} \triangleq \{ \Gamma_{k,w} \}_{w \in W}$. We define $\logGamma_k$ as the logit transformation of all $\gamma_k \in \Gamma_k$ as in Eq.~\eqref{eq:Logit_transformation}.  The set of all geometric measurements at time $k$ is denoted $Z^g_k$, the history $\his_k^g \triangleq \{a_{0:k-1}, Z^g_k\}$ includes all geometric measurements and actions up until time $k$, and subsequently $\his_k \triangleq \{I_{1:k},\his^g_k\}$ includes all measurement and action history up to time $k$.

We define the joint posterior class probability vector as: 
\begin{equation}\label{eq:MH_big_lambda_def}
	\Lambda_k \triangleq \prob{C|\logGamma_{1:k},\his^g_k},
\end{equation}
where $C \triangleq \{ c^o \}_{o \in O_{1:k}}$ is the class realization of all objects observed up to time $k$, with $c^o$ being the $o$-th object class.
In addition, we include in $\poses_k$ the poses of all the objects, such that $\poses_k \triangleq x_{0:k} \cup \{ x^o \}_{o \in O_{1:k}}$. Subsequently, the belief over $\Lambda_k$ and $\poses_k$ is:
\begin{equation}
	b[\Lambda_k,\poses_k] \triangleq \prob{\Lambda_k,\poses_k|I_{1:k},\his^g_k,D}.
\end{equation}
Observe that $\Lambda_k$ is still a probability vector, but with $m^{N_k}$ possible categories.
To illustrate this, consider an example with two objects and three candidate classes, i.e.~$O_{1:k}=\{o, o'\}$ and $m=3$. Then each category contains a class hypothesis for all object classes, e.g. $c^o = 1$, $c^{o'} = 3$. As such, there are 9 possible class realizations and therefore $\Lambda_k$ has 9 categories whose probabilities should sum to one.
That way, the number of categories in $\Lambda_k$ grows exponentially with the number of objects, potentially to intractable levels. Fortunately, this can be mitigated by pruning components with low probability, as was done in \cite{Tchuiev19iros, Tchuiev20ral}.

For every $w \in W$, updating $\Lambda_k$ is largely similar to updating $\lambda_k$ in Sec.~\ref{sec:MH_single_lambda_inf}, except for a few differences. The likelihood terms include all objects $O_k$ observed at time $k$, and the conditional probability over the poses is conditioned on class realization $C$ instead of the class of a single object,
\begin{equation}\label{eq:MH_Multi_Obj_Lambda_Upd}
	\Lambda^C_{k,w} \propto \Lambda^C_{k-1,w} \int_{x^{inv}_{k}} \mathcal{L}_{k} \cdot 
	b^{C-}_w[\poses^{inv}_{k}] dx^{inv}_{k},
\end{equation}
where $\Lambda^C_{k,w}$ denotes the posterior probability of class realization $C$ at time $k$ for weight realization $w$, and $\poses^{inv}_{k}$ represents the last robot pose and all poses of objects observed at time $k$, i.e. $\poses^{inv}_{k} \triangleq x_{k} \cup \{ x^o \}_{o \in O_{k}}$. Likelihood $\mathcal{L}_{k}$ now encompasses all the measurement likelihoods of all objects as follows:
\begin{equation}
	\mathcal{L}_{k} \triangleq \prod_{o \in O_{k,o}} \prob{\loggamma_{k}^o|C,x^o,x_{k}} \cdot 
	\prob{z^g_{k}|x^o,x_{k}}.
\end{equation}
The belief $b^{C-}_w[\poses^{inv}_{k}]$ is the propagated belief conditioned on $C$, and marginalized over the uninvolved variables such that $\poses_{k} = \poses^{inv}_{k} \cup \poses^{\neg inv}_{k}$: 
\begin{equation}\label{eq:Neginv_Def}
	b^{C-}_w[\poses^{inv}_{k}] = \int_{\poses^{\neg inv}_{k}} \motionmodel_{k} \cdot b_w^C[\poses_{k-1}]d\poses^{\neg inv}_{k}.
\end{equation}
Similarly to Sec.~\ref{sec:MH_single_lambda_inf} we can rewrite $b[\Lambda_k,\poses_k]$ as:
\begin{equation}\label{eq:MH_breakup_multi}
\begin{split}
	b[\Lambda_k,\poses_k] \approx \frac{1}{|W|} \sum_w \sum_C 	
	hb_w[\poses_k,C] \cdot \delta(\Lambda_k - \Lambda_{k,w}),
\end{split}
\end{equation}
where $hb_w[\poses_k,C]$ is the hybrid belief conditioned on $w$:
\begin{equation}\label{eq:MH_hb_def_multi}
hb_w[\poses_k,C] \triangleq b^C_w[\poses_k] \cdot \Lambda^C_{k,w},
\end{equation}
and $b^C_w[\poses_k] \triangleq \prob{\poses_k|c,\loggamma_{1:k,w},\his^g_k}$.
Similarly to Eq.~\eqref{eq:MH_hb_breakup}, maintaining $b[\Lambda_k,\poses_k]$ is equivalent to maintaining $hb_w[\poses_k,C]$ for all $w \in W$ and $C$. In case $b[\Lambda_k]$ is required, for the multi-object case Eq.~\eqref{eq:MH_Lambda_Marginal} becomes:
\begin{equation}
	b[\Lambda_k] \approx \frac{1}{|W|} \sum_w \delta(\Lambda_k = \Lambda_{k,w});
\end{equation}
Similarly; In case $\prob{\poses_k|I_{1:k},\his^g_k,D}$ is required, for the multi-object case Eq.~\eqref{eq:MH_Pose_Marginal} becomes: 
\begin{equation}\label{eq:MH_Pose_Marginal_Multi}
	\prob{\poses_k|I_{1:k},\his^g_k,D} \approx \frac{1}{|W|}. \sum_w \sum_C hb_w[\poses_k,C].
\end{equation}
In general, all $\poses_k$ and $C$ are coupled, and subsequently so do $\poses_k$ and $\Lambda_k$.
There are two possible sources of coupling: class priors that depend on other objects' classes (e.g. a computer mouse may be expected to appear next to a monitor), and the coupling between poses and classes induced by the viewpoint-dependent classifier uncertainty model \eqref{eq:Classifier_Uncertainty_Model}. 

Specifically, if the classifier model is not viewpoint dependent, i.e. $\prob{\loggamma_k|c,\poses_k} = \prob{\loggamma_k|c}$, then $\prob{\poses_k,c|\his_k} = \prob{\poses_k|\his_k} \cdot \prob{c|\his_k}$, and each one can be maintained separately. This simplified case can be represented as a single factor graph for the continuous variables, and the discrete variables are maintained via $\prob{c|\his_k} \propto \prob{c|\his_{k-1}} \cdot \prob{\loggamma_k|c}$. 

However, in our case, the viewpoint-dependent model  $\prob{\loggamma_k|c,\poses_k}$ couples between relevant continuous and discrete variables; specifically, it is represented as a factor between robot and object poses  at time steps when the object is observed.  Thus, for  $\loggamma_k$ that corresponds to a semantic observation of some object $o$ at time instant $k$, the factor is $\prob{\loggamma_k|c,x_k, x^o}$, and, according to \eqref{eq:Classifier_Uncertainty_Model}, it differs for each class realization $c$. This is represented by multiple factor graphs as illustrated in a simple example in Fig.~\ref{fig:FG_MH_Single}. Between the graphs, the topology is identical, but the factor  $\prob{\loggamma_k|c,\poses_k}$ changes according to class $c$ hypothesis. Further, each classifier weight $w\in W$ corresponds to its own instance of those factor graphs. Of course, if a given factor graph is connected, all variables in it are coupled. 

\begin{figure}
	\centering
	\begin{subfigure}[b]{0.48\textwidth}
		\centering
		\begin{tikzpicture}[scale=1.0]
		
		% Robot poses
		\node at (0,0) [circle,draw] (x0) {$x_0$};
		\node at (2,0) [circle,draw] (x1) {$x_1$};
		\node at (4,0) [circle,draw] (x2) {$x_2$};
		
		% Objects 
		\node at (3,2) [circle,draw] (o1) {$x^{o}$};
		
		% x0 prior
		\draw[black] (-1,0) -- (x0);
		\draw[fill=black] (-1,0) circle (0.1) node[above right] {$ $};
		
		%Motion models
		{$ $};
		\Edge(x0)(x1)
		\draw[fill=black] (1,0) circle (0.1) node[above right] {$ $};
		\Edge(x1)(x2)
		\draw[fill=black] (3,0) circle (0.1) node[above right] {$ $};
		
		% Connections to objects
		\path[blue, bend left] (x1) edge (o1);
		\path[black] (x1) edge (o1);
		\draw[fill=black] (2.5,1) circle (0.1) node[above right] {$ $};
		\draw[fill=blue] (2,1) circle (0.1) node[above right] {$ $};
		
		\path[blue, bend right] (x2) edge (o1);
		\path[black] (x2) edge (o1);
		\draw[fill=black] (3.5, 1) circle (0.1) node[above right] {$ $};
		\draw[fill=blue] (4, 1) circle (0.1) node[above right] {$ $};

		\end{tikzpicture}
		\caption{$c^{o}=1$}
		\label{fig:FG_MH_1}
	\end{subfigure}
	\begin{subfigure}[b]{0.48\textwidth}
		\centering
		\begin{tikzpicture}[scale=1.0]
		% Robot poses
		\node at (0,0) [circle,draw] (x0) {$x_0$};
		\node at (2,0) [circle,draw] (x1) {$x_1$};
		\node at (4,0) [circle,draw] (x2) {$x_2$};
		
		% Objects 
		\node at (3,2) [circle,draw] (o1) {$x^{o}$};
		
		% x0 prior
		\draw[black] (-1,0) -- (x0);
		\draw[fill=black] (-1,0) circle (0.1) node[above right] {$ $};
		
		%Motion models
		{$ $};
		\Edge(x0)(x1)
		\draw[fill=black] (1,0) circle (0.1) node[above right] {$ $};
		\Edge(x1)(x2)
		\draw[fill=black] (3,0) circle (0.1) node[above right] {$ $};
		
		% Connections to objects
		\path[red, bend left] (x1) edge (o1);
		\path[black] (x1) edge (o1);
		\draw[fill=black] (2.5,1) circle (0.1) node[above right] {$ $};
		\draw[fill=red] (2,1) circle (0.1) node[above right] {$ $};
		
		\path[red, bend right] (x2) edge (o1);
		\path[black] (x2) edge (o1);
		\draw[fill=black] (3.5, 1) circle (0.1) node[above right] {$ $};
		\draw[fill=red] (4, 1) circle (0.1) node[above right] {$ $};
		
		\end{tikzpicture}
		\caption{$c^{o}=2$}
		\label{fig:FG_MH_2}
	\end{subfigure}
	\caption{Factor graphs for a toy scenario where the camera observes an object for a specific $w$,there are $|W|$ such factor graph pairs. The object has two candidate classes. Each dot and line represents a separate factor. The black factors between the camera and object represent the geometric model, while the colored factors represent the classifier models, $c=1$ and $c=2$ by {\color{blue}blue} and {\color{red}red} respectively.}
	\label{fig:FG_MH_Single}
\end{figure}
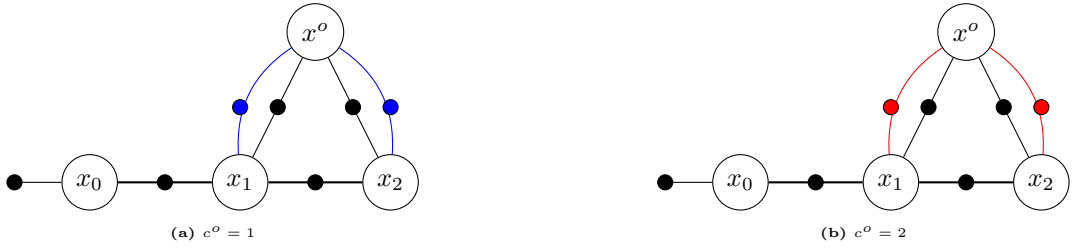

\emph{Remark}: While here we described a straightforward extension of the \mh approach to the multi-object, its worst-case computational complexity (discussed in Section \ref{sec:MH_Inf_Comp}) scales poorly with the number of objects. One could also consider maintaining a marginal distribution for each object (e.g.~$b[\lambda^o_k, x^o]$ ) instead of the joint distribution \eqref{eq:MH_breakup_multi}; yet, without introducing approximations, this would still involve inferring poses via \eqref{eq:MH_Pose_Marginal_Multi}, as all object poses and classes are dependent as discussed in Sec.~\ref{sec:MH_single_lambda_inf}. The marginal poses, i.e. Eq.~\eqref{eq:MH_Pose_Marginal_Multi}, requires the maintenance of all $hb_w[\poses_k,C]$ as maintaining $b[\Lambda_k,\poses_k]$, thus the computational time and memory complexity does not change. Computing a marginal distribution for each object is outside the scope of this paper and might be addressed in future work.

\subsubsection{Computational Complexity and Discussion}\label{sec:MH_Inf_Comp}

With $m$ candidate classes, and $n$ objects, the number of possible class realizations per $\Lambda_{k,w}$ is $m^n$, as $C$ considers all possible class realizations of all objects observed thus far, making $C$ combinatorial in nature. Inference over continuous states ($k$ camera poses and $n$ object poses), i.e. the conditional belief  $\prob{\poses_k|C,\loggamma_{1:k},\his^g_k}$, can be efficiently done using, e.g.~the state of the art iSAM2 approach \cite{Kaess12ijrr}, with a computational complexity  of $O((k+n)^{1.5})$, and at worst $O((k+n)^3)$ for loop closures. To compute an individual $\lambda_k$ particle, we must account for the attached $\prob{\poses_k|C,\loggamma_{1:k},\his^g_k}$, and thus the worst-case computational complexity is $O(m^n (k+n)^3)$. Eventually we maintain $|W|$ particles, and therefore the overall time computational complexity for $b[\lambda_k]$ inference is $O(|W|m^n (k+n)^3)$ at worst without pruning. 

The computational complexity can be further reduced by pruning e.g. low probability classes for individual particles, but as with any pruning this can induce a problem where a certain realization gets "locked" in either probability 0 or 1, rendering the possibility of probability changing for the said realization impossible. 

For memory complexity, we require the latest robot pose, and the $n$ poses and $m$ sized class probability vector for every object per $\lambda$ particle. All in all, we must maintain $O(n m^n)$ random variables in memory per particle, and $O(|W| n m^n)$ variables for $b[\lambda]$. This also can be reduced by e.g. pruning low probability class realizations or incremental inference methods.

To conclude, while accurate,, due to the combinatorial nature of $C$ which considers all possible class realizations, the need to simultaneously maintain $|W|$ hybrid beliefs, the worst-case complexity of \mh scales poorly with number of objects and candidate classes. In practice, pruning class realizations with low probability can reduce computational complexity to manageable levels. Incremental inference approaches for hybrid beliefs, inline with \cite{Hsiao19icra}, could further reduce computational complexity. These, however, are outside the scope of this paper. 

As an alternative, in the next section we propose the \jlp algorithm, which is by far computationally more efficient than \mh.

%----------------------------------------------------------------
\subsection{Joint Lambda Pose  Inference}\label{sec:JLP}

In this subsection we present an alternative approach for inference, which maintains a joint belief over $\poses_k$ and $\lambda_k$. This approach is significantly less computationally expensive than the Multi-Hybrid approach. Its accuracy depends on conditions that we discuss below. This approach is denoted as Joint Lambda Pose (\jlp). Similarly to \mh, we first consider the single object case, and then extend \jlp to the multiple object case.

To the best of our knowledge, there are no approaches that combine Gaussian distributed variables with random variables within a simplex besides sampling based methods. Thus, we cannot maintain $b[\lambda_k,\poses_k]$ as a single continuous belief, e.g. \mh requires maintaining multiple hybrid beliefs, as discussed in Sec.~\ref{subsec:MH-Inference}.%\ref{sec:MH_single_lambda_inf} and \ref{sec:MH_Multi}.

Instead, 
we define $\loglambda_k$ as the logit transformation of $\lambda_k$, and maintain the belief (considering a single object, for now):
\begin{equation}\label{eq:JLP_basic_belief}
	b[\loglambda_k,\poses_k] \triangleq \prob{\loglambda_k,\poses_k|I_{1:k},\his^g_k,D}.
\end{equation}
For a general $\lambda_k$, each $\lambda_k^c$ can be updated using Bayes rule: 
\begin{equation}
	\lambda^c_k = \eta \cdot \lambda^c_{k-1} \cdot \prob{\loggamma_k|c,x^{rel}_k}.
\end{equation}
When $\lambda_k$ is cast into logit space, the above equation transforms into the following sum, written in a vector form:
\begin{equation}\label{eq:l-lambda}
	\loglambda_k = \loglambda_{k-1} + l\mathcal{L}^s_k,
\end{equation}
where for each element in $\lambda_k$, the normalizer $\eta$ gets canceled as it is identical for all elements of $\lambda_k$,
and $l\mathcal{L}^s_k$ is defined as: 
\begin{equation}
\begin{split}
	l\mathcal{L}^s_k \triangleq & \bigg[ \log \left( \frac{\prob{\loggamma_k | c=1,x^{rel}_k}}
	{\prob{\loggamma_k | c=m,x^{rel}_k}} \right) , ... , \\
	& \log \left( \frac{\prob{\loggamma_k | c=m-1,x^{rel}_k}}
	{\prob{\loggamma_k | c=m,x^{rel}_k}} \right) \bigg]^T.
\end{split}
\end{equation}
To recursively update a Gaussian $\loglambda_k$ in closed form from a Gaussian $\loglambda_{k-1}$, $l\mathcal{L}^s_k$ needs to be Gaussian as well. We now discuss conditions for which $l\mathcal{L}^s_k$ is indeed Gaussian.

\subsubsection{Accuracy Conditions}

In this section we analyze the condition under which $l\mathcal{L}^s_k$ is accurately Gaussian distributed. This is formulated in the following Lemma:
\begin{lemma}\label{lemma:FGCondition}
	Given $m$ Gaussian distributed viewpoint-dependent classifier uncertainty models $\prob{\loggamma_k|c,x^{rel}_k}$ as in Eq.~\eqref{eq:Classifier_Uncertainty_Model}, if $\Sigma_{c=i}(x^{rel}_k) \equiv \Sigma_{c=j}(x^{rel}_k) \; \forall i,j \in [1,m]$, then $l\mathcal{L}^s_k$ is Gaussian distributed.
\end{lemma}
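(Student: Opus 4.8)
The plan is to show that under the hypothesis $\Sigma_{c=i}\equiv\Sigma_{c=j}=:\Sigma$ the vector $l\mathcal{L}^s_k$ is an \emph{affine} function of the random measurement $\loggamma_k$, and then to invoke the standard fact that an affine image of a Gaussian is again Gaussian. The whole argument hinges on a cancellation of quadratic terms that occurs precisely when the covariances coincide.

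First I would substitute the explicit Gaussian density from Eq.~\eqref{eq:Classifier_Uncertainty_Model} into each entry of $l\mathcal{L}^s_k$. For the $c$-th entry, $c\in\{1,\dots,m-1\}$, this yields
\begin{equation*}
\log\!\left(\frac{\prob{\loggamma_k|c,x^{rel}_k}}{\prob{\loggamma_k|c=m,x^{rel}_k}}\right)
= \tfrac{1}{2}\log\frac{|\Sigma_m|}{|\Sigma_c|}
- \tfrac{1}{2}(\loggamma_k - h_c)^T\Sigma_c^{-1}(\loggamma_k - h_c)
+ \tfrac{1}{2}(\loggamma_k - h_m)^T\Sigma_m^{-1}(\loggamma_k - h_m).
\end{equation*}
Next I would expand both quadratic forms and collect the purely quadratic-in-$\loggamma_k$ contribution, whose coefficient is $\tfrac{1}{2}(\Sigma_m^{-1}-\Sigma_c^{-1})$, together with the determinant constant $\tfrac{1}{2}\log(|\Sigma_m|/|\Sigma_c|)$. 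Imposing $\Sigma_c\equiv\Sigma_m=\Sigma$ makes the quadratic coefficient vanish and sends the determinant constant to zero, so the $c$-th entry collapses to $(h_c-h_m)^T\Sigma^{-1}\loggamma_k + \mathrm{const}_c$, which is affine in $\loggamma_k$.

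I would then stack the $m-1$ entries into the matrix form $l\mathcal{L}^s_k = A\,\loggamma_k + b$, where the $c$-th row of $A$ is $(h_c-h_m)^T\Sigma^{-1}$ and $b$ gathers the constants $\mathrm{const}_c$. Since $\loggamma_k$ is Gaussian by Eq.~\eqref{eq:Classifier_Uncertainty_Model} and an affine transformation maps a Gaussian to a Gaussian, $l\mathcal{L}^s_k$ is Gaussian with mean $A\,\mathbb{E}(\loggamma_k)+b$ and covariance $A\,\Sigma\,A^T$, which is the claim.

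The only place the hypothesis is genuinely used, and hence what I regard as the crux rather than a routine calculation, is the quadratic-term cancellation: for unequal covariances the coefficient $\tfrac{1}{2}(\Sigma_m^{-1}-\Sigma_c^{-1})$ is nonzero, so $l\mathcal{L}^s_k$ becomes a true quadratic form in the Gaussian $\loggamma_k$ and is therefore non-Gaussian (a generalized chi-squared type law). A secondary subtlety I would treat carefully is pinning down which variable carries the randomness — namely the measurement $\loggamma_k$ conditioned on $x^{rel}_k$, distributed per Eq.~\eqref{eq:Classifier_Uncertainty_Model} — so that the affine-image argument is unambiguous and the resulting mean and covariance of $l\mathcal{L}^s_k$ are well defined.
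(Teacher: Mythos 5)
Your proposal is correct and follows essentially the same route as the paper's proof: both expand the log-likelihood ratio, observe that the equal-covariance hypothesis kills the quadratic-in-$\loggamma_k$ term (and the determinant constant), and conclude that each entry of $l\mathcal{L}^s_k$ is affine in the Gaussian $\loggamma_k$, hence Gaussian. Your closing remark about unequal covariances producing a generalized chi-squared-type law likewise mirrors the paper's discussion immediately following its proof.
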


\begin{proof}
	In this proof, we will omit time index $k$ and 
	sometimes omit $x^{rel}_k$ from  $h_c$ and $\Sigma_c$ to reduce clutter. % , while remembering that all $h_c$ and $\Sigma_c$ are still dependent on $x^{rel}_k$. 
	We prove  by construction, with writing the PDF of the classifier uncertainty model for the $i$-th element of $l\mathcal{L}^s$. The model for class $i$ has an expectation $h_{c=i}(x^{rel})$ and a covariance matrix $\Sigma_{c=i}(x^{rel})$. Thus: 
	\begin{equation}\label{eq:Condition_Develop_1}
	\begin{split}
	l\mathcal{L}^s_{i} & = \log \left( \frac{(2\pi)^{\frac{m-1}{2}} \sqrt{|\Sigma_{c=m}}|
		e^{ -\frac{1}{2}||\loggamma - h_{c=i} ||^2_{\Sigma_{c=i}}}}
	{(2\pi)^{\frac{m-1}{2}} \sqrt{|\Sigma_{c=i}}|
		e^{ -\frac{1}{2}||\loggamma - h_{c=m} ||^2_{\Sigma_{c=m}}}} \right) \\ 
	& = - \frac{1}{2} \log(|\Sigma_{c=i}|) + \frac{1}{2} \log(|\Sigma_{c=m}|) \\
	& -\frac{1}{2}||\loggamma - h_{c=i} ||^2_{\Sigma_{c=i}} +\frac{1}{2}||\loggamma - h_{c=m} ||^2_{\Sigma_{c=m}}.
	\end{split}
	\end{equation}
	Now, applying the condition $\Sigma_{c=i} (x^{rel}) \equiv \Sigma_{c=j}(x^{rel})$, and denoting both as $\Sigma_c$ we get the following expression:
	\begin{equation}\label{eq:Likelihood_reduced_full_JLP}
	\begin{split}
	l\mathcal{L}^s_{i} = & \loggamma^T \Sigma_{c}^{-1} h_{c=i} -
	\frac{1}{2} h_{c=i}^T \Sigma_{c}^{-1} h_{c=i} \\ & -
	\loggamma^T \Sigma_{c}^{-1} h_{c=m} +
	\frac{1}{2} h_{c=m}^T \Sigma_{c}^{-1} h_{c=m}.
	\end{split}
	\end{equation}
	From the above equation, if $\loggamma$ is a multi-variate Gaussian random variable, then $l\mathcal{L}^s_{i}$ is a linear combination of Gaussian random variables, therefore Gaussian by itself. This is valid for every $i \in [1,m-1]$.
\end{proof}

In general, the classifier model covariance functions may  not be equivalent; Therefore, Eq.~\eqref{eq:Condition_Develop_1} includes a quadratic expression of $\loggamma$, making $l\mathcal{L}^s_{i}$ a mixture of Gaussian and Generalized Chi distributions. To counter this, the models' covariances must be "close" to each other to approximately describe $l\mathcal{L}^s_{i}$ as a Gaussian.

If $l\mathcal{L}^s$ is assumed Gaussian via moment matching or other methods, it will only approximate the true distribution of $l\mathcal{L}^s$ with the accuracy dependent on the "distance" between $\Sigma_{c=i}(x^{rel})$ and $\Sigma_{c=j}(x^{rel})$ for all $i,j \in [1,m]$. This distance can be represented by, for example, Forbenius Norm.

\emph{Remark:}  
%and is defined for given information matrices $\Sigma^{-1}_{c=i}$ and $\Sigma^{-1}_{c=m}$ as:
%%
%\begin{equation*}
%	f_{\Sigma^{-1}_{c=i},\Sigma^{-1}_{c=m}} = Tr \left( (\Sigma^{-1}_{c=i} - \Sigma^{-1}_{c=m}) \cdot
%	(\Sigma^{-1}_{c=i} - \Sigma^{-1}_{c=m})^T \right).
%\end{equation*}
%%
The Forbenius norm can be inserted into the loss function while training the viewpoint-dependent classifier uncertainty models \eqref{eq:Classifier_Uncertainty_Model}, thereby enforcing sufficiently close covariance functions between different models such that the approach presented in this section can be used. Our implementation utilizes this concept, as we further explain in Sec.~\ref{sec:Exp_setup}.
%The Forbenius norm is used in our implementation to train the viewpoint-dependent classifier uncertainty models \eqref{eq:Classifier_Uncertainty_Model} that are used in the experiments reported in this paper.

Having discussed conditions for $l\likelihoods^s$ to be Gaussian (accurately or approximately), in the following section we introduce a new factor, termed joint Lambda pose (\jlp) factor, which constructs $b[\loglambda_k,\poses_k]$.

\subsubsection{Joint Lambda Pose (\jlp) Factor}\label{subsec:JLP-Factor}

Assume the conditions in Lemma \ref{lemma:FGCondition} are satisfied. Considering Eq.~\eqref{eq:Likelihood_reduced_full_JLP} for all $i \in [1,m-1]$, we can describe $l\mathcal{L}^s_k$ as follows:
\begin{equation}\label{eq:phi_expression}
l\mathcal{L}^s_k = \Phi \loggamma_k - \frac{1}{2} \phi,
\end{equation} 
where the matrix $\Phi \in \mathbb{R}^{(m-1) \times (m-1)}$ and the vector $\phi \in \mathbb{R}^{m-1}$ depend on the individual classifier models \eqref{eq:Classifier_Uncertainty_Model} and $x_k^{rel}$. Using Eq.~\eqref{eq:Likelihood_reduced_full_JLP}, the matrix $\Phi$ is defined as
\begin{equation}
	\Phi \triangleq \left[ 
	\begin{array}{c}
	h^T_{c=1} \Sigma^{-1}_{c=1} - h^T_{c=m} \Sigma^{-1}_{c=m} \\
	\vdots \\
	h^T_{c=m-1} \Sigma^{-1}_{c=m-1} - h^T_{c=m} \Sigma^{-1}_{c=m}
	\end{array}
	\right],
\end{equation}
and $\phi$ is defined as
\begin{equation}
	\phi \triangleq \left[
	\begin{array}{c}
	h^T_{c=1} \Sigma^{-1}_{c=1} h_{c=1} - h^T_{c=m} \Sigma^{-1}_{c=m} h_{c=m} \\
	\vdots \\
	h^T_{c=m-1} \Sigma^{-1}_{c=m-1} h_{c=m-1} - h^T_{c=m} \Sigma^{-1}_{c=m} h_{c=m}
	\end{array}
	\right].
\end{equation}
If the conditions of Lemma \ref{lemma:FGCondition} are satisfied, we can substitute $l\likelihoods^s_k$ in Eq.~\eqref{eq:l-lambda} with the expression in Eq.~\eqref{eq:phi_expression}:
\begin{equation}
	\loglambda_k = \loglambda_{k-1} + \Phi \loggamma_k - \frac{1}{2} \phi.
\end{equation}
Now, as $\loggamma_k$ is assumed Gaussian, its distribution is defined by expectation $\mathbb{E}(\loggamma_k)$ and covariance  $\Sigma(\loggamma_k)$. Assuming a non-singular matrix $\Phi$, we define the \jlp factor as:
\begin{equation}\label{eq:JLP_Factor}
\begin{array}{c}
	\prob{\loglambda_k|\loglambda_{k-1},I_k,D,x_k^{rel}} \triangleq \\
	\mathcal{N} \left( \loglambda_{k-1} + \Phi \mathbb{E}(\loggamma_k) - \frac{1}{2} \phi,
	\Phi \Sigma(\loggamma_k) \Phi^T \right),
\end{array}
\end{equation}
As mentioned before, we utilize a classifier that outputs a set $\{ \gamma_k \}$ instead of a single $\gamma_k$. Each $\gamma_k \in \{ \gamma_k \}$ is then transformed via the logit transformation \eqref{eq:Logit_transformation} to $\loglambda_k$, thus the entire set $\{ \gamma_k \}$ is transformed to $\{ \loggamma_k \}$. From there $\mathbb{E}(\loggamma_k)$ and $\Sigma(\loggamma_k)$ are inferred, and the \jlp factor can be written as $\prob{\loglambda_k|\loglambda_{k-1},\{ \loggamma_k \}, x^{rel}_k}$. As in Sec.~\ref{sec:MH_single_lambda_inf}, $\{ \loggamma_k \}$ represents the classifier's epistemic uncertainty.

The factor \eqref{eq:JLP_Factor} is a four variable factor of $\loglambda_k$, $\loglambda_{k-1}$, $x^o$, and $x$, with the latter two used to compute $x^{rel}$ via $x^{rel} \triangleq x^o \ominus x$. The factor can be inserted into a graph structure that can be optimized using standard SLAM methods, where $\loglambda_k$ for different $k$ are separate variable nodes. This factor enables us to maintain $b[\loglambda_k,\poses_k]$ using a single continuous belief as we discuss in the next section, and in turn be faster computationally than \mh.

The term $\Phi \Sigma(\loggamma_k) \Phi^T$ is positive definite when $\Phi$ is not singular, but in practice we cannot guarantee this condition. If there is some $x^{rel}_k$ for classes $c=i$ and $c=j$ where $h_{c=i}(x^{rel}) = h_{c=j}(x^{rel})$ and $\Sigma_{c=i}(x^{rel}) = \Sigma_{c=j}(x^{rel})$, then at that point $\Phi$ is singular. This means: at that certain $x^{rel}_k$, we cannot differentiate between the two classes with the given classifier models. To keep $\Phi \Sigma(\loggamma_k) \Phi^T$ non-singular, we add to it an identity matrix multiplied by a small positive constant $\epsilon \cdot I^{(m-1) \times (m-1)}$. 

\subsubsection{Recursive Update Formulation}

In Section  \ref{subsec:JLP-Factor} we introduced a novel four variable factor\footnote{In case the object is not observed at $k-1$, instead of $\loglambda_{k-1}$ we connect the factor to the latest previous $\loglambda$.} 
$\prob{\loglambda_k|\loglambda_{k-1},I_k,D,x_k^{rel}}$, denoted as the \jlp factor. This factor allows to update $b[\loglambda_{1:k},\poses_k]$ as a single continuous belief, instead of multiple conditioned ones as with \mh. 

We may consider a smoothing formulation where we maintain the joint belief $b[\loglambda_{1:k},\poses_k]$. Using Bayes and chain rules, $b[\loglambda_{1:k},\poses_k]$ can then be updated as:
\begin{equation}\label{JLP:Smoothing}
\begin{split}
	b[\loglambda_{1:k},\poses_k] = & \eta \cdot
	\prob{\loglambda_k|\loglambda_{k-1},I_k,D,x_k^{rel}} \cdot
	\motionmodel_k \cdot \\ & \cdot
	\prob{z^g_k|x_k^{rel}} \cdot
	b[\loglambda_{1:k-1},\poses_{k-1}], 
\end{split}
\end{equation}
where $\eta$ is a normalization constant. 
In practice, previous $\loglambda_{1:k-1}$ are typically not required 
for classification inference and planning, so we can consider the belief $b[\lambda_k,\poses_k]$, without maintaining a large number of states per object.
To update this belief recursively, we must express it as a function of the prior 	$b[\loglambda_{k-1},\poses_{k-1}]$. To do so, we marginalize over $\loglambda_{k-1}$ 
and use the Bayes rule: 
\begin{equation}\label{eq:Lambda_X_inference}
\begin{split}
	b[\loglambda_k,\poses_k] \propto & \int_{\loglambda_{k-1}} 
	\prob{\loglambda_k|\loglambda_{k-1},I_k,D,x_k^{rel}} \cdot
	\motionmodel_k \cdot \\ & \cdot
	\prob{z^g_k|x_k^{rel}} \cdot
	b[\loglambda_{k-1},\poses_{k-1}] d\loglambda_{k-1}.
\end{split}
\end{equation}
Fig.~\ref{fig:Factor_JLP_Single} presents a simple example to illustrate the factor graph structure using \jlp. In this figure, we present a scenario with two time steps in which the robot observes a single object.

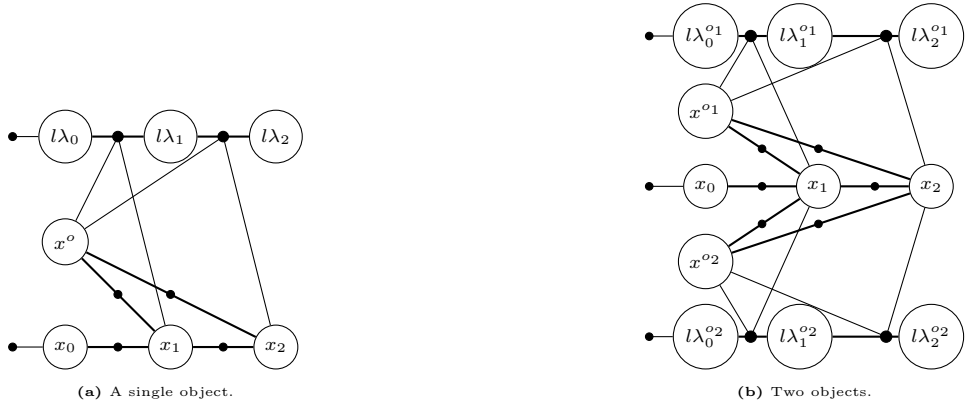
\begin{figure}
	\centering
	\scriptsize{
	\begin{subfigure}[b]{0.48\textwidth}
		\centering
		\begin{tikzpicture}[scale=0.7]
		\node at (0,1) [circle,draw] (x0) {$x_0$};
		\node at (2,1) [circle,draw] (x1) {$x_1$};
		\node at (4,1) [circle,draw] (x2) {$x_2$};
		
		% Objects 
		\node at (0,3) [circle,draw] (o1) {$x^o$};
		
		% Lambda
		\node at (0,5) [circle,draw] (l0) {$\loglambda_0$};
		\node at (2,5) [circle,draw] (l1) {$\loglambda_1$};
		\node at (4,5) [circle,draw] (l2) {$\loglambda_2$};
		
		% Lambda connections
		\Edge(l0)(l1)
		\draw[fill=black] (1,5) circle (0.1) node[above right] {$ $};
		\Edge(l1)(l2)
		\draw[fill=black] (3,5) circle (0.1) node[above right] {$ $};
		
		%Motion models
		{$ $};
		\Edge(x0)(x1)
		\draw[fill=black] (1,1) circle (0.075) node[above right] {$ $};
		\Edge(x1)(x2)
		\draw[fill=black] (3,1) circle (0.075) node[above right] {$ $};
		
		% x1 geometric model
		\Edge(x1)(o1)
		\draw[fill=black] (1,2) circle (0.075) node[above right] {$ $};
		
		% x2 geometric model
		\Edge(x2)(o1)
		\draw[fill=black] (2,2) circle (0.075) node[above right] {$ $};
		
		% lambda_1 factor
		\draw[black] (1,5) -- (x1);
		\draw[black] (1,5) -- (o1);
		
		% lambda 2 factor
		\draw[black] (3,5) -- (x2);
		\draw[black] (3,5) -- (o1);
		
		% x0 prior
		\draw[black] (-1,1) -- (x0);
		\draw[fill=black] (-1,1) circle (0.075) node[above right] {$ $};
		
		% l0 prior
		\draw[black] (-1,5) -- (l0);
		\draw[fill=black] (-1,5) circle (0.075) node[above right] {$ $};
		
		\end{tikzpicture}
		\caption{A single object.}
		\label{fig:Factor_JLP_Single}
	\end{subfigure}
	\begin{subfigure}[b]{0.48\textwidth}
		\centering
		\begin{tikzpicture}[scale=0.5]
		\node at (0,0) [circle,draw] (x0) {$x_0$};
		\node at (3,0) [circle,draw] (x1) {$x_1$};
		\node at (6,0) [circle,draw] (x2) {$x_2$};
		
		% Objects 
		\node at (0,2) [circle,draw] (o1) {$x^{o_1}$};
		\node at (0,-2) [circle,draw] (o2) {$x^{o_2}$};
		
		% Lambda o1
		\node at (0,4) [circle,draw] (l0) {$\loglambda_0^{o_1}$};
		\node at (2.5,4) [circle,draw] (l1) {$\loglambda_1^{o_1}$};
		\node at (6,4) [circle,draw] (l2) {$\loglambda_2^{o_1}$};
		
		% Lambda o2
		\node at (0,-4) [circle,draw] (l02) {$\loglambda_0^{o_2}$};
		\node at (2.5,-4) [circle,draw] (l12) {$\loglambda_1^{o_2}$};
		\node at (6,-4) [circle,draw] (l22) {$\loglambda_2^{o_2}$};
		
		%---------- Object 1
		% Lambda connections
		\Edge(l0)(l1)
		\draw[fill=black] (1.2,4) circle (0.15) node[above right] {$ $};
		\Edge(l1)(l2)
		\draw[fill=black] (4.8,4) circle (0.15) node[above right] {$ $};
		
		%Motion models
		{$ $};
		\Edge(x0)(x1)
		\draw[fill=black] (1.5,0) circle (0.1) node[above right] {$ $};
		\Edge(x1)(x2)
		\draw[fill=black] (4.5,0) circle (0.1) node[above right] {$ $};
		
		% x1 geometric model
		\Edge(x1)(o1)
		\draw[fill=black] (1.5,1) circle (0.1) node[above right] {$ $};
		
		% x2 geometric model
		\Edge(x2)(o1)
		\draw[fill=black] (3,1) circle (0.1) node[above right] {$ $};
		
		% lambda_1 factor
		\draw[black] (1.2,4) -- (x1);
		\draw[black] (1.2,4) -- (o1);
		
		% lambda 2 factor
		\draw[black] (4.8,4) -- (x2);
		\draw[black] (4.8,4) -- (o1);
		
		%-----------------Object 2
		% Lambda connections
		\Edge(l02)(l12)
		\draw[fill=black] (1.2,-4) circle (0.15) node[above right] {$ $};
		\Edge(l12)(l22)
		\draw[fill=black] (4.8,-4) circle (0.15) node[above right] {$ $};
		
		% x1 geometric model
		\Edge(x1)(o2)
		\draw[fill=black] (1.5,-1) circle (0.1) node[above right] {$ $};
		
		% x2 geometric model
		\Edge(x2)(o2)
		\draw[fill=black] (3,-1) circle (0.1) node[above right] {$ $};
		
		% lambda_1 factor
		\draw[black] (1.2,-4) -- (x1);
		\draw[black] (1.2,-4) -- (o2);
		
		% lambda 2 factor
		\draw[black] (4.8,-4) -- (x2);
		\draw[black] (4.8,-4) -- (o2);
		
		% x0 prior
		\draw[black] (-1.5,0) -- (x0);
		\draw[fill=black] (-1.5,0) circle (0.1) node[above right] {$ $};
		
		% l01 prior
		\draw[black] (-1.5,4) -- (l0);
		\draw[fill=black] (-1.5,4) circle (0.1) node[above right] {$ $};
		
		% l02 prior
		\draw[black] (-1.5,-4) -- (l02);
		\draw[fill=black] (-1.5,-4) circle (0.1) node[above right] {$ $};
		
		\end{tikzpicture}
		\caption{Two objects.}
		\label{fig:Factor_JLP_Two}
	\end{subfigure}}
	\caption{Example of a factor graph for a scenario until $k=2$, where at each time step an object \textbf{(a)} or objects \textbf{(b)} are observed. There are priors for $x_0$, and $\loglambda_0$ for every object. Between the camera poses are motion factors, connecting camera and object poses are geometric measurement factors, and between $\loglambda$'s at different time steps the 4 variable factors are connecting.}
	\label{fig:Factor_JLP}
\end{figure}

\subsubsection{Multiple Objects}\label{JLP_Multi}

The extension to multiple objects within the \jlp framework is straight-forward. Each object $o$ has its own set of $\loglambda_o$ nodes, as seen in the example in Fig.~\ref{fig:Factor_JLP_Two}. The set of all $\loglambda_k^o$ for objects observed thus far is denoted as $\bar{\loglambda}_{k} \triangleq \{ \loglambda_k^o \}_{o \in O_{1:k}}$. In contrast with $\Lambda_k$, which is a single probability vector over class realization with $m^{N_k}$ categories, $\bar{\loglambda}_{k}$ is a set of vectors with $m-1$ elements each, being the logit transformation of a probability vector of an object, to a total of $(m-1)\cdot N_k$ elements for $\bar{\loglambda}_k$. As such, the joint belief is updated in a similar manner to the single object case:
\begin{eqnarray}
	b[\bar{\loglambda}_{k},\poses_k] \!\!&= & \!\! \eta \int_{\bar{\loglambda}_{k-1}} \!
\prod_{o \in O_k}
\prob{\loglambda_{k}^o|\loglambda_{k-1}^o,\{ \loggamma_{k}^o \},x^{rel}_k} \cdot
\motionmodel_k \cdot 
\nonumber \\ 
&& \cdot
\prob{Z^g_k|\poses_k} \cdot
b[\bar{\loglambda}_{{k-1}},\poses_{k-1}] d \bar{\loglambda}_{k-1},
\end{eqnarray}
with $\eta$ being a normalization constant that does not participate in inference. 

With this formulation, the difference between maintaining $b[\Lambda_k,\poses_k]$ in \mh and $b[\bar{\loglambda}_k,\poses_k]$ must be discussed.

\subsubsection{Computation Complexity and Discussion}

\mh maintains $\Lambda_k$, which is a posterior joint probability vector for all class realizations, with  $m^{N_k}$ categories as seen in Sec.~\ref{sec:MH_Multi}. 
Thus $\Lambda_k$ grows exponentially with the number of objects observed, and considering that the inference is done for $|W|$ times, each $w$ with its own $\Lambda_k$, \mh is intractable unless pruning methods are applied. On the other hand, \jlp maintains $\bar{\loglambda}_k$ which essentially maintains a separate $\loglambda_k$ per object, resulting in size of $(m-1) \cdot N_k$, which grows linearly with the number of objects, and results in better scaling. Moreover, the inference is done only a single time, as the set $W$ only plays a role in classifier output $\gamma$.

If we consider again the example scenario with two objects and three candidate classes, $\Lambda_k$ is a probability vector with 9 categories. On the other hand, $\bar{\loglambda}_k = \{ \loglambda_k^o , \loglambda_k^{o'} \}$ where $\loglambda_k^o $ and $\loglambda_k^{o'}$, each, is a vector with two elements as they are the logit transformation of $\lambda^o_k$ and $\lambda^{o'}_k$, respectively, totaling in 4 elements for $\bar{\loglambda}_k$.

\jlp has a single continuous belief with at most $k$ pose states and $n$ object pose states. In addition each object has $\loglambda$ with $m-1$ variables. Consider pose states with $d$ variables each, at worst the total number of variables in \jlp is $dk + dn + n\cdot (m-1)$. In total, the computational time complexity is $O\left( (dk +dn + nm)^3 \right)$ at worst.  Compared to \mh, as discussed in Sec.~\ref{sec:MH_Inf_Comp} where the time computational complexity is at worst $O \left( |W|m^n(k+n)^3 \right)$, \jlp scales significantly better with the number of objects.

We can compare Fig.~\ref{fig:FG_MH} and Fig.~\ref{fig:Factor_JLP_Two} for illustration of the difference between \mh and \jlp inference. In those figures, a scenario with two objects and two candidate is presented. Fig.~\ref{fig:FG_MH} presents the factor graphs for a specific $w$, therefore in this case we have to maintain $4|W|$ factor graphs.  On the other hand, with \jlp we have to maintain a single one that also contains additional $\loglambda$ nodes.

\begin{figure}
	\centering
	\begin{subfigure}[b]{0.48\textwidth}
		\centering
		\begin{tikzpicture}[scale=0.7]
		
		% Robot poses
		\node at (0,0) [circle,draw] (x0) {$x_0$};
		\node at (2,0) [circle,draw] (x1) {$x_1$};
		\node at (4,0) [circle,draw] (x2) {$x_2$};
		
		% Objects 
		\node at (3,2) [circle,draw] (o1) {$x^{o_1}$};
		\node at (3,-2) [circle,draw] (o2) {$x^{o_2}$};
		
		% x0 prior
		\draw[black] (-1,0) -- (x0);
		\draw[fill=black] (-1,0) circle (0.1) node[above right] {$ $};
		
		%Motion models
		{$ $};
		\Edge(x0)(x1)
		\draw[fill=black] (1,0) circle (0.1) node[above right] {$ $};
		\Edge(x1)(x2)
		\draw[fill=black] (3,0) circle (0.1) node[above right] {$ $};
		
		% Connections to objects
		\path[blue, bend left] (x1) edge (o1);
		\path[black] (x1) edge (o1);
		\draw[fill=black] (2.5,1) circle (0.1) node[above right] {$ $};
		\draw[fill=blue] (2,1) circle (0.1) node[above right] {$ $};

		\path[blue, bend right] (x1) edge (o2);
		\path[black] (x1) edge (o2);
		\draw[fill=black] (2.5,-1) circle (0.1) node[above right] {$ $};
		\draw[fill=blue] (2,-1) circle (0.1) node[above right] {$ $};
		
		\path[blue, bend right] (x2) edge (o1);
		\path[black] (x2) edge (o1);
		\draw[fill=black] (3.5, 1) circle (0.1) node[above right] {$ $};
		\draw[fill=blue] (4, 1) circle (0.1) node[above right] {$ $};
		
		\path[blue, bend left] (x2) edge (o2);
		\path[black] (x2) edge (o2);
		\draw[fill=black] (3.5, -1) circle (0.1) node[above right] {$ $};
		\draw[fill=blue] (4, -1) circle (0.1) node[above right] {$ $};

		\end{tikzpicture}
		\caption{$c^{o_1}=1$, $c^{o_2}=1$}
		\label{fig:FG_MH_1_1}
	\end{subfigure}
	\begin{subfigure}[b]{0.48\textwidth}
		\centering
		\begin{tikzpicture}[scale=0.7]
				% Robot poses
		\node at (0,0) [circle,draw] (x0) {$x_0$};
		\node at (2,0) [circle,draw] (x1) {$x_1$};
		\node at (4,0) [circle,draw] (x2) {$x_2$};
		
		% Objects 
		\node at (3,2) [circle,draw] (o1) {$x^{o_1}$};
		\node at (3,-2) [circle,draw] (o2) {$x^{o_2}$};
		
		% x0 prior
		\draw[black] (-1,0) -- (x0);
		\draw[fill=black] (-1,0) circle (0.1) node[above right] {$ $};
		
		%Motion models
		{$ $};
		\Edge(x0)(x1)
		\draw[fill=black] (1,0) circle (0.1) node[above right] {$ $};
		\Edge(x1)(x2)
		\draw[fill=black] (3,0) circle (0.1) node[above right] {$ $};
		
		% Connections to objects
		\path[red, bend left] (x1) edge (o1);
		\path[black] (x1) edge (o1);
		\draw[fill=black] (2.5,1) circle (0.1) node[above right] {$ $};
		\draw[fill=red] (2,1) circle (0.1) node[above right] {$ $};
		
		\path[blue, bend right] (x1) edge (o2);
		\path[black] (x1) edge (o2);
		\draw[fill=black] (2.5,-1) circle (0.1) node[above right] {$ $};
		\draw[fill=blue] (2,-1) circle (0.1) node[above right] {$ $};
		
		\path[red, bend right] (x2) edge (o1);
		\path[black] (x2) edge (o1);
		\draw[fill=black] (3.5, 1) circle (0.1) node[above right] {$ $};
		\draw[fill=red] (4, 1) circle (0.1) node[above right] {$ $};
		
		\path[blue, bend left] (x2) edge (o2);
		\path[black] (x2) edge (o2);
		\draw[fill=black] (3.5, -1) circle (0.1) node[above right] {$ $};
		\draw[fill=blue] (4, -1) circle (0.1) node[above right] {$ $};
		
		\end{tikzpicture}
		\caption{$c^{o_1}=2$, $c^{o_2}=1$}
		\label{fig:FG_MH_2_1}
	\end{subfigure}

	\begin{subfigure}[b]{0.48\textwidth}
		\centering
		\begin{tikzpicture}[scale=0.7]
		
				% Robot poses
		\node at (0,0) [circle,draw] (x0) {$x_0$};
		\node at (2,0) [circle,draw] (x1) {$x_1$};
		\node at (4,0) [circle,draw] (x2) {$x_2$};
		
		% Objects 
		\node at (3,2) [circle,draw] (o1) {$x^{o_1}$};
		\node at (3,-2) [circle,draw] (o2) {$x^{o_2}$};
		
		% x0 prior
		\draw[black] (-1,0) -- (x0);
		\draw[fill=black] (-1,0) circle (0.1) node[above right] {$ $};
		
		%Motion models
		{$ $};
		\Edge(x0)(x1)
		\draw[fill=black] (1,0) circle (0.1) node[above right] {$ $};
		\Edge(x1)(x2)
		\draw[fill=black] (3,0) circle (0.1) node[above right] {$ $};
		
		% Connections to objects
		\path[blue, bend left] (x1) edge (o1);
		\path[black] (x1) edge (o1);
		\draw[fill=black] (2.5,1) circle (0.1) node[above right] {$ $};
		\draw[fill=blue] (2,1) circle (0.1) node[above right] {$ $};
		
		\path[red, bend right] (x1) edge (o2);
		\path[black] (x1) edge (o2);
		\draw[fill=black] (2.5,-1) circle (0.1) node[above right] {$ $};
		\draw[fill=red] (2,-1) circle (0.1) node[above right] {$ $};
		
		\path[blue, bend right] (x2) edge (o1);
		\path[black] (x2) edge (o1);
		\draw[fill=black] (3.5, 1) circle (0.1) node[above right] {$ $};
		\draw[fill=blue] (4, 1) circle (0.1) node[above right] {$ $};
		
		\path[red, bend left] (x2) edge (o2);
		\path[black] (x2) edge (o2);
		\draw[fill=black] (3.5, -1) circle (0.1) node[above right] {$ $};
		\draw[fill=red] (4, -1) circle (0.1) node[above right] {$ $};

		\end{tikzpicture}
		\caption{$c^{o_1}=1$, $c^{o_2}=2$}
		\label{fig:FG_MH_1_2}
	\end{subfigure}
	\begin{subfigure}[b]{0.48\textwidth}
		\centering
		\begin{tikzpicture}[scale=0.7]
			% Robot poses
		\node at (0,0) [circle,draw] (x0) {$x_0$};
		\node at (2,0) [circle,draw] (x1) {$x_1$};
		\node at (4,0) [circle,draw] (x2) {$x_2$};
		
		% Objects 
		\node at (3,2) [circle,draw] (o1) {$x^{o_1}$};
		\node at (3,-2) [circle,draw] (o2) {$x^{o_2}$};
		
		% x0 prior
		\draw[black] (-1,0) -- (x0);
		\draw[fill=black] (-1,0) circle (0.1) node[above right] {$ $};
		
		%Motion models
		{$ $};
		\Edge(x0)(x1)
		\draw[fill=black] (1,0) circle (0.1) node[above right] {$ $};
		\Edge(x1)(x2)
		\draw[fill=black] (3,0) circle (0.1) node[above right] {$ $};
		
		% Connections to objects
		\path[red, bend left] (x1) edge (o1);
		\path[black] (x1) edge (o1);
		\draw[fill=black] (2.5,1) circle (0.1) node[above right] {$ $};
		\draw[fill=red] (2,1) circle (0.1) node[above right] {$ $};
		
		\path[red, bend right] (x1) edge (o2);
		\path[black] (x1) edge (o2);
		\draw[fill=black] (2.5,-1) circle (0.1) node[above right] {$ $};
		\draw[fill=red] (2,-1) circle (0.1) node[above right] {$ $};
		
		\path[red, bend right] (x2) edge (o1);
		\path[black] (x2) edge (o1);
		\draw[fill=black] (3.5, 1) circle (0.1) node[above right] {$ $};
		\draw[fill=red] (4, 1) circle (0.1) node[above right] {$ $};
		
		\path[red, bend left] (x2) edge (o2);
		\path[black] (x2) edge (o2);
		\draw[fill=black] (3.5, -1) circle (0.1) node[above right] {$ $};
		\draw[fill=red] (4, -1) circle (0.1) node[above right] {$ $};
			
		\end{tikzpicture}
		\caption{$c^{o_1}=2$, $c^{o_2}=2$}
		\label{fig:FG_MH_2_2}
	\end{subfigure}
	\caption{Factor graphs used by \mh with the same scenario as in Fig.~\ref{fig:Factor_JLP_Two} for a single $w$. Each dot and line represents a separate factor. The black factors between the camera and object represent the geometric model, while the colored factors represent the classifier models, $c=1$ and $c=2$ by {\color{blue}blue} and {\color{red}red} respectively.}
	\label{fig:FG_MH}
\end{figure}
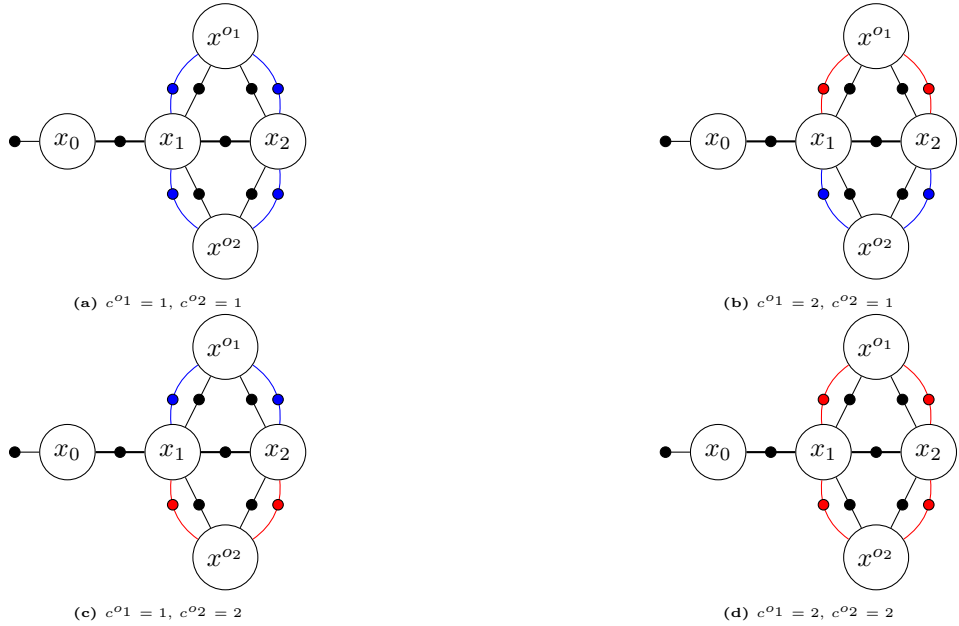

As we can see, the advantage of the \jlp approach compared to the \mh approach is that it does not require maintaining multiple hybrid beliefs. It maintains a single continuous belief that encompasses all poses and object classes while reasoning about classifier epistemic uncertainty, allowing a rich, viewpoint dependent representation of object class probabilities. 

The main disadvantage of this approach is the requirement of Lemma \ref{lemma:FGCondition} to hold for accuracy. While the requirement can be offset by enforcing additional constrains on the training of classifier uncertainty models, using the resulting models will render \jlp as approximation compared to \mh. Another potential drawback is that \jlp forces $\lambda$ to be LG distributed, which, as we will see in Sec.~\ref{sec:Dir_LG_discussion}, results in slower entropy computation. Despite of that, the advantage in computational efficiency of \jlp is significant enough to offset slower entropy computation relative to \mh, thus practically significantly more feasible.

	\section{Epistemic Uncertainty Aware Semantic Belief Space Planning}
	\label{sec:approach-planning}
	In this section we present a framework for epistemic uncertainty aware semantic BSP (\sembsp). Our framework incorporates reasoning about future posterior epistemic uncertainty within BSP; moreover, we appropriately generate future semantic and geometric observations while utilizing the coupling between $\lambda$ and $\poses$. Importantly, maintaining the corresponding future posterior belief $b[\lambda,\poses]$ within BSP allows to utilize a variety of reward functions, and in particular, information-theoretic rewards over epistemic uncertainty. As such, \sembsp provides key capabilities for reliable autonomous semantic perception in uncertain environments.

Each of the inference approaches developed in Section \ref{sec:approach-inference} has 
its own BSP counterpart. As we discuss in detail below, they are not compatible with each other, i.e.~\mh planning must be used with inference, and the same for \jlp. 

This section is structured as follows; First, in Sec.~\ref{sec:plan_meas_gen}, we discuss future measurement generation given candidate actions: For semantic measurements, we consider generating raw images, and then propose to generate semantic measurements directly from the viewpoint-dependent classifier uncertainty model from Eq.~\eqref{eq:Classifier_Uncertainty_Model}. Then, we detail the specifics of generating measurements from the model for \mh in Sec.~\ref{sec:MH_plan} and \jlp in Sec.~\ref{sec:JLP_plan}. Afterwards, we discuss possible reward functions, first mentioning rewards in the form of $r(b[\poses])$ and $r(\mathbb{E}(\lambda))$ in Sec.~\ref{sec:Non_uncertainty_rewards}, both indirectly involving reasoning about epistemic uncertainty. Further, we discuss an epistemic uncertainty information-theoretic reward $r(b[\lambda])$ in Sec.~\ref{sec:Uncertainty_rewards}, specifically the negative of differential entropy $-H(\lambda)$. We discuss computing $-H(\lambda)$ for both LG and Dirichlet distributed $\lambda$. For \mh approach, $\lambda$ can be distributed as either, but for the \jlp approach, $\lambda$ is limited to LG. Fig.~\ref{fig:Plan_Diag}  presents a diagram of all aspects considered in this section.

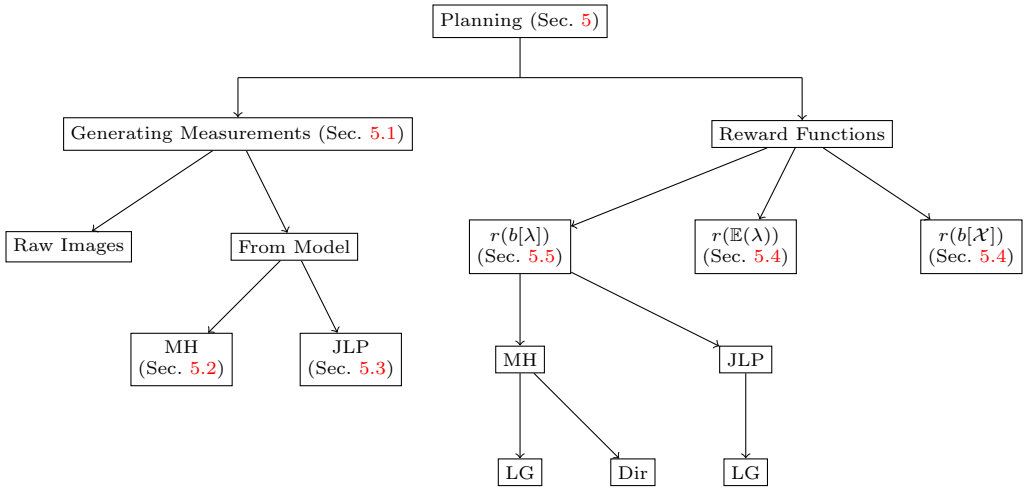
\begin{figure}[!htbp]
	\scriptsize{
	\begin{tikzpicture}[scale = 1.5]
		\node[draw] (Planning) at (0,10) {Planning (Sec.~\ref{sec:approach-planning})};
		\draw[black] (Planning) -- (0, 9.5);
		\draw[black] (-2.5,9.5) -- (2.5,9.5);
		
		% Sampling
		\node[draw] (Generating) at (-2.5,9) {Generating Measurements (Sec.~\ref{sec:plan_meas_gen})};
		\draw[->, black] (-2.5,9.5) -> (Generating);
		\node[draw] (Image) at (-4,8) {Raw Images};
		\draw[->, black] (Generating) -> (Image);
		\node[draw] (Model) at (-2,8) {From Model};
		\draw[->, black] (Generating) -> (Model);
		\node[draw, align=center] (MHGen) at (-3,7) {MH \\ (Sec.~\ref{sec:MH_plan})};
		\draw[->, black] (Model) -> (MHGen);
		\node[draw, align=center] (JLPGen) at (-1.5,7) {JLP \\ (Sec.~\ref{sec:JLP_plan})}; 
		\draw[->, black] (Model) -> (JLPGen);
		
		% Reward functions
		\node[draw] (Reward) at (2.5,9) {Reward Functions};
		\draw[->, black] (2.5,9.5) -> (Reward);
		\node[draw, align=center] (Poses) at (4,8) {$r(b[\poses])$ \\(Sec.~\ref{sec:Non_uncertainty_rewards})};
		\draw[->, black] (Reward) -> (Poses);
		\node[draw, align=center] (Class) at (2,8) {$r(\mathbb{E}(\lambda))$ \\(Sec.~\ref{sec:Non_uncertainty_rewards})};
		\draw[->, black] (Reward) -> (Class);
		\node[draw, align=center] (lambda) at (0,8) {$r(b[\lambda])$ \\(Sec.~\ref{sec:Uncertainty_rewards})};
		\draw[->, black] (Reward) -> (lambda);
		\node[draw] (MHReward) at (0,7) {MH};
		\draw[->, black] (lambda) -> (MHReward);
		\node[draw] (JLPReward) at (2,7) {JLP};
		\draw[->, black] (lambda) -> (JLPReward);
		\node[draw] (LGMH) at (0,6) {LG};
		\draw[->, black] (MHReward) -> (LGMH);
		\node[draw] (DirMH) at (1,6) {Dir};
		\draw[->, black] (MHReward) -> (DirMH);
		\node[draw] (LGJLP) at (2,6) {LG};
		\draw[->, black] (JLPReward) -> (LGJLP);
		
	\end{tikzpicture}}
	\caption{A diagram of aspects considered in Sec.~\ref{sec:approach-planning}. \emph{Dir} stands for Dirichlet distribution.}
	\label{fig:Plan_Diag}
\end{figure}

%\VI{[needs to be revisited once this section is more mature]}\VT{[...]}
%-------------------------------------------------
\subsection{Measurement Generation}\label{sec:plan_meas_gen}

As part of the objective function \eqref{eq:Obj_Func_Max} evaluation, we need to reason about future observations, both geometric and semantic. 
While geometric measurements can be sampled given through the geometric measurement model given sampled poses, the semantic measurement generation, especially when accounting for epistemic uncertainty is not immediate. For clarity, in this section we consider the single object case, while in the next sections we expand to the multiple object case in each method.

One alternative is to consider, for the $i$th look ahead step, generating $\{ \loggamma_{k+i} \}$ by first predicting raw measurements, i.e.~future images $I_{k+i}$. Given each such image, we can produce $\{ \loggamma_{k+1} \}$
by forwarding $I_{k+1}$ through a classifier 
for each $w \in W$,  similarly to passive inference.
In such a case, the objective function \eqref{eq:Obj_Func_Max} becomes:
\begin{equation}\label{eq:init_plan_split_image}
\begin{split}
	& J(b[\lambda_k,\poses_k],a_{k:k+L}) = \\ &\mathbb{E}_{I_{k+1:k+L}, z^g_{k+1:k+L}}
	(\sum_{i=1}^l r(b[\lambda_{k+i},\poses_{k+i}], a_{k+i})),
\end{split}	
\end{equation}
where
\begin{equation}\label{eq:plan_bel_image}
\begin{split}
	& b[\lambda_{k+i},\poses_{k+i}] = 
	\\ & \prob{\lambda_{k+i},\poses_{k+i}|I_{k+1:k+i},z^g_{k+1:k+i},I_{1:k},\his^g_k,D}.
\end{split}
\end{equation}
As presented in Sec.~\ref{sec:Prelim_BSP}, we have to use a generative model for generating measurements with the general form of $\prob{\mathcal{Z}_{k+1:k+L}|\his_k,a_k}$.  In this case, it takes the form of $\prob{I_{k+1:k+L},z^g_{k+1:k+L}|\his_k,a_k}$, which is a generative model for generating raw images and geometric measurements.

This model generates images from a candidate viewpoint of a scene yet to be observed, given a set of environments it was trained on. While such works do exist (e.g. \cite{Ha18arxiv}, the problem is high dimensional and feasible only in specifically trained environments.

In contrast, we propose an alternative approach that generates semantic measurements directly via a learned viewpoint dependent classifier uncertainty model 
\eqref{eq:Classifier_Uncertainty_Model}, thereby avoiding generating raw, high-dimensional images. 

Specifically, we use the LG model presented in Eq.~\eqref{eq:Logistical_Gaussian_PDF} for generating semantic measurements $z^s_{k+1:k+L}$ with the specifics discussed in Sec.~\ref{sec:MH_plan} and Sec.~\ref{sec:JLP_plan} for \mh and \jlp respectively. Thus, as alternative to Eq.~\eqref{eq:init_plan_split_image}, the objective function \eqref{eq:Obj_Func_Max} becomes:
\begin{equation}\label{eq:init_plan_split_clsmodel}
\begin{split}
	& J(b[\lambda_k,\poses_k],a_{k:k+L}) = \\ &\mathbb{E}_{z^s_{k+1:k+L}, z^g_{k+1:k+L}}
	(\sum_{i=1}^L r(b[\lambda_{k+i},\poses_{k+i}], a_{k+i})),
\end{split}	
\end{equation}
where, as opposed to Eq.~\eqref{eq:plan_bel_image},  $b[\lambda_{k+i},\poses_{k+i}]$ is conditioned on $z^s_{k+1:k+i}$, i.e.
\begin{equation}
\begin{split}
	& b[\lambda_{k+i},\poses_{k+i}] = \\ & \prob{\lambda_{k+i},\poses_{k+i}|z^s_{k+1:k+i},z^g_{k+1:k+i},I_{1:k},\his^g_k,D}.
\end{split}
\end{equation}
As both the geometric $\prob{z^g_k|x^{rel}_k}$ and classifier \eqref{eq:Classifier_Uncertainty_Model} models require $x^{rel}_{k+i}$, in addition to the class hypothesis $c$, measurement generation involves sampling both. We now discuss the specifics for each method, addressing \mh in  Sec.~\ref{sec:MH_plan} and \jlp  in Sec.~\ref{sec:JLP_plan} while expanding both to multiple objects.

%-----------------------------------------------------------
\subsection{Multi-Hybrid Planning (\mhbsp)}\label{sec:MH_plan}

In this section we discuss the specifics of generating measurements for planning using \mh.
Now considering multiple objects, we must generate future $Z^g$ and $\{\logGamma\}$, s.t. the objective function is as follows:
\begin{equation}\label{eq:MH_plan_mo}
\begin{split}
	& J(b[\Lambda_k,\poses_k],a_{k:k+L}) = \\ &\mathbb{E}_{\{\logGamma_{k+1:k+L}\}, Z^g_{k+1:k+L}}
	(\sum_{i=1}^l r(b[\Lambda_{k+i},\poses_{k+i}], a_{k+i})),
\end{split}	
\end{equation}
where $b[\Lambda_k,\poses_k]$ is obtained by \mh from Sec.~\ref{sec:MH_Multi}, and:
\begin{equation}
\begin{split}
	& b[\Lambda_{k+i},\poses_{k+i}] = \\ & \prob{\Lambda_{k+i},\poses_{k+i}|\{\logGamma_{k+1:k+i}\},Z^g_{k+1:k+i},I_{1:k},\his^g_k,D}.
\end{split}
\end{equation}
As each $\logGamma_{k+i,w}$ consists of separate $\loggamma^o_{k+i,w}$, and similarly $Z^g_{k+i}$ consists of $z^{g,o}_{k+i}$, we must first predict which objects will be observed at time $k+i$. This can be done using an object observation model (see e.g. \cite{Tchuiev19iros}) and sampled robot and object poses (either by sampling all objects or using a heuristic, see e.g.~\cite{Farhi19icra}); These objects are included in the predicted $O_{k+i}$ set, and form $\poses^{inv}_{k+1} \triangleq  x_{k+1} \cup \{ x^o \}_{o \in O_{k+1}}$.

To present that generative model, we first consider the generation of $\{\logGamma_{k+1}\}$ and $Z^g_{k+1}$ from $b[\Lambda_k, \poses_k]$ conditioned on action $a_k$. We present a sampling hierarchy that is described by the following marginalization scheme:
\begin{equation}\label{eq:Future_Likelihood}
\begin{array}{c}
	\prob{\{\logGamma_{k+1}\},Z^g_{k+1}|\his_k,a_k} = \\ 
	\sum_C \int_{\Lambda_k,\poses_{k+1}} \likelihoods_{k+1} \cdot \prob{C|\Lambda_k} \cdot
	\motionmodel_{k+1} \cdot b[\Lambda_k,\poses_k] d\Lambda_k d\poses_{k+1},
\end{array}
\end{equation}
which induces the following sampling hierarchy, for every object $o \in O_{k+1}$:
\begin{eqnarray}\label{eq:Sample_Generative_Model}
	\{ \loggamma_{k+1}^o \} &\sim& \prob{\loggamma_{k+1}^o | c^o,x^{rel,o}_{k+1}} \
	\\
	z^{g,o}_{k+1} &\sim& \prob{z^{g,o}_{k+1}|x^{rel,o}_{k+1}} 
	\\
	C &\sim& Cat(\Lambda_{k,w})
	\\
	\poses_{k+1} &\sim& \motionmodel_{k+1} \prob{\poses_{k}|\Lambda_{k,w},\loggamma_{1:k,w},\his^g_k}
	\\
	w &\sim& \prob{w|D},
\end{eqnarray}
where $x^{rel,o}_{k+1} \triangleq x^o \ominus x_{k+1}$, and is determined by $\poses^{inv}_{k+1}$.
Recall that $O_{k+1}$ must be determined by sampling $\poses_{k+1}$. First, $w$ is sampled uniformly from $|W|$. From there, as $b[\Lambda_k]$ is already represented by a set of samples $\{ \Lambda_{k,w} \}$, sampling $w$ chooses $\Lambda_{k,w}$ as well. Next, following from Eq.~\eqref{eq:MH_conditioned_poses} for the multiple object case, definition \eqref{eq:MH_hb_def_multi} for $hb_w[\poses_k,C]$, and that $\Lambda_{k,w}$ is chosen:
\begin{equation}
	\prob{\poses_{k}|\Lambda_{k,w},\loggamma_{1:k,w},\his^g_k} =
	\sum_c hb_w[\poses_k,C].
\end{equation}
Then $\prob{\poses_{k}|\Lambda_{k,w},\loggamma_{1:k,w},\his^g_k}$ is propagated via:
\begin{equation}
	\prob{\poses_{k+1}|\Lambda_{k,w},\loggamma_{1:k,w},\his^g_k,a_k} = \sum_c \motionmodel_{k+1} hb_w[\poses_k,C],
\end{equation}
and $\poses_{k+1}$ is sampled, from there we determine $O_{k+1}$.

Now for each object $o \in O_{k+1}$ we determine the appropriate $x^{rel,o}_{k+1}$, and generate its own geometric measurement $z^{g,o}_{k+1}$. Next we sample class realization $C$; As the action $a_k$ alone doesn't change $\Lambda$ from time $k$ to $k+1$ without measurements, $\Lambda_{k,w}$ is used to sample $C$. As such, $C$ is a categorical random variable with the probability vector $\Lambda_{k,w}$ as its parameters. Finally, with $c \in C$ and $x^{rel,o}_{k+1}$ we sample a set of $|W|$ vectors $\loggamma^o_{k+1}$.

Often planning algorithms use Maximum Likelihood (ML) estimation to reduce computational effort compared to sampling; Note that in our case, taking the ML estimation of $C$ can be problematic because it only considers the most likely class realization, ignoring all possible others.

For the following time steps, we use the generated $\{\logGamma_{k+1}\}$ and $Z^g_{k+1}$ to infer $b[\Lambda_{k+1}, \poses_{k+1}]$ using \mh inference from Sec.~\ref{sec:MH_Multi}. Now using action $a_{k+1}$, we can generate $\{\logGamma_{k+2}\}$ and $Z^g_{k+2}$, then $b[\Lambda_{k+2}, \poses_{k+2}]$, and continue generating measurements and inferring corresponding belief until the end of planning horizon. %we reach action $a_{k+L}$ at the end of our horizon.

Alg.~\ref{alg:SamplingStrategy} presents the \mhbsp measurement generation algorithm, where the function \emph{PredictObs} predicts which objects are observed given sampled camera and object poses. 

\begin{algorithm}[t]
	\caption{\mhbsp Measurement Generation} 
	\label{alg:SamplingStrategy} 
	\flushleft
	
	\begin{algorithmic}[1]
		
		\Require{Belief $b[\Lambda_k,\poses_k]$, action $a_k$}
		\State{$b[\Lambda_k,\poses_{k+1}] \gets \motionmodel_k \cdot b[\Lambda_k,\poses_k]$}
		\State{$\Lambda_k, \poses_{k+1} \gets \text{Sample}(b[\Lambda_k,\poses_{k+1}])$}
		\State{$O_{k+1} \gets \text{PredictObs}(\poses_{k+1})$}
		\State{$\poses^{inv}_{k+1} \gets O_{k+1}, \poses_{k+1}$}
		\State{$Z^g_{k+1} \gets \emptyset$}
		\State{$\{\logGamma_{k+1}\} \gets \emptyset$}
		%------>
		\For{$o \in O_{k+1}$}
		\State{$x^{rel,o}_{k+1} \gets x^o, x_{k+1} \in \poses^{inv}_{k+1}$}
		\State{$c^o \gets C$}
		\State{$z^{g,o}_{k+1} \gets \text{Sample}(\prob{z^{g,o}_{k+1}|x^{rel,o}_{k+1}})$}
		\State{$Z^g_{k+1} \gets Z^g_{k+1} \cup z^g_{k+1}$}
		\State{$\{ \loggamma^o_{k+1} \} \gets \emptyset$}
		%------>
		\For{$w \in W$}
		\State{$\{ \loggamma^o_{k+1} \} \gets \{ \loggamma^o_{k+1} \} \cup \text{Sample(\prob{\loggamma_{k+1}^o | c^o,x^{rel,o}_{k+1}})}$}
		\EndFor
		%------<
		\State{$\{\logGamma_{k+1}\} \gets \{\logGamma_{k+1}\} \cup \{ \loggamma^o_{k+1} \}$}
		\EndFor \\
		%------<
		\Return{$Z^g_k,\{ \logGamma_{k+1} \}$}

	\end{algorithmic}
\end{algorithm}

We summarize our approach with the \mhbsp objective function computation Alg.~\ref{alg:PC}, where  the function \emph{UpdateHB} is the hybrid belief update approach presented in Sec.~\ref{sec:MH_Multi}, and \emph{InferDist} infers $b[\Lambda_{k+1},\poses_{k+1}]$ from measurement generated in Alg.~\ref{alg:SamplingStrategy}. Alg.~\ref{alg:PC} recursively calls itself until the action set only includes one action, allowing non-myopic planning.

\begin{algorithm}[t]
	\caption{\mhbsp Objective Function} 
	\label{alg:PC} 
	\flushleft
	
	\begin{algorithmic}[1]
		\Require{\mh Belief $b[\Lambda_k,\poses_k]$, a set of actions $a_{k:k+L}$}
		%--->
		\State{$J \gets 0$}
		\For{number of samples $N_s$}
		\State{$Z^g_{k+1}, \{ \logGamma_{k+1} \} \gets$}\par \hskip\algorithmicindent {$ \text{MH Measurement Generation}(b[\Lambda_k,\poses_k], a_k) (\text{Alg.~\ref{alg:SamplingStrategy}})$}
		
		\State{$b[\Lambda_{k+1},\poses_{k+1}] \gets \text{UpdateMH}(b[\Lambda_k,\poses_k],\{\logGamma_{k+1}\},Z^g_{k+1},a_k)$}
		\State{$r(b[\Lambda_{k+1},\poses_{k+1}]) \gets \text{Reward}(b[\Lambda_{k+1},\poses_{k+1}]$}
		
		\State{$J \gets J + r(b[\Lambda_{k+1},\poses_{k+1}]) / N_s$}
		
		\If{$L \neq 0$}
		\State{$J \gets J $}\par \hskip\algorithmicindent{$ + \text{MH Objective Function}(b[\Lambda_{k+1},\poses_{k+1}], a_{k+1:k+L}) / N_s$}
		\EndIf
		
		\EndFor \\
		%---<
		\Return $J$
		
	\end{algorithmic}
\end{algorithm}

As in inference, while accurate, \mhbsp can be computationally expensive. Subsequently, in the next section we propose the expansion of \jlp for planning. As in inference, \jlp is significantly computationally faster.

%--------------------------------------------------------------------
\subsection{Joint Lambda Pose Planning (\jlpbsp)}\label{sec:JLP_plan}

In this section we present \jlpbsp, an epistemic uncertainty aware semantic BSP framework that leverages \jlp from Section \ref{sec:JLP} as the inference engine. If the assumption in Lemma.~\ref{lemma:FGCondition} is exactly or approximately satisfied, we can utilize \jlp for planning. 

Similarly to \mhbsp, we should reason about the generation of new measurement. As described in Sec.~\ref{sec:MH_plan}, \mhbsp uses the classifier uncertainty model \eqref{eq:Classifier_Uncertainty_Model} parameters $h_i(x^{rel}_{k+1})$ and $\Sigma_i(x^{rel}_{k+1})$ to generate $\{\loggamma_{k+1}\}$ given class $c=i$ and $x^{rel}_{k+1}$; On the other hand, \jlpbsp doesn't require generating $\{\loggamma_{k+1}\}$ and elegantly uses $h_i(x^{rel}_{k+1})$ and $\Sigma_i(x^{rel}_{k+1})$ as generated measurements.

With this, the objective function takes the following form:
\begin{equation}\label{eq:JLP_plan_mo}
\begin{split}
	& J(b[\bar{\loglambda}_k,\poses_k],a_{k:k+L}) = \\ &\mathbb{E}_{\mathbb{E}(\logGamma_{k+1:k+L}), \Sigma(\logGamma_{k+1:k+L}), Z^g_{k+1:k+L}}
	(\sum_{i=1}^L r(b[\bar{\loglambda}_k,\poses_{k+i}], a_{k+i}))
\end{split}	
\end{equation}
where,
\begin{equation}
\begin{split}
	& b[\bar{\loglambda}_{k+i},\poses_{k+i}] = \\ & \prob{\bar{\loglambda}_{k+i},\poses_{k+i}|\mathbb{E}(\logGamma_{k+1:k+L}), \Sigma(\logGamma_{k+1:k+L}),Z^g_{k+1:k+i},I_{1:k},\his^g_k,D},
\end{split}
\end{equation}
where $\mathbb{E}(\logGamma_k) \triangleq \{ \mathbb{E}(\loggamma^o_k) \}_{o \in O_k}$ and similarly $\Sigma(\logGamma_k) \triangleq \{ \Sigma(\loggamma^o_k) \}_{o \in O_k}$

As in Sec.~\ref{sec:MH_plan}, we consider measurement generation for time $k+1$ from time $k$. 
This time, we present a sampling hierarchy that is described by the following marginalization scheme:
\begin{equation}
\begin{array}{c}
	\prob{\mathbb{E}(\logGamma_{k+1}),\Sigma(\logGamma_{k+1}),Z^g_{k+1}|\his_k,a_k} = \\ \int_{\poses_{k+1},\bar{\loglambda}_k} \prod_{o \in O_{k+1}}
	\prob{\mathbb{E}(\loggamma^o_{k+1}), \Sigma(\loggamma^o_{k+1})|\loglambda_k^o,\his_k,a_k} \cdot \\ \cdot
	\prob{z^{g,o}_{k+1}|\poses_{k+1}} \cdot b[\bar{\loglambda}_k, \poses_{k+1}] d\poses_{k+1}.
\end{array}
\end{equation}
By using the above equation, we can write the generative model that is used to generate measurements for every $o \in O_{k+1}$. First, we need to determine the set $O_{k+1}$, and sample the hypothesized object class $c^o$ from $\loglambda^o_k \in \bar{\loglambda}_k$. We do so by sampling $\poses_{k+1}$ and $\bar{\loglambda}_k$ using $b[\bar{\loglambda}_k,\poses_k]$ as follows:
\begin{equation}
	\bar{\loglambda}_k, \poses_{k+1} \sim \motionmodel_{k+1} \cdot b[\bar{\loglambda}_k,\poses_k].
\end{equation}
Similar to \mhbsp, $\bar{\loglambda}_k$ stays the same conditioned on $a_k$, thus not propagated.
Then we determine $O_{k+1}$, $\poses^{inv}_{k+1}$ and $x^{rel,o}_{k+1}$ per object as we did in Sec.~\ref{sec:MH_plan}. From there, for $o \in O_{k+1}$ we sample $c^o$ and afterwards generate the measurements:
\begin{eqnarray}\label{eq:Gen_Model_JLP}
	\mathbb{E}(\loggamma^o_{k+1}) &=& h_c(x^{rel,o}_{k+1}) \\
	\Sigma(\loggamma^o_{k+1}) &=& \Sigma_c(x^{rel,o}_{k+1}) \\
	z^g_{k+1} &\sim& \prob{z^g_k|x^{rel,o}_{k+1}} \\
	c^o &\sim& Cat(\lambda_k^o),
\end{eqnarray}
where $h_c(x^{rel,o}_{k+1})$ and $\Sigma_c(x^{rel,o}_{k+1})$ are the Gaussian parameters of $\prob{\loggamma^o_{k+1}|c,x^{rel,o}_{k+1}}$, as in Eq.~\eqref{eq:Classifier_Uncertainty_Model}.

Alg.~\ref{alg:SamplingStrategy_JLP} presents the \jlp measurement generation algorithm, where $\mathbb{E}(\logGamma_k) \triangleq \{ \mathbb{E}(\loggamma^o_k) \}_{o \in O_k}$, and similarly $\Sigma(\logGamma_k) \triangleq \{ \Sigma(\loggamma^o_k) \}_{o \in O_k}$. 

\begin{algorithm}[t]
	\caption{\jlpbsp Measurement Generation} 
	\label{alg:SamplingStrategy_JLP} 
	\flushleft
	
	\begin{algorithmic}[1]
		
		\Require{Belief $b[\bar{\loglambda_k},\poses_k]$, action $a_k$}
		\State{$b[\bar{\loglambda_k},\poses_{k+1}] \gets \motionmodel_k \cdot b[\bar{\loglambda_k},\poses_k]$}
		\State{$\bar{\loglambda_k}, \poses_{k+1} \gets \text{Sample}(b[\Lambda_k,\poses_{k+1}])$}
		\State{$O_{k+1} \gets \text{PredictObs}(\poses_{k+1})$}
		\State{$\poses^{inv}_{k+1} \gets O_{k+1}, \poses_{k+1}$}
		\State{$Z^g_{k+1} \gets \emptyset$}
		\State{$\mathbb{E}(\logGamma_{k+1}) \gets \emptyset$}
		\State{$\Sigma(\logGamma_{k+1}) \gets \emptyset$}
		%------>
		\For{$o \in O_{k+1}$}
		\State{$x^{rel}_{k+1} \gets x^o, x_{k+1} \in \poses^{inv}_{k+1}$}
		\State{$z^{g,o}_{k+1} \gets \text{Sample}(\prob{z^{g,o}_{k+1}|x^{rel}_{k+1}})$}
		\State{$Z^g_{k+1} \gets Z^g_{k+1} \cup z^g_{k+1}$}
		\State{$c^o \gets \text{Sample}(Cat(\lambda^o_k))$}
		\State{$\mathbb{E}(\loggamma^o_{k+1}) \gets h_c(x^{rel}_{k+1})$}
		\State{$\mathbb{E}(\logGamma_{k+1}) \gets \mathbb{E}(\logGamma_{k+1}) \cup \mathbb{E}(\loggamma^o_{k+1})$}
		\State{$\Sigma(\loggamma^o_{k+1}) \gets \Sigma_c(x^{rel}_{k+1})$}
		\State{$\Sigma(\logGamma_{k+1}) \gets \Sigma(\logGamma_{k+1}) \cup \Sigma(\loggamma^o_{k+1})$}
		\EndFor \\
		%------<
		\Return{$Z^g_k,\mathbb{E}(\logGamma_{k+1}), \Sigma(\logGamma_{k+1})$}

	\end{algorithmic}
\end{algorithm}

The objective function computation is presented in Alg.~\ref{alg:PC_JLP}. \emph{UpdateJLP} refers to updating $b[\bar{\loglambda}_k, \poses_k]$ as in Sec.~\ref{sec:JLP} given generated measurements. Alg.~\ref{alg:PC_JLP} calls itself recursively until there is only one action left in the set. The algorithm is similar to Alg.~\ref{alg:PC}, except for the measurement generation and update functions which are specific for JLP.

\begin{algorithm}[t]
	\caption{\jlpbsp Objective Function} 
	\label{alg:PC_JLP} 
	\flushleft
	
	\begin{algorithmic}[1]
		\Require{\jlp Belief $b[\bar{\loglambda}_k,\poses_k]$, a set of actions $a_{k:k+l}$}
		%--->
		\State{$J \gets 0$}
		\For{number of samples $N_s$}
		\State{$Z^g_{k+1}, \mathbb{E}(\logGamma_{k+1}), \Sigma(\logGamma_{k+1}) \gets $}\par \hskip\algorithmicindent {$ \text{JLP Measurement Generation}(b[\bar{\lambda_k},\poses_k], a_{k:k+l})
			(\text{Alg.~\ref{alg:SamplingStrategy_JLP}})$}
		%--->
		\State{$b[\bar{\loglambda}_{k+1},\poses_{k+1}] \gets \text{UpdateJLP}(b[\bar{\loglambda}_k,\poses_k]),$}\par \hskip\algorithmicindent {$ Z^g_{k+1}, \mathbb{E}(\logGamma_{k+1}), \Sigma(\logGamma_{k+1})$}
		%---<
		\State{$r(b[\bar{\loglambda}_{k+1},\poses_{k+1}]) \gets \text{Reward}(b[\bar{\loglambda}_{k+1},\poses_{k+1}]$}
		
		\State{$J \gets J + r(b[\bar{\loglambda}_{k+1},\poses_{k+1}]) / N_s$}
		
		\If{$l \neq 0$}
		\State{$J \gets J $}\par \hskip\algorithmicindent{$ + \text{JLP Objective Function}(b[\bar{\loglambda}_{k+1},\poses_{k+1}], a_{k+1:k+l}) / N_s$}
		\EndIf
		
		\EndFor \\
		%---<
		\Return $J$
		
	\end{algorithmic}
\end{algorithm}

%-----------------------------------------------
\subsection{Reward Functions Over $b[\lambda,\poses]$}\label{sec:Non_uncertainty_rewards}

Predicting future $b[\lambda_{k+i},\poses_{k+i}]$ at a future time $k+i$ allows us to consider multiple reward functions, all captured by the general formulation $r(b[\lambda,\poses])$. To the best of our knowledge, we are the first to consider reasoning about \emph{future posterior epistemic uncertainty} within a BSP setting. For rewards based on the poses $r(\poses)$ e.g. distance-to-goal, or rewards based on the belief over the poses $r(b[\poses])$ e.g. information-theoretic costs, we can compute the marginal $b[\poses_{k+i}]$ as in Eq.~\eqref{eq:MH_Pose_Marginal} for \mh, or by marginalizing out $\loglambda_{k+i}$ from $b[\lambda_{k+i},\poses_{k+i}]$ for \jlp. 

In addition, we may also consider a reward over the posterior class probability $r(\prob{c|\his})$ which can be extracted by computing $\mathbb{E}(\lambda_{k+i})$ from the marginal $b[\lambda_{k+i}]$:
\begin{equation}
\begin{array}{c}
		\prob{c \mid \{\loggamma_{k+1:k+i}\},z^g_{k+1,k+i},I_{1:k},\his^g_{k},D} = \\  \int_{\lambda_{k+i}} \prob{c|\lambda_{k+i}} \cdot b[\lambda_{k+i}] d\lambda_{k+i}
	= \mathbb{E}(\lambda_{k+i}),
\end{array}
\end{equation}
therefore we can write $r(\prob{c|\his})$ as $r(\mathbb{E}(\lambda))$. An example for such reward is the minus of Shannon Entropy, such that $r(\mathbb{E}(\lambda)) = \sum_c \lambda^c \log(\lambda^c)$. This reward favors class probability vectors when one of the candidates has probability close to one, and others close to zero.

Crucially, as $b[\Lambda_{k+i},\poses_{k+i}]$ for \mhbsp and $b[\bar{\loglambda}_{k+i},\poses_{k+i}]$ for \jlpbsp both reason about epistemic uncertainty, it affects implicitly every reward. Thus, we account for future posterior epistemic uncertainty indirectly in all the cases discussed in this section.

%---------------------------------------------
\subsection{Information-Theoretic Reward Over $b[\lambda]$}\label{sec:Uncertainty_rewards}

In Sec.~\ref{sec:Non_uncertainty_rewards} we discussed reward functions in the form of $r(b[\poses])$ and $r(\prob{c|\his})$. But crucially, maintaining $b[\lambda_{k+i},\poses_{k+i}]$ opens the possibility of planning directly over $b[\lambda]$. We consider info-theoretical rewards over $\lambda$ in the form of $r(b[\lambda])$. Specifically, we consider the differential entropy of $\lambda_{k+i}$, denoted $H(\lambda_{k+i})$, and is defined as:
\begin{equation}\label{eq:H_def}
	H(\lambda_{k+1}) \triangleq - \int_{\lambda_{k+1}} b[\lambda_{k+1}] \cdot \log b[\lambda_{k+1}] d\lambda_{k+1}.
\end{equation}
The reward considered is the minus of the entropy, i.e. $r(b[\lambda]) = -H(\lambda)$, which, as we will see in Sec.~\ref{sec:LG_Dist_Lambda} and \ref{sec:Dir_Dist_Lambda}, is dependent both on $\mathbb{E}(\lambda)$ and the epistemic model uncertainty. 

A possible alternative is a reward of the following general form for $\lambda$ (see e.g. \cite{Lutjens18arxiv}):
\begin{equation}
	r(b[\lambda]) = \omega_1 \cdot f_1(\mathbb{E}(\lambda)) + \omega \cdot f_2(\Sigma(\lambda)),
\end{equation}
where $\omega_1$ and $\omega_2$ are hyperparameters, and $f_1$ and $f_2$ are general functions. Here $\lambda$ can be interchangeable with its logit transformation $\loglambda$. This reward requires the tuning of $\omega_1$ and $\omega_2$ manually, as opposed to using $r(b[\lambda]) = -H(\lambda)$ which does not require parameter tuning at all. In particular, as we will see in Sec.~\ref{sec:LG_Dist_Lambda} and Sec.~\ref{sec:Dir_Dist_Lambda}, $H(\lambda)$ addresses both $\mathbb{E}(\lambda)$ and $\Sigma(\lambda)$ simultaneously; $H(\lambda)$ diminishes (i.e. $r(b[\lambda])$ grows) when $\mathbb{E}(\lambda)$ is closer to the simplex corners, i.e. when one category has its probability close to 1 and the rest close to 0. Also, $H(\lambda)$ diminishes the smaller $\Sigma(\lambda)$ becomes, which corresponds to smaller epistemic uncertainty.

However, computing $H(\lambda_{k+i})$ requires the PDF value of $b[\lambda]$, according to Eq.~\eqref{eq:H_def}, thus requiring us to model the distribution of $\lambda_{k+i}$. This distribution can be either parametric e.g. Dirichlet or LG, which we will discuss here, or non-parametric such as Kernel Density Estimation (KDE). \mh provides us with $\{ \lambda \}$, therefore any distribution that supports probability vectors can be chosen. On the other hand, \jlp limits $\lambda$ to be LG distributed per definition. Sec.~\ref{sec:LG_Dist_Lambda} and Sec.~\ref{sec:Dir_Dist_Lambda} detail Dirichlet and Logistical Gaussian distributions for $b[\lambda_{k}]$ respectively in the context of computing entropy. Sec.~\ref{sec:Dir_LG_discussion} discusses the differences between utilizing both distributions.
To simplify notations, all of the variables in these sections are considered at the same time step, so we drop the time step index. In addition, we use the single-object notation, i.e. $\lambda$ and $c$.

%\VI{[The below specifically focuses on entropy over lambda. But later in sections C and D you again present the ]}

\subsubsection{Logistic Gaussian For $b[\lambda]$}\label{sec:LG_Dist_Lambda}

One option is to model $b[\lambda]$ as Logistic Gaussian (LG) distributed. This option is supported by both \mh and \jlp, as illustrated in Fig.~\ref{fig:Plan_Diag}. This distribution (with PDF as in Eq.~\eqref{eq:Logistical_Gaussian_PDF}) supports probability vectors with conditions presented in Sec.~\ref{sec:Gamma_Lambda} for $\gamma$, thus samples from LG are probability vectors. This distribution does not have an analytical expression for expectation and covariance, and must be computed numerically or approximated, e.g. via bounds, as we will discuss later.

To compute the parameters from a point cloud of probability vectors, e.g. $\{ \lambda \}$, we apply the logit transformation for each $\lambda \in \{ \lambda \}$, and get $\{\loglambda\}$. Then, as $\loggamma$ is modeled Gaussian the LG parameters $\mathbb{E}(\loglambda)$ and $\Sigma(\loglambda)$ are inferred.

In addition to expectation and covariance, the LG distribution does not have a closed form solution for its differential entropy. However, LG variable is a transformation of a Gaussian variable with a known expression for entropy. As such, we can express the entropy using the following lemma.

\begin{lemma}\label{lemma:LG_entropy_exact}
	Let $\lambda = [\lambda^1,...,\lambda^m]^T$ be Logistical-Gaussian distributed, and $\loglambda$ its logit transformation as in Eq.~\eqref{eq:Logit_transformation}, thus $\loglambda$ is Gaussian with parameters $\mathbb{E}(\loglambda)$ and $\Sigma(\loglambda)$. As such, the differential entropy $H(\lambda)$ is described by:
	\begin{equation}\label{eq:Entropy_LG_total}
		\begin{split}
		H(\lambda) = & H(\loglambda) + \sum_{i=1}^{m-1} \mathbb{E}[\loglambda^i] \\
		& - \int_{\loglambda} \log \left( 1 + \sum_{i=1}^{m-1} e^{\loglambda^i} \right) \prob{\loglambda} d\loglambda.
	\end{split}
	\end{equation}
\end{lemma}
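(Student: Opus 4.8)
The plan is to prove the identity by the standard change-of-variables relation between the differential entropy of a random vector and that of a smooth, invertible reparametrization of it, exploiting the fact that $\loglambda$ is exactly such a reparametrization of $\lambda$. Concretely, I would start from the definition $H(\lambda) = -\int b[\lambda]\log b[\lambda]\, d\lambda$ and substitute the Logistic-Gaussian density already written down in Eq.~\eqref{eq:Logistical_Gaussian_PDF}, namely $b[\lambda] = \prob{\loglambda}\cdot \big(\prod_{i=1}^m \lambda^i\big)^{-1}$, where $\prob{\loglambda}=\mathcal{N}(\loglambda;\mathbb{E}(\loglambda),\Sigma(\loglambda))$ is the Gaussian density of the logit vector. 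The factor $\big(\prod_{i=1}^m\lambda^i\big)^{-1}$ is precisely the Jacobian of the logit map \eqref{eq:Logit_transformation}, so this step needs no additional computation; if one prefers to derive it from scratch, the Jacobian of the inverse-logit (softmax) map has the form $\mathrm{diag}(\lambda)-\lambda\lambda^\top$ on the reduced $(m-1)$ coordinates, whose determinant equals $\prod_{i=1}^m\lambda^i$ by the matrix-determinant lemma.

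Next I would take logarithms inside the entropy integral, writing $\log b[\lambda] = \log\prob{\loglambda} - \sum_{i=1}^m \log\lambda^i$, and then change the integration variable from $\lambda$ (on the simplex) to $\loglambda\in\mathbb{R}^{m-1}$. Because $b[\lambda]\,d\lambda = \prob{\loglambda}\,d\loglambda$, the Jacobian cancels against the $\big(\prod_i\lambda^i\big)^{-1}$ factor and the integral collapses cleanly to
\begin{equation}
\begin{split}
H(\lambda) &= -\int \log\prob{\loglambda}\,\prob{\loglambda}\,d\loglambda + \int\Big(\sum_{i=1}^m \log\lambda^i\Big)\prob{\loglambda}\,d\loglambda \\
&= H(\loglambda) + \mathbb{E}\Big[\sum_{i=1}^m \log\lambda^i\Big],
\end{split}
\end{equation}
so the Gaussian entropy $H(\loglambda)$ appears immediately and all remaining content is the expectation of $\sum_i \log\lambda^i$ under the Gaussian law of $\loglambda$.

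The final step is to re-express $\sum_{i=1}^m\log\lambda^i$ in logit coordinates. Inverting \eqref{eq:Logit_transformation} gives $\log\lambda^i = \loglambda^i - \log\big(1+\sum_{j=1}^{m-1}e^{\loglambda^j}\big)$ for $i\le m-1$ and $\log\lambda^m = -\log\big(1+\sum_{j=1}^{m-1}e^{\loglambda^j}\big)$; summing over all $m$ classes then produces the linear part $\sum_{i=1}^{m-1}\loglambda^i$ together with the log-partition contribution $\log\big(1+\sum_i e^{\loglambda^i}\big)$, after which taking the Gaussian expectation yields the terms $\sum_{i=1}^{m-1}\mathbb{E}[\loglambda^i]$ and $\int\log\big(1+\sum_i e^{\loglambda^i}\big)\prob{\loglambda}\,d\loglambda$ of \eqref{eq:Entropy_LG_total}. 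I expect the only genuine subtlety to be bookkeeping: correctly treating $\lambda$ as an $(m-1)$-dimensional object on the simplex so that the Jacobian and the change of measure match, and carefully tracking the multiplicity with which the log-partition $\log\big(1+\sum_i e^{\loglambda^i}\big)$ enters from the individual $\log\lambda^i$ terms (in particular the $i=m$ term contributes an additional copy), since this is where an off-by-a-factor slip is easiest to make.
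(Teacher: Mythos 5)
Your proposal is correct and takes essentially the same route as the paper's proof in Appendix \ref{sec:Proof2}: both start from $\prob{\lambda}=\prob{\loglambda}\,|J(\lambda)|$ with $|J(\lambda)|=\left(\prod_{i=1}^m \lambda^i\right)^{-1}$, change variables so that $H(\lambda)=H(\loglambda)-\int \prob{\loglambda}\log|J(\lambda)|\,d\loglambda$, and then expand $\log|J(\lambda)|$ in logit coordinates to obtain the linear term and the log-partition integral. One remark: the multiplicity you flagged is exactly the right thing to worry about --- your derivation (and the paper's own concluding equation \eqref{eq:LG_entropy_exact_2}) yields the coefficient $m$ in front of $\int \log\left(1+\sum_{j=1}^{m-1} e^{\loglambda^j}\right)\prob{\loglambda}\,d\loglambda$, since each of the $m$ terms $\log\lambda^i$ contributes one copy of the log-partition function, so the lemma statement \eqref{eq:Entropy_LG_total}, which omits this factor, contains a typo rather than your proof containing an error.
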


The complete proof is shown at appendix \ref{sec:Proof2}.

As $\loglambda$ is Gaussian, $H(\loglambda) = 0.5 \cdot \log (2\pi e |Cov(\loglambda)|)$. The integral in Eq.~\eqref{eq:Entropy_LG_total} to the best of our knowledge does not have an analytical solution. One approach is to compute the entropy numerically from $\{\lambda\}$ that we already have, but it is computationally expensive to do so for a large number of candidate classes. Another option is to compute bounds for the entropy, which are presented in the following lemma.

\begin{lemma}\label{lemma:LG_entropy_bounds}
	Let $\lambda = [\lambda^1,...,\lambda^m]^T$ be Logistical-Gaussian distributed, and $\loglambda$ its logit transformation as in Eq.~\eqref{eq:Logit_transformation}, thus $\loglambda$ is Gaussian with parameters $\mathbb{E}(\loglambda)$ and $\Sigma(\loglambda)$. As such, an upper bound for $H(\lambda)$ is given by:
	\begin{equation}\label{eq:LG_entropy_upper_bound}
		H(\lambda) \leq H(\loglambda) + \sum_{i=1}^{m-1} \mathbb{E}(\loggamma^i) - m \cdot \max_i \{ 0 , \mathbb{E}(\loggamma^i) \},
	\end{equation}
	and similarly a lower bound is given by:
	\begin{equation}\label{eq:LG_entropy_lower_bound}
		\begin{split}
		H(\lambda) \geq & H(\loglambda) + \sum_{i=1}^{m-1} \mathbb{E}(\loggamma^i) \\ & - m \cdot \max_i \{ 0 , \mathbb{E}(\loggamma^i) \} - m \log m - \sqrt{\frac{\sigma_{ii}^{max}}{2\pi}},
	\end{split}
	\end{equation}
where $\sigma_{ii}^{max} \triangleq \max_i \Sigma_{ii}(\loglambda)$ is the largest value element in the covariance of $\loglambda$.
\end{lemma}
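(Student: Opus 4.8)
The plan is to start from the exact entropy formula established in Lemma~\ref{lemma:LG_entropy_exact}, namely
\[
H(\lambda) = H(\loglambda) + \sum_{i=1}^{m-1}\mathbb{E}[\loglambda^i] - \mathbb{E}\Big[\log\big(1 + \textstyle\sum_{i=1}^{m-1}e^{\loglambda^i}\big)\Big],
\]
and to notice that the first two terms are already in closed form, since $\loglambda$ is Gaussian so $H(\loglambda) = \tfrac12\log(2\pi e\,|\Sigma(\loglambda)|)$ and $\sum_i\mathbb{E}[\loglambda^i]$ is simply read off from the mean. Hence both bounds reduce to sandwiching the single remaining quantity, the expected log-partition $\mathcal{I} \triangleq \mathbb{E}[\log(1+\sum_i e^{\loglambda^i})]$, which carries the minus sign: an upper bound on $H(\lambda)$ follows from a lower bound on $\mathcal{I}$, and a lower bound on $H(\lambda)$ from an upper bound on $\mathcal{I}$.

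First I would apply the standard log-sum-exp inequalities. Viewing the argument as a log-sum-exp over the $m$ values $0,\loglambda^1,\dots,\loglambda^{m-1}$, one has $\max\{0,\loglambda^1,\dots,\loglambda^{m-1}\} \le \log(1+\sum_i e^{\loglambda^i}) \le \max\{0,\loglambda^1,\dots,\loglambda^{m-1}\} + \log m$, the lower side because the sum dominates its largest term and the upper side because the sum is at most $m$ times that term. Taking expectations, everything reduces to estimating $\mathbb{E}[\max\{0,\loglambda^1,\dots,\loglambda^{m-1}\}]$, the expected maximum of (correlated) Gaussian coordinates together with $0$.

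For the upper bound on $H(\lambda)$ the easy direction suffices: since $\max\{0,\cdot\}$ is convex, Jensen's inequality gives $\mathbb{E}[\max\{0,\loglambda^i\}]\ge \max\{0,\mathbb{E}\loglambda^i\}$, which reproduces the common leading term. For the lower bound I instead need an upper bound on $\mathbb{E}[\max\{0,\loglambda^i\}]$; here I would split each coordinate as $\loglambda^i = \mathbb{E}\loglambda^i + (\loglambda^i-\mathbb{E}\loglambda^i)$, pull the deterministic means through the maximum, and control the centered residual by the Gaussian one-sided moment $\mathbb{E}[(Z)^+]=\sigma/\sqrt{2\pi}$ governed by the largest marginal variance $\sigma_{ii}^{max}$. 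Collecting the $\log m$ gap from the log-sum-exp step together with the coordinate-counting factors then yields the additive slack $m\log m + \sqrt{\sigma_{ii}^{max}/2\pi}$ that separates the two bounds.

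The main obstacle is precisely this centered expected-maximum term $\mathbb{E}[\max_i(\loglambda^i-\mathbb{E}\loglambda^i)^+]$: unlike the convexity step, it has no closed form for correlated Gaussians, so the upper bound must come from a distribution-free estimate that trades tightness for a term depending only on $\sigma_{ii}^{max}$ (via the half-normal mean, after a union-type pass over the $m$ coordinates). Once that estimate is in hand, the remaining work — evaluating $H(\loglambda)$ and $\sum_i\mathbb{E}[\loglambda^i]$ and assembling the chain of inequalities — is routine.
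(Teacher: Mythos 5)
Your skeleton coincides with the paper's proof: start from the exact identity of Lemma~\ref{lemma:LG_entropy_exact}, sandwich the log-partition via $\max_i\{0,\loglambda^i\} \le \log\bigl(1+\sum_{i=1}^{m-1} e^{\loglambda^i}\bigr) \le \max_i\{0,\loglambda^i\}+\log m$, obtain the upper bound on $H(\lambda)$ from the Jensen-type inequality $\mathbb{E}[\max_i\{0,\loglambda^i\}]\ge\max_i\{0,\mathbb{E}(\loglambda^i)\}$ (the paper reaches the same conclusion by a two-case sign analysis), and obtain the lower bound by upper-bounding the expected positive maximum using the half-normal moment $\sqrt{\sigma_{ii}^{max}/2\pi}$. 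One bookkeeping issue first: the identity you quote omits the Jacobian factor $m$ multiplying the expected log-partition. The correct identity, as actually derived in the appendix (Eq.~\eqref{eq:LG_entropy_exact_2}), carries $-\,m\,\mathbb{E}\bigl[\log\bigl(1+\sum_j e^{\loglambda^j}\bigr)\bigr]$, because $\log|J(\lambda)| = -\sum_i \loglambda^i + m\log\bigl(1+\sum_j e^{\loglambda^j}\bigr)$; this factor is the source of the coefficients $m\cdot\max_i\{0,\mathbb{E}(\loggamma^i)\}$ and $m\log m$ in the statement. You inherited the omission from the main-text statement of Lemma~\ref{lemma:LG_entropy_exact} (which is inconsistent with the appendix), but your appeal to unspecified ``coordinate-counting factors'' does not repair the resulting mismatch in constants.

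The substantive gap is in your lower-bound step. Controlling $\mathbb{E}\bigl[\max_i(\loglambda^i-\mathbb{E}\loglambda^i)^+\bigr]$ by ``a union-type pass over the $m$ coordinates'' gives $\sum_i \mathbb{E}[(Z_i)^+] \le (m-1)\sqrt{\sigma_{ii}^{max}/2\pi}$ (even the sharp Gaussian-maximum estimate scales like $\sqrt{2\sigma_{ii}^{max}\log(m-1)}$), so your route proves a strictly weaker lower bound, with an extra factor of order $m$ on the half-normal term, and cannot reach the single term $\sqrt{\sigma_{ii}^{max}/2\pi}$ stated in Lemma~\ref{lemma:LG_entropy_bounds}. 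The paper gets the stated constant differently: it asserts, via a two-case argument on the signs of the means, the single-coordinate estimate $\int \max_i\{0,\loglambda^i\}\,\prob{\loglambda}\,d\loglambda \le \max_i\{0,\mathbb{E}(\loglambda^i)\} + \sqrt{\sigma_{ii}^{max}/2\pi}$, i.e.\ it treats the positive maximum as if attained by one fixed coordinate. Your instinct that a multi-coordinate argument is unavoidable is mathematically sound --- for two i.i.d.\ standard Gaussian coordinates, $\mathbb{E}[\max\{0,Z_1,Z_2\}]\ge \mathbb{E}[\max\{Z_1,Z_2\}]=1/\sqrt{\pi} > 1/\sqrt{2\pi}$, so the paper's intermediate inequality fails as stated --- but it leaves your proposal unable to close the lemma as written: either you settle for the weaker bound your union pass actually supports, or you must invoke the paper's single-coordinate step, which your own argument correctly identifies as unjustified.
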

The complete proof is shown at appendix \ref{sec:Proof3}.

One can observe from the upper bound that $H(\loglambda)$ is necessarily larger than $H(\lambda)$ as $\loggamma$ is not subjected to the probability vector constraints, thus $\mathbb{E}(\loggamma^i)$ can be negative for every $i$ and $\sum_{i=1}^{m-1} \mathbb{E}(\loggamma^i) - m \cdot \max_i \{ 0 , \mathbb{E}(\loggamma^i) \}$ is necessarily non-positive.

Fig.~\ref{fig:LG_Entropy} presents the entropy values of $b[\lambda]$ as a function of its LG parameters $\mathbb{E}(\loglambda)$ and $Var(\loglambda)$ in the case of two candidate classes. As it has a single degree of freedom, two parameters can fully describe the distribution. The figure shows that the farther $\mathbb{E}[\loglambda]$ is from zero, i.e. the closer $\mathbb{E}[\lambda^1]$ to either one or zero, the smaller the entropy gets in general. The effect is more pronounced in the case where $Var(\loglambda)$ is small. If we aim to minimize  entropy during planning, the robot will aim to reach regions where $\mathbb{E}[\lambda]$ is close to the edges of the simplex, and have smaller posterior epistemic uncertainty. 

The scenarios presented in Fig.~\ref{fig:simplex_figures} correspond to the following cases in Fig.~\ref{fig:LG_Entropy}:

\begin{itemize}
	
	\item The unknown-unknown case (Fig.~\ref{fig:Fig_Unknown_Unknown}) corresponds to $\mathbb{E}(\loglambda)$ close to 0, and large $Var(\loglambda)$ , i.e. the upper central part of Fig.~\ref{fig:LG_Entropy}.
	
	\item The known-unknown case (Fig.~\ref{fig:Fig_Known_Unknown}) corresponds to $\mathbb{E}(\loglambda)$ close to 0, and small $Var(\loglambda)$, i.e. the lower central part of Fig.~\ref{fig:LG_Entropy}.
	
	\item The known-known case (Fig.~\ref{fig:Fig_Known_Known}) corresponds to $\mathbb{E}(\loglambda)$ with large absolute value, and small $Var(\loglambda)$, i.e. the lower areas at the sides.
	
	\item The uncertain classification case (Fig.~\ref{fig:Fig_Uncertain_Edge}) corresponds to $\mathbb{E}(\loglambda)$ with large absolute value, and large $Var(\loglambda)$, i.e. the upper areas at the sides.
	
\end{itemize}

\begin{figure}[!htbp]
	\centering
	\includegraphics[width=0.5\textwidth]{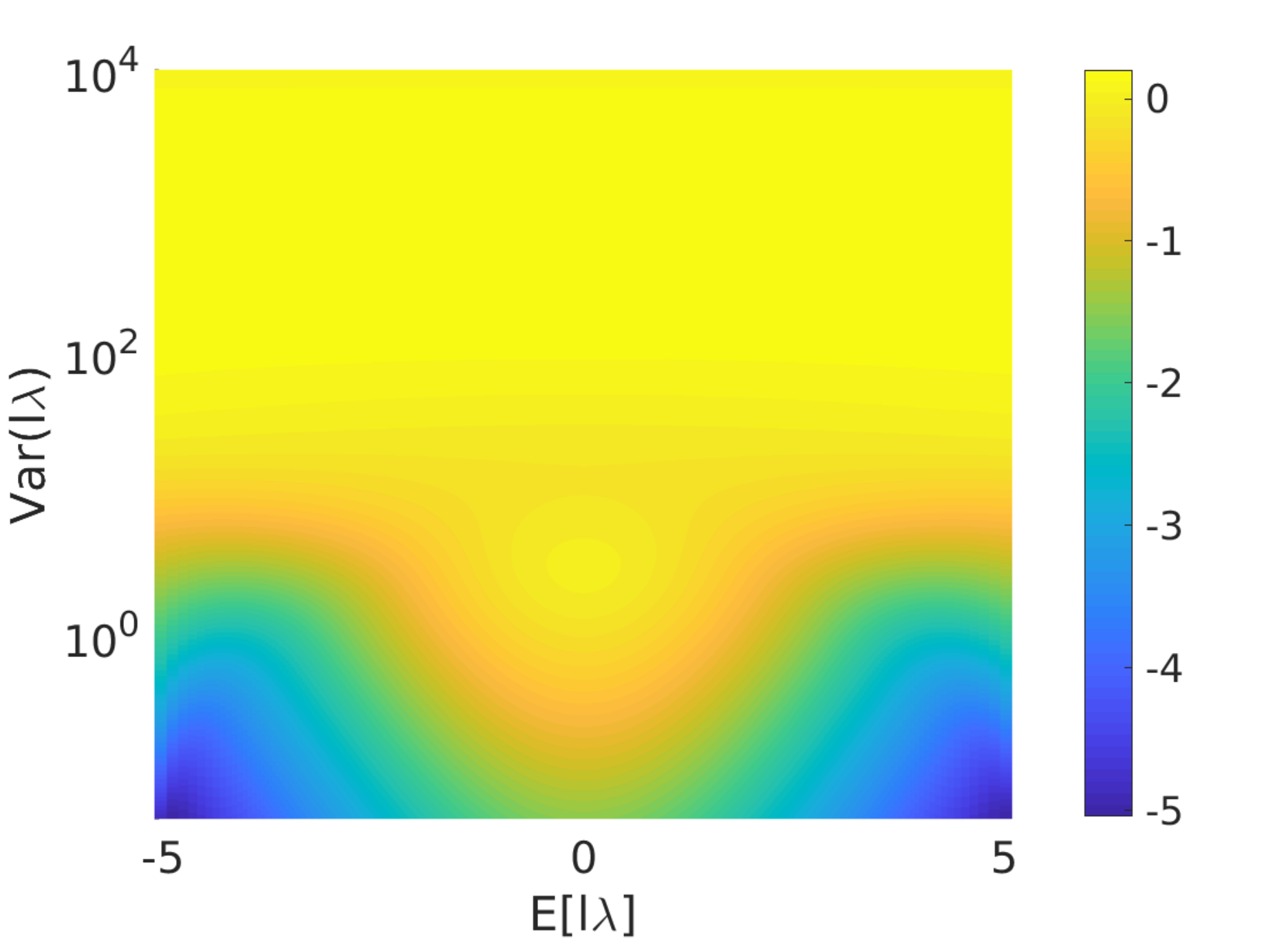}
	\caption{Entropy of a one dimensional Logistical Gaussian that corresponds to two dimensional probability vector $\gamma$. The $x$ and $y$ axis represent $\mathbb{E}(\loggamma)$ and $Var(\loggamma)$ respectively. Blue to yellow colors correspond to low to high entropy.}
	
	\label{fig:LG_Entropy}
\end{figure}

\subsubsection{Dirichlet Distribution For $b[\lambda]$}\label{sec:Dir_Dist_Lambda}

The other option assumes $b[\lambda]$ is Dirichlet distributed, which is supported only by \mh, as illustrated  in Fig.~\ref{fig:Plan_Diag}. This distribution is a natural representation of distribution over probability vectors in which samples necessarily satisfy all the conditions of probability vectors presented in Sec.~\ref{sec:Gamma_Lambda}. 

The Dirichlet distribution is parametrized by a parameter set $\alpha \triangleq \{ \alpha_1,...,\alpha_m \}$, and the PDF is:
\begin{equation}
Dir(\lambda;\alpha) = \frac{1}{B(\alpha)} \prod_{i=1}^m (\lambda^i)^{\alpha_i - 1},
\end{equation}
with $\lambda^i$ being the $i$-th class probability. $B(\alpha)$ is a normalization constant  defined as
\begin{equation}
B(\alpha) \triangleq \frac{\prod_{i=1}^m \Gamma(\alpha_i)}{\Gamma\left( \alpha_0 \right)},
\end{equation}
where $\Gamma(\cdot)$ is the Gamma function and $\alpha_0 \triangleq \sum_{i=1}^m \alpha_i$ for shorthand. 

Recall that in \mh $b[\lambda]$ is maintained via maintaining each $\lambda_w \in \{\lambda_w\}_{w \in W}$ as in Eq.~\ref{eq:MH_lambda_update_single} and Eq.~\ref{eq:MH_Multi_Obj_Lambda_Upd}.
Dirichlet's distribution parameters, given $\{ \lambda \}$, can be estimated in an iterative manner as follows \cite{Minka03}: 
\begin{equation}
\psi(\alpha_i^{\text{new}}) = \psi(\sum_{j=1}^m \alpha_j^{old}) + \log \hat{\lambda}^i,
\end{equation}
where $\log\hat{\lambda}^i \triangleq \frac{1}{|W|} \log\lambda^i_w$, and $\psi(\cdot)$ is the digamma function. The following expression shows the entropy of the Dirichlet distribution given $\alpha$ parameters:
\begin{equation}\label{eq:Dir_Entropy}
H(\lambda) = \log B(a) + (\alpha_0 - m) \psi(\alpha_0) - \sum_{i=1}^m (\alpha_i - 1) \psi(\alpha_i).
\end{equation}
The term $B(\alpha)$ needs to be numerically computed. While it is not an analytical solution, the computation is significantly faster than computing  differential entropy using samples.

This entropy takes the maximal value when $\alpha_i = 1,\; \forall i$, and at the "edges" of the distribution, where a single parameter is much larger than the others, the entropy is the lowest. If one of the parameters is zero, then $H(\lambda) = -\infty$, as $\psi(0) = -\infty$. This behavior of entropy can be observed in Fig.~\ref{fig:2D_Entropy_Graph} that shows an example for a two dimensional distribution.

The scenarios presented in Fig.~\ref{fig:simplex_figures} correspond to the following cases in Fig.~\ref{fig:2D_Entropy_Graph}:

\begin{itemize}
	
	\item The unknown-unknown case (Fig.~\ref{fig:Fig_Unknown_Unknown}) corresponds to $\alpha_1$ and $\alpha_2$ that are close to 1, i.e. the central part of Fig.~\ref{fig:2D_Entropy_Graph}.
	
	\item The known-unknown case (Fig.~\ref{fig:Fig_Known_Unknown}) corresponds to $\alpha$'s with large and similar values, i.e. the upper right part of Fig.~\ref{fig:2D_Entropy_Graph}.
	
	\item The known-known case (Fig.~\ref{fig:Fig_Known_Known}) corresponds to the case where one $\alpha$ is significantly larger than the other, and larger than 1, i.e. left or bottom areas of Fig.~\ref{fig:2D_Entropy_Graph}.
	
	\item The uncertain classification case (Fig.~\ref{fig:Fig_Uncertain_Edge}) corresponds to the case where one $\alpha$ is not significantly larger than the other, i.e. the areas between the high and low entropy in Fig.~\ref{fig:2D_Entropy_Graph}.
	
\end{itemize}

\begin{figure}[!htbp]
	\centering
	\includegraphics[width=0.5\textwidth]{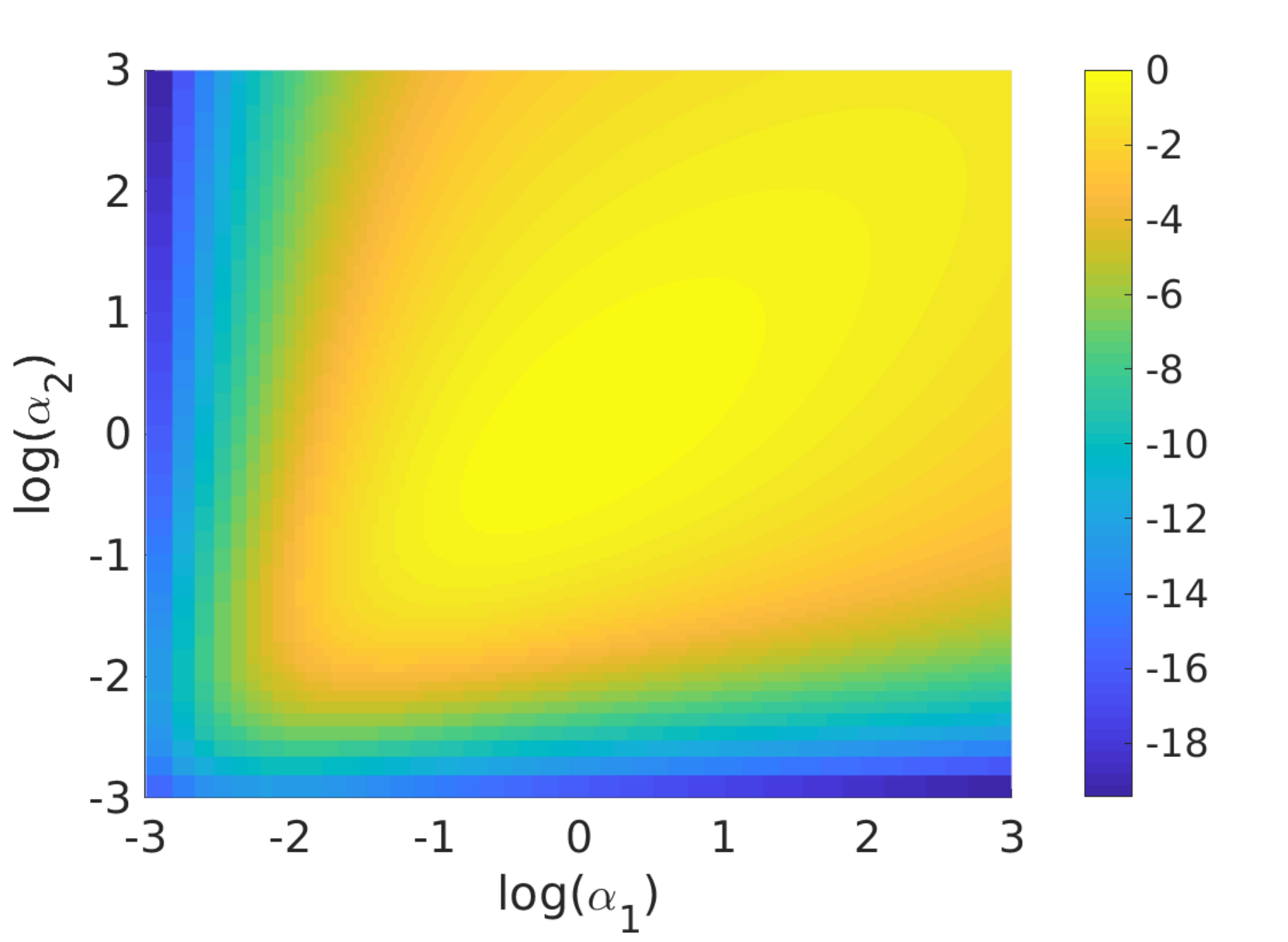}
	\caption{Entropy of a two dimensional Dirichlet distribution as a function of log of parameters. Blue to yellow colors correspond to low to high entropy values.}
	
	\label{fig:2D_Entropy_Graph}
\end{figure}

%----------------------------------------------------------
\subsubsection{Comparison Between Dirichlet and Logistic Gaussian}\label{sec:Dir_LG_discussion}
\begin{figure*}[!htbp]
	
	\begin{subfigure}[b]{0.18\textwidth}
		\includegraphics[width=\textwidth]{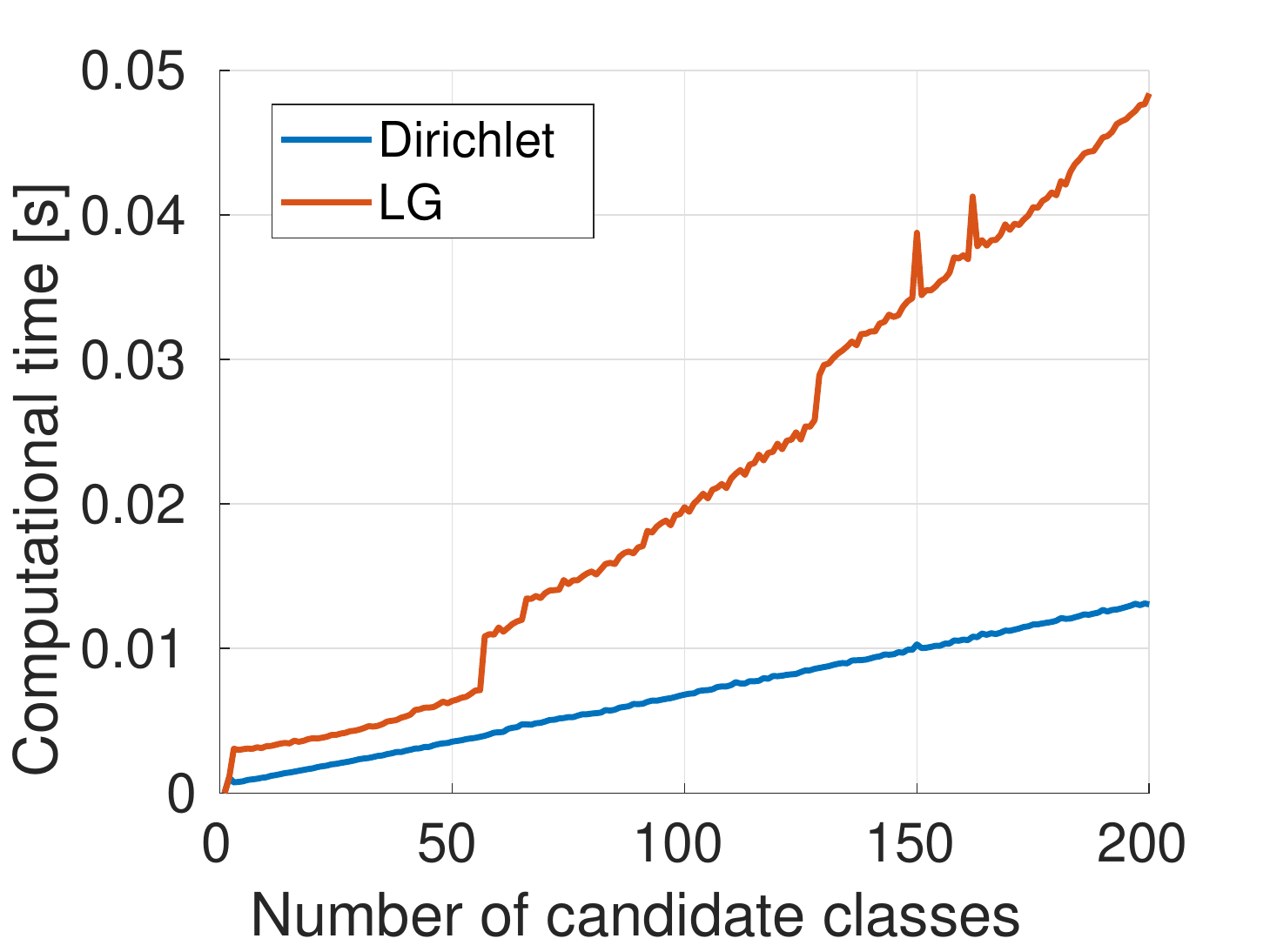}
		\caption{Inference time}\label{fig:Fig_time_inf}
	\end{subfigure}
	\begin{subfigure}[b]{0.18\textwidth}
		\includegraphics[width=\textwidth]{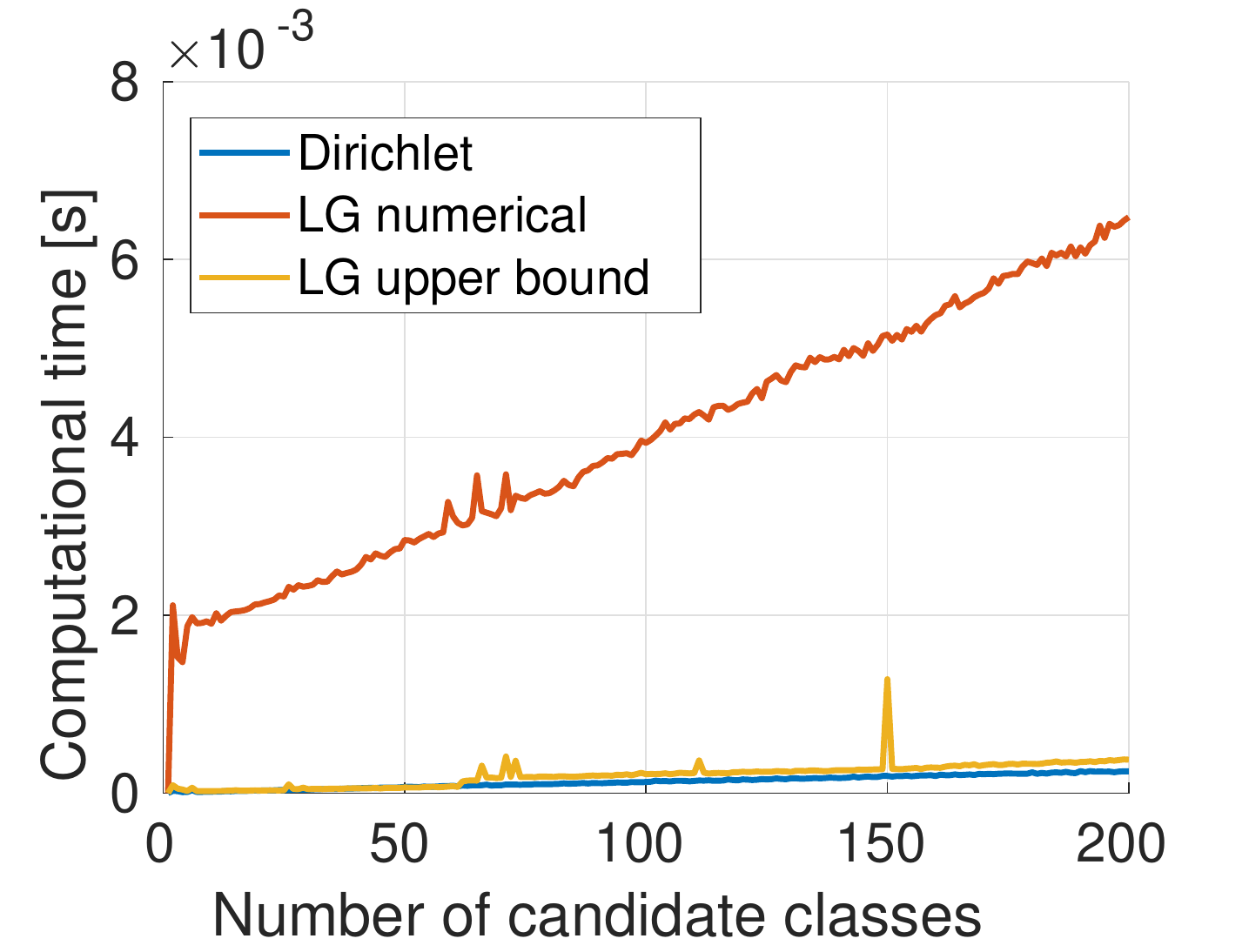}
		\caption{Entropy time}\label{fig:Fig_time_ent}
	\end{subfigure}
	\begin{subfigure}[b]{0.18\textwidth}
		\includegraphics[width=\textwidth]{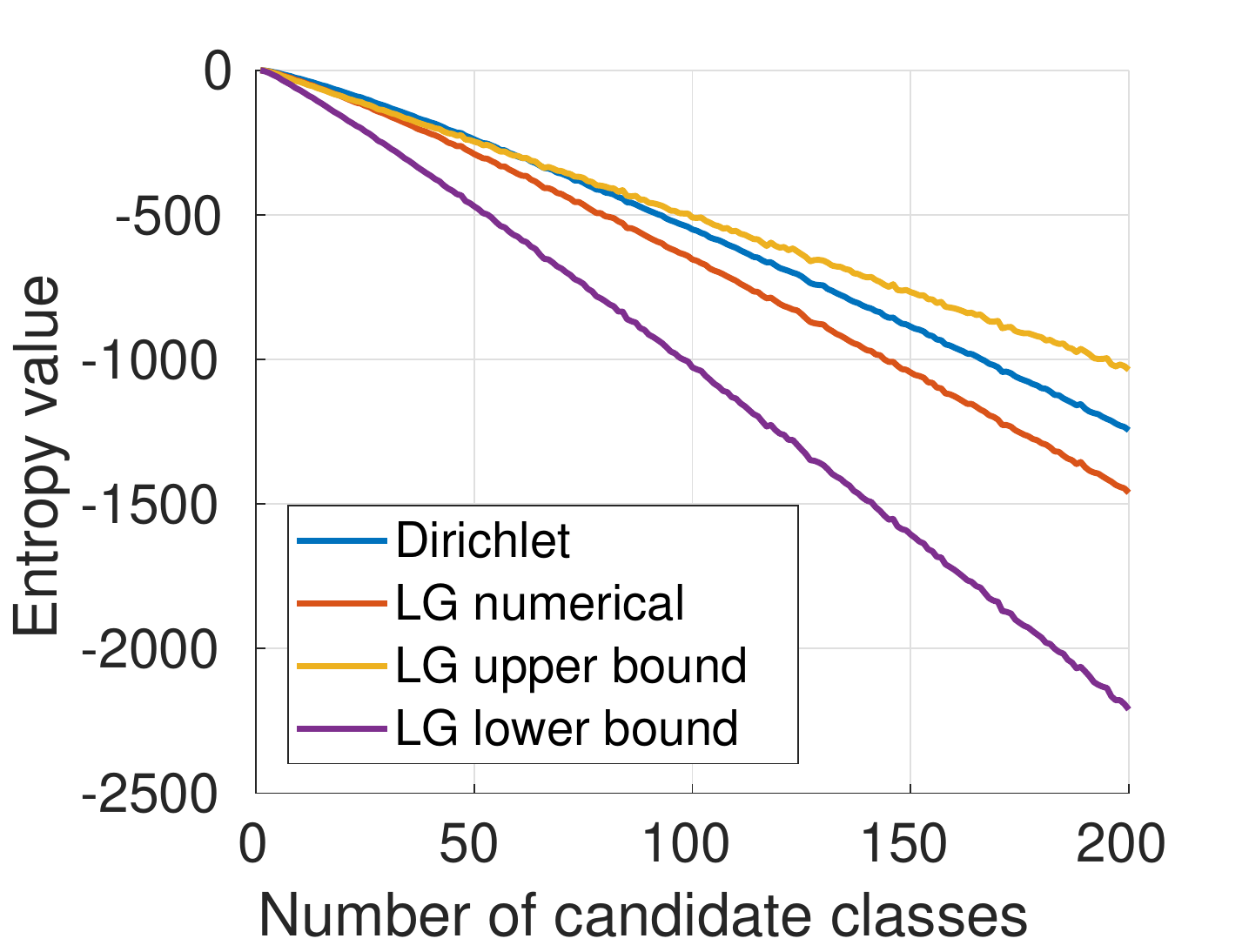}
		\caption{Entropy value}\label{fig:Val_fig_ent}
	\end{subfigure}
	\begin{subfigure}[b]{0.18\textwidth}
		\includegraphics[width=\textwidth]{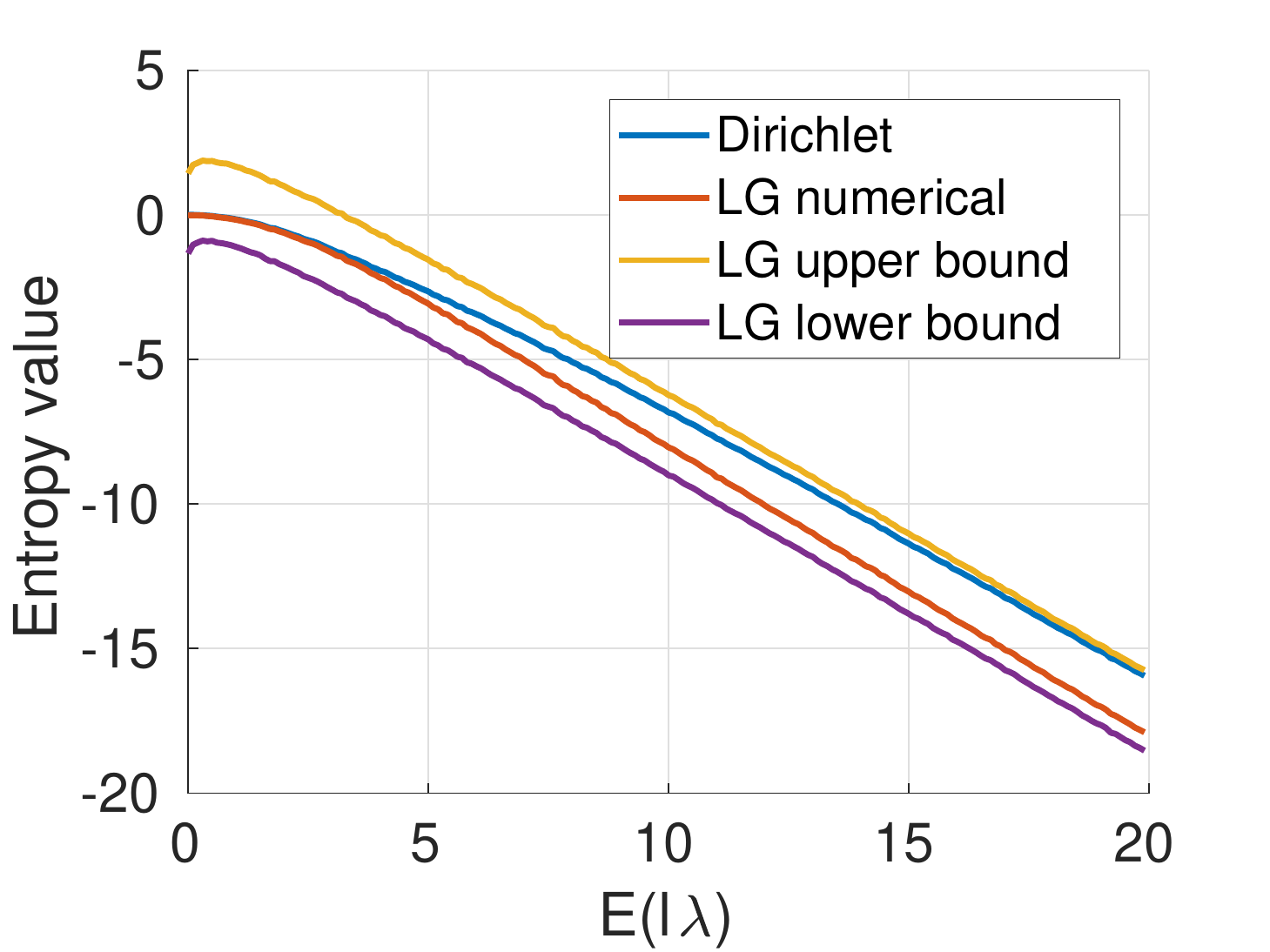}
		\caption{Entropy value}\label{fig:Fig_exp_ent}
	\end{subfigure}
	\begin{subfigure}[b]{0.18\textwidth}
		\includegraphics[width=\textwidth]{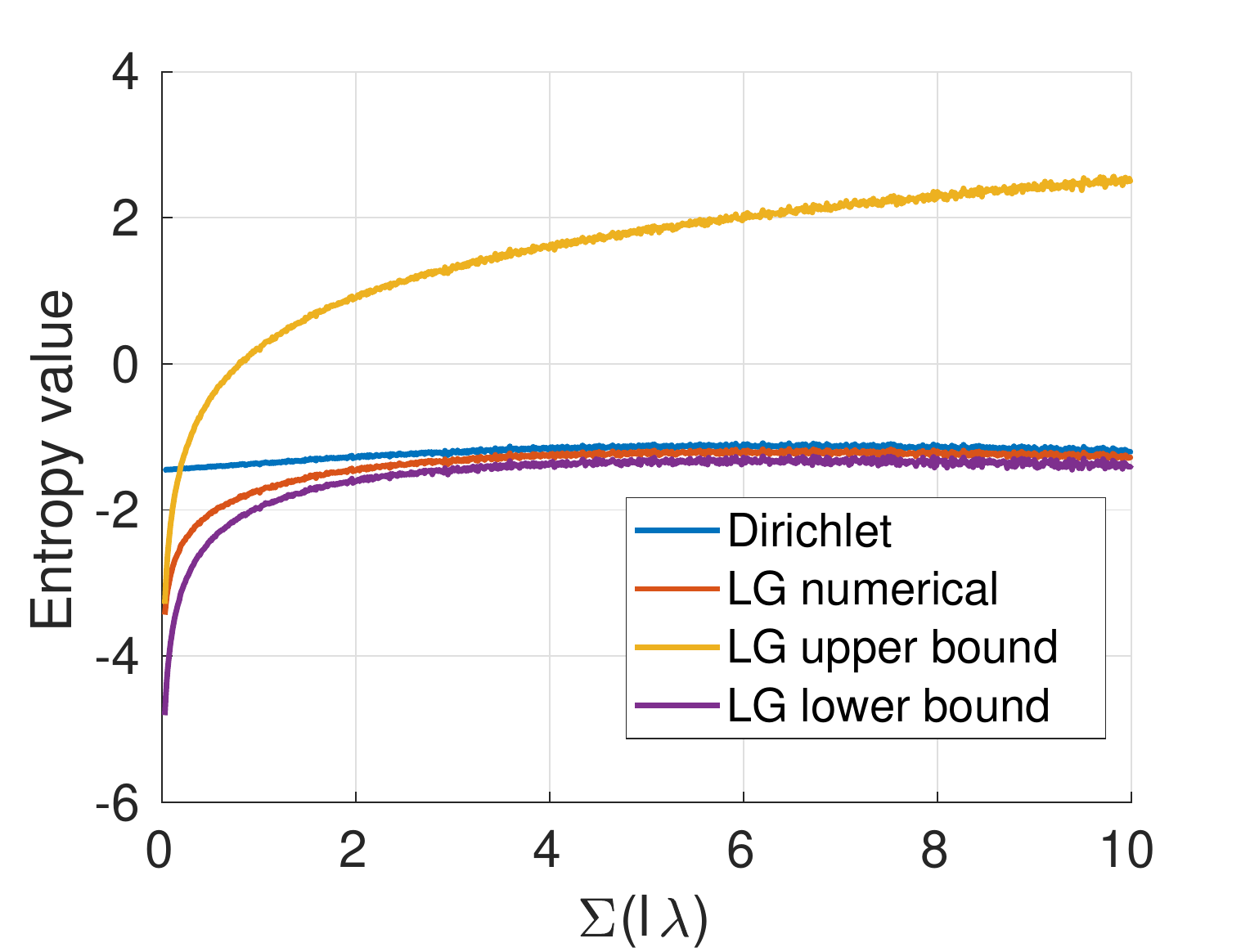}
		\caption{Entropy value}\label{fig:Fig_cov_ent}
	\end{subfigure}
	%
%	\begin{subfigure}[b]{0.23\textwidth}
%		\includegraphics[width=\textwidth]{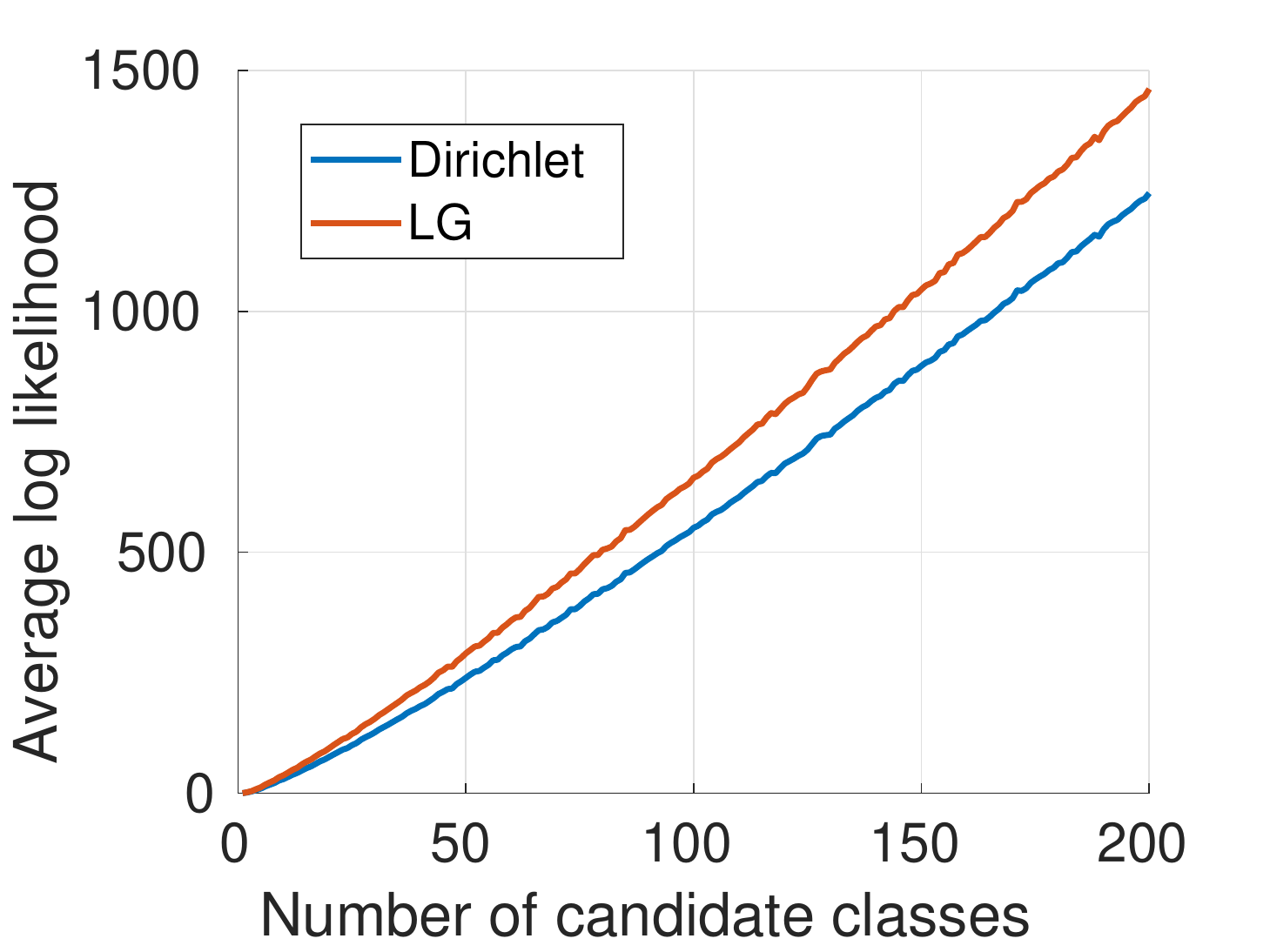}
%		\caption{Entropy value}\label{fig:Log_likelihood_graph}
%	\end{subfigure}
	
	\caption{This figure presents a comparison between Dirichlet and Logistical Gaussian (denoted LG) in terms of computational time, entropy, and log-likelihood values. \textbf{(a)} presents a computational time comparison between Dirichlet and LG for inference as a function of probability vector dimension. Similarly \textbf{(b)} presents a computational time for entropy computation, both for numerical and upper bound. \textbf{(c)} presents a value comparison between the different entropy computations. Note that in \textbf{(b)} and \textbf{(c)} the plots are given the parameters calculated for \textbf{(a)}. \textbf{(d)} presents an entropy value comparison between distributions with different expectations with a fixed covariance; Similarly, \textbf{(e)} presents an entropy value comparison between distributions with different covariance value with a fixed expectation. }
	\label{fig:Dir_LG_comp}
\end{figure*}

When considering the reward $H(\lambda)$ we have to consider two steps:
\begin{enumerate}
	
	\item Computation of $b[\lambda]$ parameters; With \mh we maintain separately $\lambda_w \in \{ \lambda_w \}_{w \in W}$ and subsequently describe $b[\lambda]$ using $\{\{ \lambda_w \}_{w \in W}\}$. As such, to compute $H(\lambda)$ we must assume a distribution for $b[\lambda]$ and infer its parameters. \jlp on the other hand limits $b[\lambda]$ to be LG distributed.
	
	\item Calculation of $H(\lambda)$ to use as a reward function for planning, either via numerical computation, or by using bounds.
	
\end{enumerate}
We remind that this discussion is relevant for $H(\lambda)$ computation for \mh, as in \jlp $\lambda$ is LG distributed by definition.

Parameter inference for Dirichlet is faster than for LG although the process is numeric. On the other hand, LG is more expressive; For $m$ classes, Dirichlet distribution has $m$ parameters, while LG has $m-1$ parameters for $\mathbb{E}(\loglambda)$, and $\frac{m \cdot (m-1)}{2}$ parameters for $\Sigma(\loglambda)$, totaling in $(1 + \frac{m}{2}) \cdot (m-1)$ parameters.

LG does not have an analytical solution for computing entropy, and its numeric computation is slower than Dirichlet's. On the other hand, computing bounds of entropy for LG is comparable in terms of computational effort to computing Dirichlet entropy. One must note that while Dirichlet is less computationally expensive than LG, when \mh with Dirichlet and \jlp are compared, \jlp is still computationally much more efficient.

Fig.~\ref{fig:Dir_LG_comp} presents a comparison between the two distributions in terms of computational effort and value of the entropy. In all figures, the x-axis is number of candidate classes, and for each, a dataset of 1000 class probability vectors was sampled. Fig.~\ref{fig:Fig_time_inf} presents the measured time of parameter computation, clearly showing an advantage for Dirichlet distribution for high dimensional probability vectors despite the parameter computation process for Dirichlet distribution containing functions that must be numerically computed. 

Fig.~\ref{fig:Fig_time_ent} presents the computational time of the entropy, for numerical computation for both LG and Dirichlet, and the bounds for LG(computation time is identical both for upper and lower bounds; thus only upper bound is shown). Here entropy computation for Dirichlet holds a significant advantage over numerical computation of entropy for LG, and the bound computation time is comparable to Dirichlet.

Fig.~\ref{fig:Val_fig_ent} presents  entropy values for Dirichlet, numerical LG, lower and upper bounds for LG. In general, the upper bound tends to be close to the numerical solution for fewer candidate classes. In addition, entropy for Dirichlet distribution tends to be higher.

In Fig.~\ref{fig:Fig_exp_ent} the number of candidate classes is fixed to two, i.e. the dimension of $\loglambda$ is $\mathbb{R}^1$; The covariance $\Sigma(\loglambda)$ is fixed at 3, and $\mathbb{E}(\loglambda)$ goes from 0 to 20. In this figure the entropy value monotonically decreases when increasing $\mathbb{E}(\loglambda)$. The lower bound is tighter between the two bounds as $\mathbb{E}(\loglambda)$ increases. 

In Fig.~\ref{fig:Fig_cov_ent} $\mathbb{E}(\loglambda)$ is fixed instead at $\mathbb{E}(\loglambda)=3$ and $\Sigma(\loglambda)$ varies between 0 and 10. We can see that the entropy value increases with the increase in $\Sigma(\loglambda)$, but the bigger effect is for LG compared to Dirichlet distribution. The upper bound is tighter at lower $\mathbb{E}(\loglambda)$ values, while the lower bound is tighter for higher values.

One may ask: which distribution should be used? For JLP, as mentioned previously, we are limited to LG. For \mh, the tradeoff is between distribution expressiveness and computational effort; While Logistical Gaussian is more expressive because of a larger number of parameters, the computation effort is significantly higher than for Dirichlet distribution. Also, Dirichlet distribution, unlike the Logistical Gaussian, can manage a very small number of probability vector samples.

%\begin{table*}
%	\centering
%	\begin{tabular}{p{0.22\textwidth}p{0.35\textwidth}p{0.35\textwidth}}
%		
%		\hline
%		& Dirichlet & LG \\
%		\hline
%		
%		Inference and number of parameters &
%		Less parameters, shorter inference although numerical &
%		More parameters, slower inference although analytical \\[0.2cm] 
%		
%		Entropy computation &
%		Faster computation of exact entropy &
%		Slower computation of exact entropy, 
%		computation of bounds as fast as Dirichlet. \\[0.2cm] 
%		
%		Small numbers of 
%		samples &
%		No problems &
%		Covariance matrix may get singular \\[0.3cm] 
%		
%		Possible use for $b[\lambda]$&
%		Can be used in MH, but not JLP &
%		Can be used in MH, must be used in JLP \\
%		
%		\hline
%		
%	\end{tabular}
%\caption{Table that summarizes the main points of Sec.~\ref{sec:Dir_LG_discussion}.}
%\label{tab:Summary_Table}
%\end{table*}

	\section{Experiments}
	\label{sec:experiments}
	% !TeX root = Paper Main.tex

We evaluate our approaches for semantic SLAM inference and planning in simulation (Sec.~\ref{sec:Sim}) and an experiment (Sec.~\ref{sec:Exp}) over the Active Vision Dataset scenario Home-3-01 \cite{Ammirato17icra}, with viewpoint dependent classifier uncertainty models trained using the BigBIRD dataset \cite{Singh14icra}. We considered environments with multiple spatially scattered objects, and the robot's task is to accurately classify them while localizing. Our implementation uses the GTSAM library \cite{Dellaert12tr} with a Python wrapper. The hardware used is an Intel i7-7700 processor running at 2.8GHz and 16GB RAM, with GeForce GTX 1050Ti with 4GB RAM.

\subsection{Compared Approaches and Metrics}\label{sec:Sim_Exp_Approaches_Metrics}

We consider three  approaches for inference and planning:  our \mh and \jlp methods with the corresponding \mhbsp and \jlpbsp, and an approach that does not consider model uncertainty, denoted as Without Epistemic Uncertainty (\weu). In this approach we maintain a single hybrid belief and use it for inference and planning, similar to approaches presented in \cite{Tchuiev19iros,Patten18arj}.

% Shared

We require a metric to evaluate classification where a completely incorrect classification would not result in infinite error, unlike cross entropy loss. Therefore, our approach is evaluated for classification accuracy using the Mean Square Detection Error metric (MSDE, also used by Teacy et al. \cite{Teacy15aamas} and Feldman \& Indelman \cite{Feldman18icra}). Given $b[\lambda_k]$,  MSDE is defined as follows:
\begin{equation}\label{eq:MSDE}
MSDE \triangleq \frac{1}{m} \sum_{i=1}^m \left( \lambda_{gt}^i - \mathbb{E}(\lambda^i_k) \right)^2,
\end{equation}
where $\lambda_{gt}^i$ is the ground truth probability of the object being of class $c=i$, and is equal to 1 if the object is class $i$ and 0 otherwise. For a completely incorrect classification $MSDE \leq 1$, ideal classification $MSDE = 0$, and for classification results where all class probabilities are equal, $MSDE = \frac{m-1}{m^2}$.

%---------------------------------------
\subsection{Simulation}\label{sec:Sim}

\subsubsection{Simulation Setting}\label{sec:Inf_Stat_Setting}

% Simulation setting

We consider a closed set setting and assume, for simplicity, that the number of classes $m=2$, i.e., each object can be one of the two classes. The camera senses objects up to 10 meters distance, with an opening angle of $120^\circ$. We choose two sets of models for the simulation; The first is a model that satisfies Lemma \ref{lemma:FGCondition}, and the second does not to show the effect of using \jlp with such models. 
%In both cases, we assume the models are accurate \VI{for measurement simulation [clarify]}.
The baseline MSDE score for $m=2$ where all class probabilities are equal is $MSDE = 0.25$.

% Shared
%--------------------------------------------------------
\subsubsection{Inference: Single Run}\label{sec:Inf_Fixed}

The setting for this comparison is an environment with 5 objects; These object are placed within the environment, which is presented in Fig.~\ref{fig:Inf_Fixed_GT} along with the ground truth trajectory. The robot passes through an area in which the objects have classification scores with high degree of epistemic uncertainty. Normally, with methods that do not consider epistemic uncertainty, classification results will have a high chance of being incorrect, but our approach provide more accurate results as it considers epistemic uncertainty.
Denote $\psi$ as the relative orientation between the object's orientation (chosen during the classifier uncertainty model training) and the camera's pose. We simulate a classifier model that considers the following cases:
\begin{enumerate}
	
	\item The classifier differentiates well between classes with low epistemic uncertainty, $\psi = 0^\circ$.
	
	\item The classifier does not differentiate well between the two classes, $\psi = 90^\circ, 270^\circ$.
	
	\item The classifier differentiates between classes well, but with high epistemic uncertainty, $\psi = 180^\circ$.
	
\end{enumerate}
As such, $\psi = 0^\circ$ is the relative orientation where the best classification with the lowest uncertainty is expected (corresponding to the blue cone in Fig.~\ref{fig:Inf_Fixed_GT} that represent this relative orientation), and $\psi = 180^\circ$ is the relative orientation that most prone to classification errors when not considering epistemic uncertainty. Considering the specific ground truth trajectory for the presented scenario, objects 1 and 2 represent case 3; as such, we expect our approaches to infer the correct class within a large number of steps because of the uncertainty. Object 3 represents case 1, and as such when it is observed the classification will be accurate on the first view. Objects 4 and 5 represent case 2, where classification is difficult as the model doesn't differentiate well between the classes of those objects. The object ground truth classes are $c=1$ for objects 1, 2 and 5, and $c=2$ for objects 3 and 4.

\begin{figure}[!htbp]
	
	\includegraphics[width=0.6\textwidth]{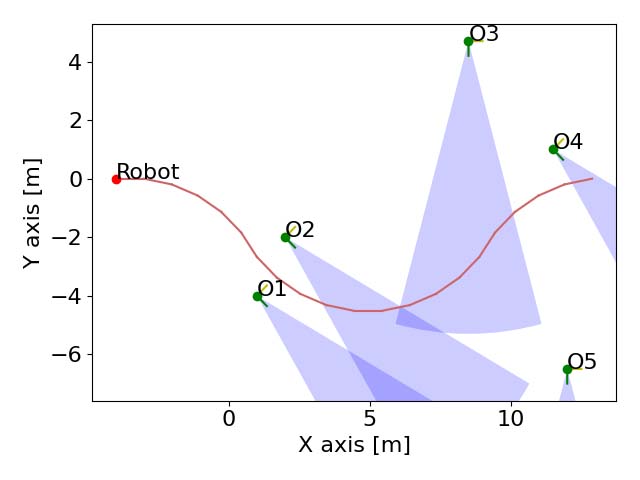}
	\caption{The ground truth of the scenario in Sec.~\ref{sec:Inf_Fixed}. The red dot represents the robot's starting point, with the red curve being the path. The green dots represent the objects' location with the corresponding object labels. The green line represents the object orientation, with the yellow line present $90^\circ$ of that orientation. The blue cones represent the observation viewpoints in which the classifier identifies the object class well with low uncertainty, i.e. case 1.}%
	\label{fig:Inf_Fixed_GT} 
	
\end{figure}

A visualization of the models presented can be seen in Fig.~\ref{fig:cls_model_1}.

\begin{figure}[!htbp]
	
	\begin{subfigure}[b]{0.35\textwidth}
		\includegraphics[width=\textwidth]{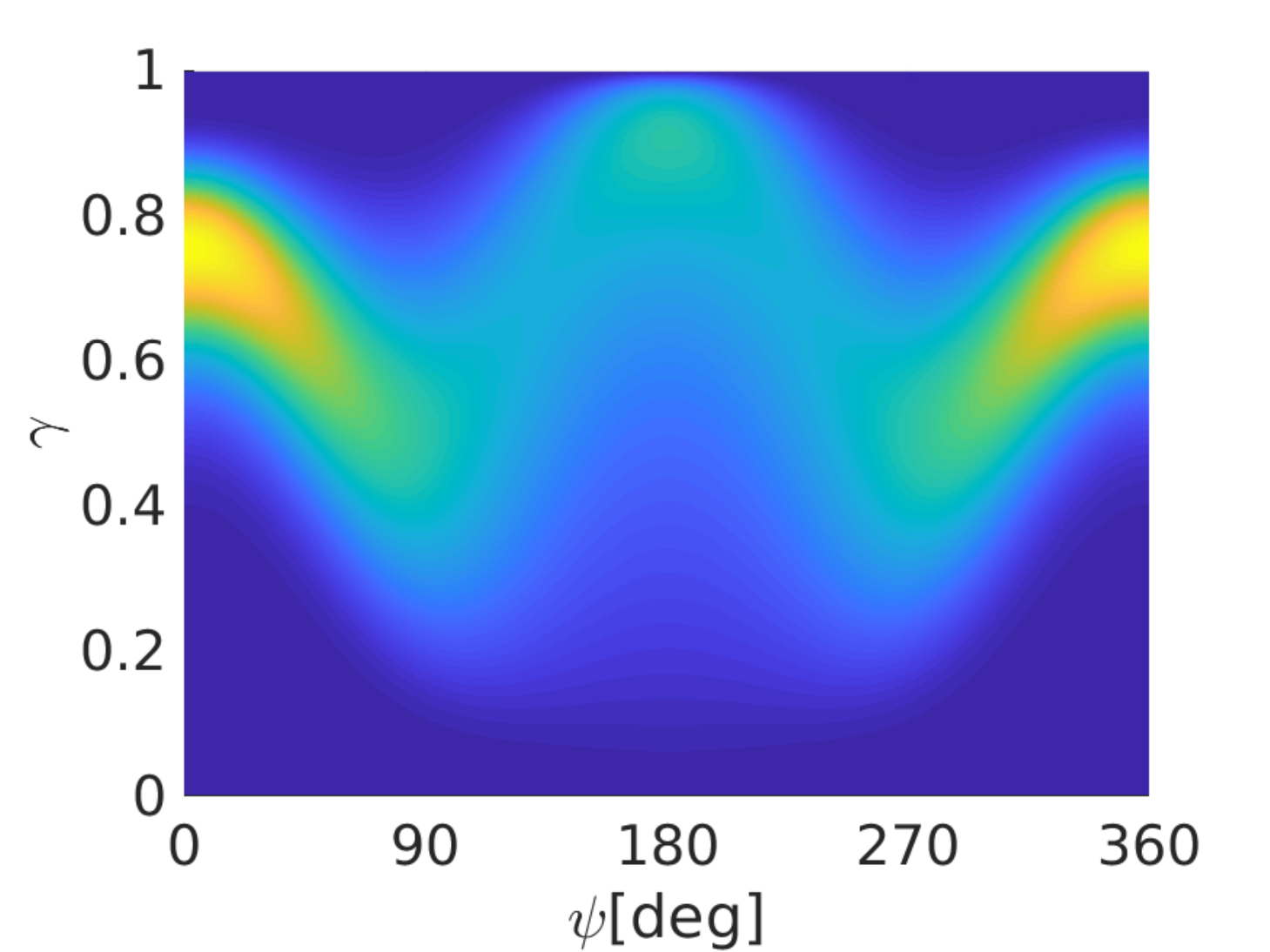}
		\caption{$\prob{\gamma^{c=1}|c=1,\psi}$}\label{fig:cls_model_1_c1}
	\end{subfigure}
	\begin{subfigure}[b]{0.35\textwidth}
		\includegraphics[width=\textwidth]{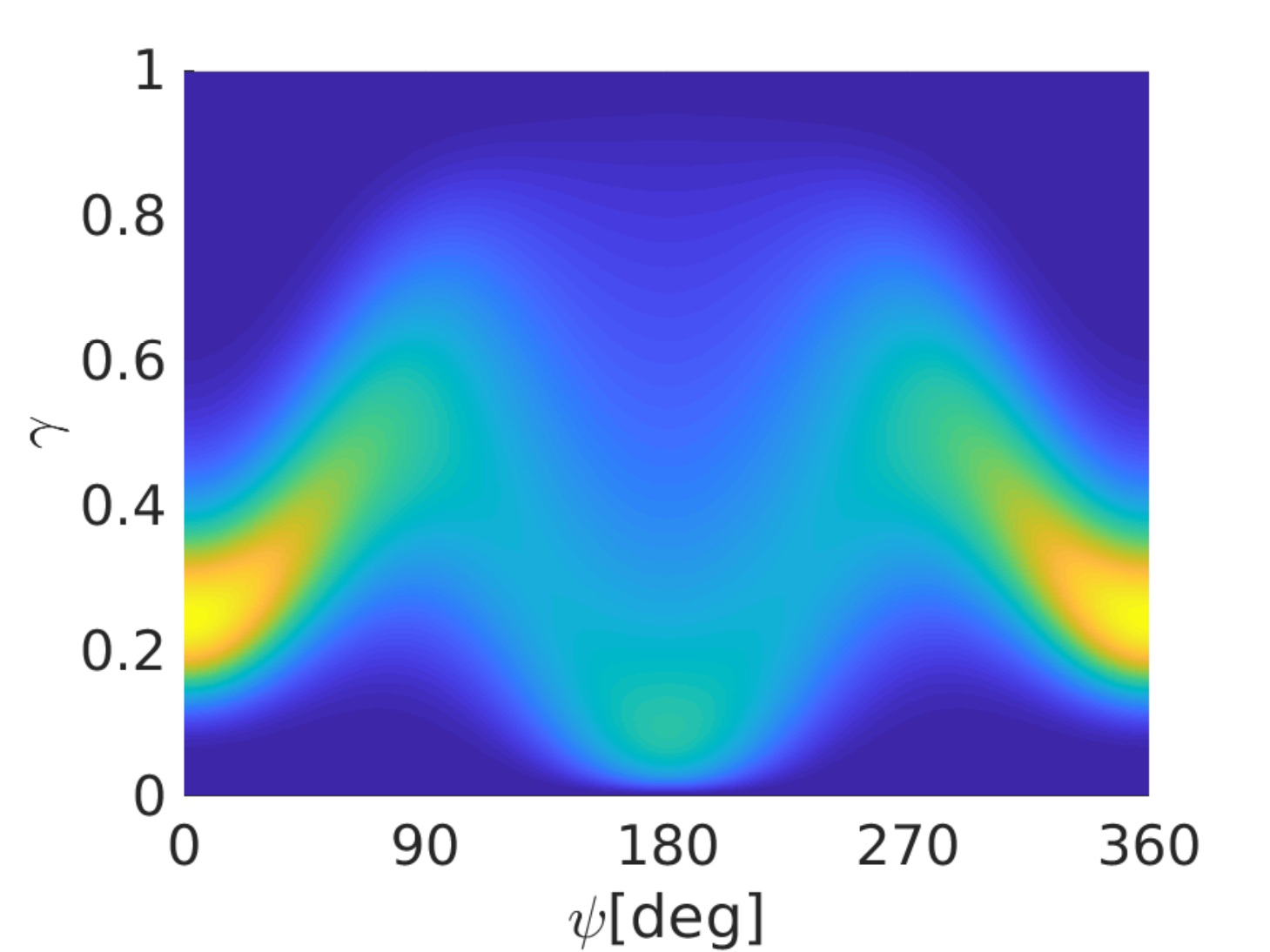}
		\caption{$\prob{\gamma^{c=1}|c=2,\psi}$}\label{fig:cls_model_1_c2}
	\end{subfigure}
	\caption{A visualization of the classifier uncertainty model used in Sec.~\ref{sec:Inf_Fixed}. We present the value of $\prob{\gamma^{c=1}|c,x^{rel}} \equiv \prob{\gamma^{c=1}|c,\psi}$ as a function of relative orientation $\psi$ and $\gamma^{c=1}$ value, for classes $c=1$ and $c=2$ in \textbf{(a)} and \textbf{(b)} respectively. Blue and yellow colors correspond to low and high PDF values respectively.}% in green frames.}
	\label{fig:cls_model_1}
\end{figure}

We consider noisy geometric measurements of relative pose, and cloud point semantic measurements, i.e. the classifier gives $\{\gamma\}$ per each object, sampled from the classifier uncertainty model.
We use a classifier uncertainty model with the following function for expectation (see Eq.~\eqref{eq:Classifier_Uncertainty_Model}):
\begin{equation}\label{eq:Model_1_exp}
\begin{split}
h_{c=1}(x^{rel}) & = \frac{1}{2} \cos(2 \cdot \psi) + \frac{1}{2} \\
h_{c=2}(x^{rel}) & = - \frac{1}{2} \cos(2 \cdot \psi) - \frac{1}{2},
\end{split}
\end{equation}
and the following parameter for root-information:
\begin{equation}\label{eq:Model_1_cov}
R_{c=1}(x^{rel}) = R_{c=2}(x^{rel}) = 1.4 + 0.6 \cdot \cos(\psi),
\end{equation}
Subsequently, the covariance parameter from Eq.~\eqref{eq:Classifier_Uncertainty_Model} in the two class case is computed as follows:
\begin{equation}
\Sigma_c(x^{rel}) = \sqrt{\frac{1}{R_{c}(x^{rel})}}.
\end{equation}
With the covariance parameter being equal, the presented model satisfies the assumption of Lemma \ref{lemma:FGCondition}, allowing us to use the \jlp approach.

Fig.~\ref{fig:Inf_Fixed_MSDE_Ind} presents MSDE results for each object separately. We perform inference with \mh with a different number of hybrid beliefs, and compare it to \jlp and \weu. With \mh and \jlp, the class of objects 1 and 2 is inferred using multiple observations, eventually inferring the correct class. The class of object 3, once seen, is quickly and accurately inferred. The class of objects 4 and 5 remain ambiguous (MSDE of approximately 0.25) because they are observed from viewpoints that correspond to case 2. In general, \mh in Fig.~\ref{fig:Inf_Fixed_MSDE_Ind_MH05}-\ref{fig:Inf_Fixed_MSDE_Ind_MH100} tends to present smoother results the more hybrid beliefs are used, and also compared to \jlp in Fig.~\ref{fig:Inf_Fixed_MSDE_Ind_JLP} where for each time step the entropy must be computed numerically from new $\lambda_k$ samples. \weu in Fig.~\ref{fig:Inf_Fixed_MSDE_Ind_WEU} shows that objects can be classified incorrectly if not considering epistemic uncertainty, such as object 4, as shown in the figure.

As a summary, Fig.~\ref{fig:Inf_Fixed_MSDE} presents average MSDE results for all the objects combined, showing that epistemic uncertainty aware approaches outperform \weu, while \mh with 10 beliefs and \jlp perform similarly. Fig \ref{fig:Inf_Fixed_Time} presents a computation time comparison between \weu, \jlp and \mh for different number of hybrid beliefs. From this figure, we can see that \jlp is comparable to \weu, with \mh being significantly more computationally intensive as the number of the simultaneous beliefs increase.

\begin{figure*}[!htbp]
	
	\begin{subfigure}[b]{0.32\textwidth}
		\includegraphics[width=\textwidth]{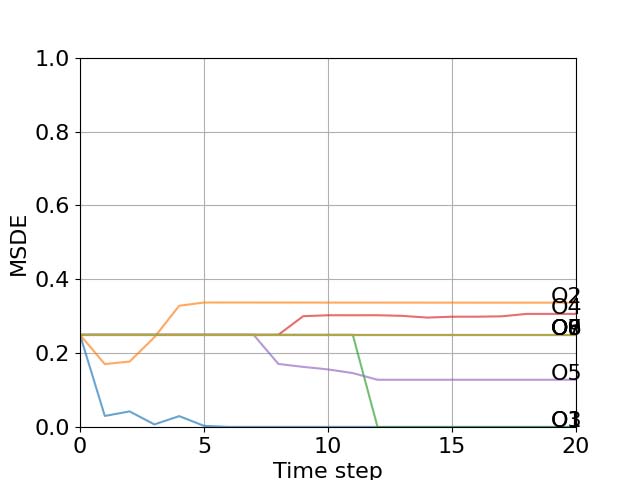}
		\caption{\mh 5 hybrid beliefs}\label{fig:Inf_Fixed_MSDE_Ind_MH05}
	\end{subfigure}
	\begin{subfigure}[b]{0.32\textwidth}
		\includegraphics[width=\textwidth]{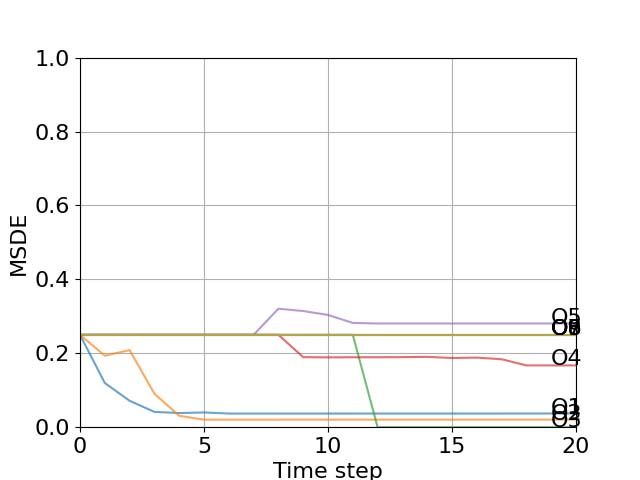}
		\caption{\mh 10 hybrid belief}\label{fig:Inf_Fixed_MSDE_Ind_MH10}
	\end{subfigure}
	\begin{subfigure}[b]{0.32\textwidth}
		\includegraphics[width=\textwidth]{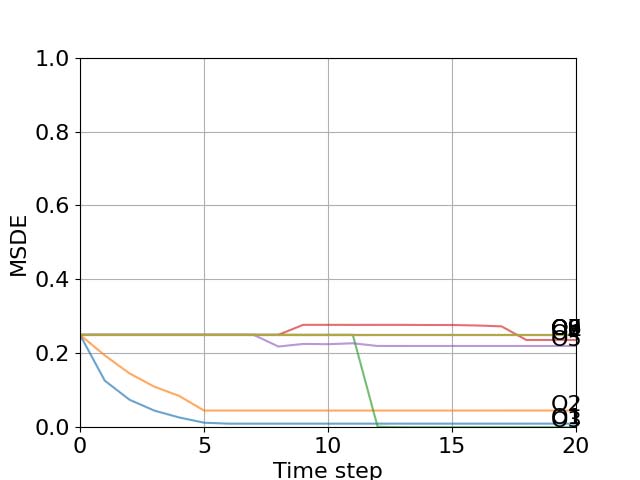}
		\caption{\mh 25 hybrid beliefs}\label{fig:Inf_Fixed_MSDE_Ind_MH25}
	\end{subfigure}
	
	\begin{subfigure}[b]{0.32\textwidth}
		\includegraphics[width=\textwidth]{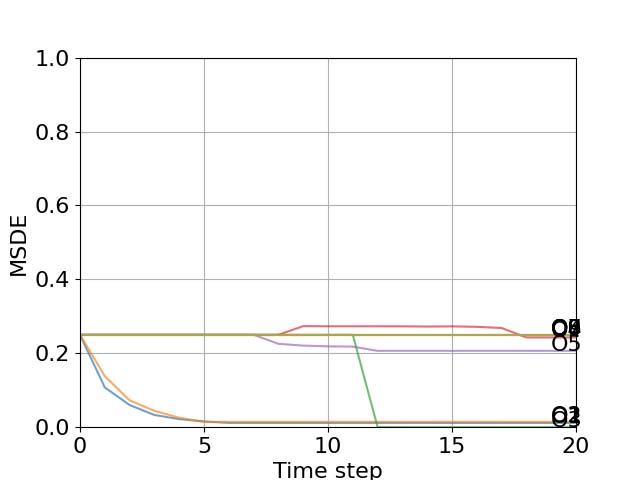}
		\caption{\mh 100 hybrid beliefs}\label{fig:Inf_Fixed_MSDE_Ind_MH100}
	\end{subfigure}
	\begin{subfigure}[b]{0.32\textwidth}
		\includegraphics[width=\textwidth]{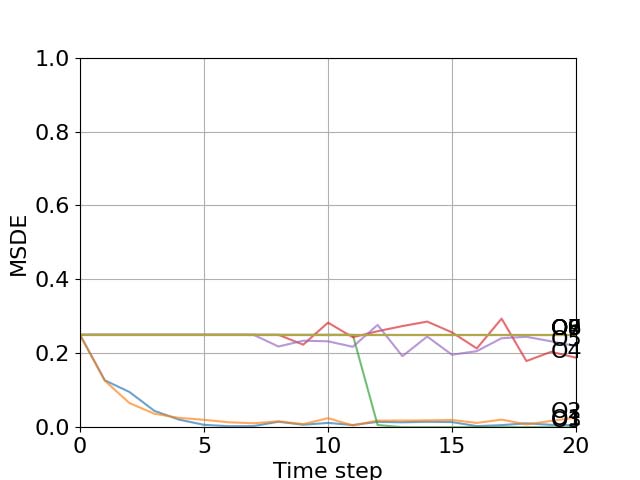}
		\caption{\jlp}\label{fig:Inf_Fixed_MSDE_Ind_JLP}
	\end{subfigure}
	\begin{subfigure}[b]{0.32\textwidth}
		\includegraphics[width=\textwidth]{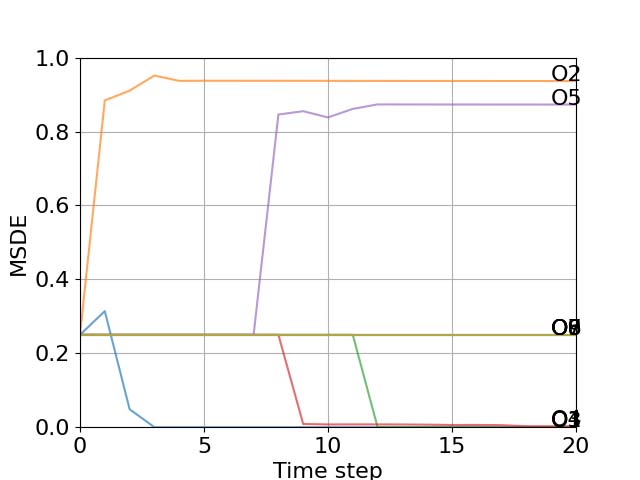}
		\caption{\weu}\label{fig:Inf_Fixed_MSDE_Ind_WEU}
	\end{subfigure}
	\caption{\textbf{(a)}, \textbf{(b)}, \textbf{(c)}, and \textbf{(d)} show MSDE results per time step for \mh per object, each in a different color, for 5, 10, 25, and 100 respectively. \textbf{(e)} shows MSDE results for \jlp. \textbf{(f)} shows MSDE results for \weu}
	\label{fig:Inf_Fixed_MSDE_Ind}
\end{figure*}

\begin{figure}[!htbp]
	
	\begin{subfigure}[b]{0.35\textwidth}
		\includegraphics[width=\textwidth]{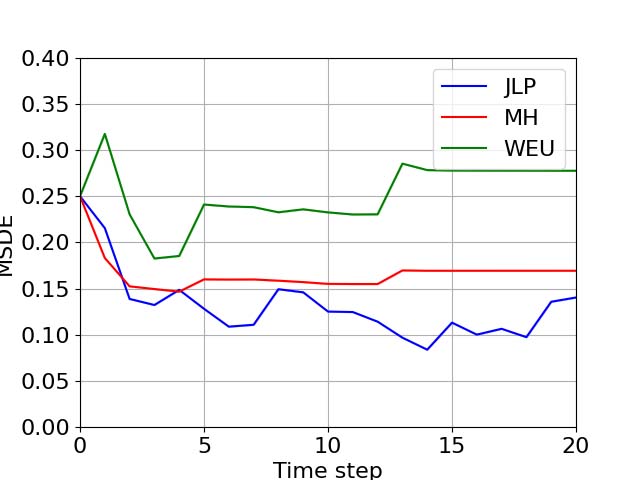}
		\caption{}\label{fig:Inf_Fixed_MSDE}
	\end{subfigure}
	\begin{subfigure}[b]{0.35\textwidth}
		\includegraphics[width=\textwidth]{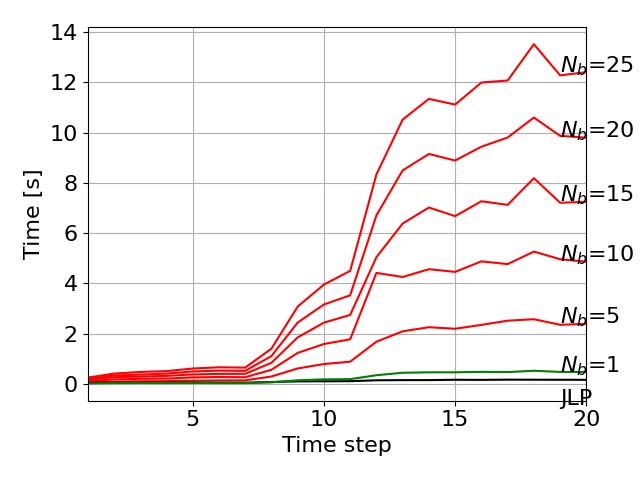}
		\caption{}\label{fig:Inf_Fixed_Time}
	\end{subfigure}
	\caption{\textbf{(a)} compares MSDE to time step between \mh in red, \jlp in blue, and without uncertainty in green.
	\textbf{(b)} compares run-time per inference step between realizations of \mh with different number of hybrid beliefs in red,  \jlp in black, and \weu in green.}% in green frames.}
	\label{fig:Inf_Fixed}
\end{figure}

%-------------------------------------------------------
\subsubsection{Inference: Statistical Study}\label{sec:Inf_Stat_Study}

In this section we perform a Monte-Carlo study to compare between \mh, \jlp, and \weu. We run the simulation 10 times and present results for MSDE and computational time.
The setting for this comparison is an environment with 5 objects with randomized poses, with examples presented in Fig.~\ref{fig:Inf_Scenario_Comp}. Otherwise, we use the same setting and classifier uncertainty model as in Sec.~\ref{sec:Inf_Fixed}.

\begin{figure}[!htbp]
	
	\begin{subfigure}[b]{0.35\textwidth}
		\includegraphics[width=\textwidth]{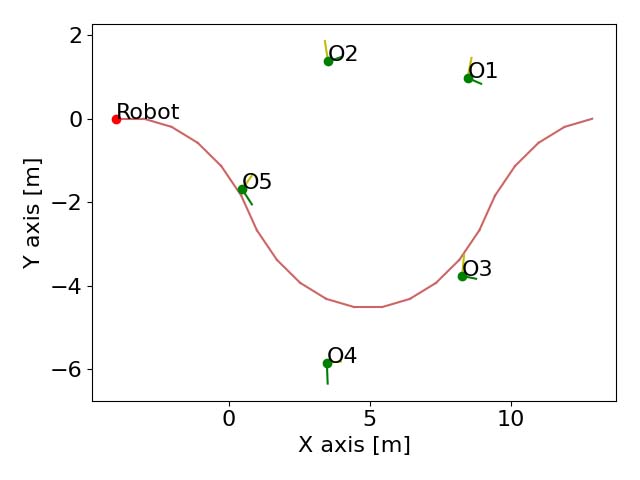}
		\caption{}\label{fig:Inf_Scenario}
	\end{subfigure}
	\begin{subfigure}[b]{0.35\textwidth}
		\includegraphics[width=\textwidth]{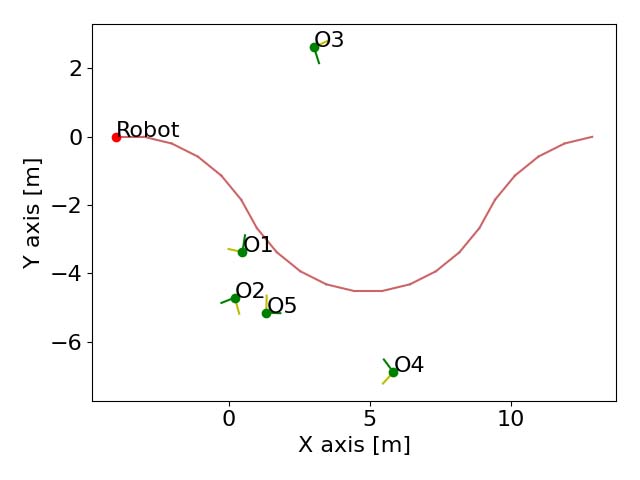}
		\caption{}\label{fig:Inf_Scenario_2}
	\end{subfigure}
	\caption{Examples of a sampled environment in which the inference is performed. The red trajectory is the robot path. The green dots denote the objects, numbered O1 to O5. The green and yellow lines represent their orientation, $0^\circ$ and $90^\circ$ respectively.}% in green frames.}
	\label{fig:Inf_Scenario_Comp}
\end{figure}

We present MSDE statistical results in Fig.~\ref{fig:Inf_Stat_MSDE}, with one $\sigma$ uncertainty. While \mh and \jlp perform similarly, both outperform the approach that does not consider epistemic uncertainty, especially in cases where the camera goes through areas that correspond to $\psi=180^\circ$. Fig.~\ref{fig:Inf_Comp_Time} presents run-time results for the algorithms. Expectedly, as the number of simultaneous beliefs increase for \mh, the algorithm runs slower. \jlp is comparable to maintaining a single hybrid belief in this case, demonstrating that it is more practical when the conditions of Lemma \ref{lemma:FGCondition} are satisfied.

\begin{figure}[!htbp]
	
	\begin{subfigure}[b]{0.35\textwidth}
		\includegraphics[width=\textwidth]{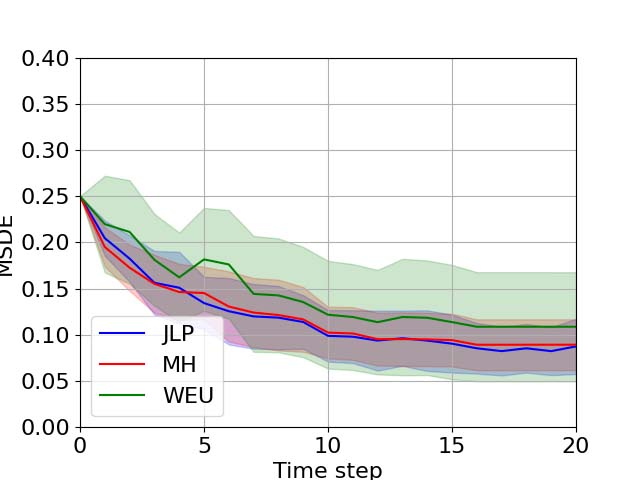}
		\caption{}\label{fig:Inf_Stat_MSDE}
	\end{subfigure}
	\begin{subfigure}[b]{0.35\textwidth}
		\includegraphics[width=\textwidth]{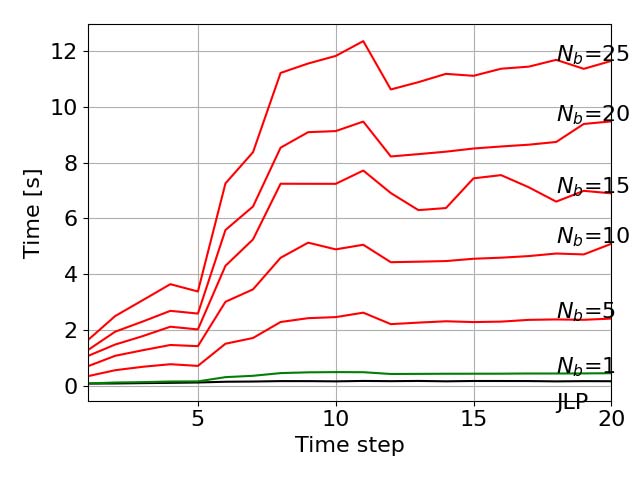}
		\caption{}\label{fig:Inf_Comp_Time}
	\end{subfigure}
	\caption{\textbf{(a)} compares MSDE to time step between \mh in red, \jlp in blue, and without uncertainty in green. The transparent colors correspond to the respective plot in one $\sigma$ value. \textbf{(b)} compares run-time per inference step between realizations of \mh with different number of hybrid beliefs in red,  \jlp in black, and \weu in green.}% in green frames.}
	\label{fig:Inf_Comp}
\end{figure}

\subsubsection{Inference: Joint Lambda Pose Assumption}\label{sec:Inf_Stat_JLP_Conf}

One may consider the ramifications of using \jlp with models that don't satisfy Lemma \ref{lemma:FGCondition}; The most straightforward result is that \mh and \jlp results don't coincide with each other, and that may result in either erroneous or overconfident classification (i.e. large values of $\mathbb{E}(\loglambda)$ and/or too small values of $\Sigma(\loglambda)$).

This time, the classifier model uses the following parameters; For expectation:
\begin{equation}\label{eq:Model_2_exp}
\begin{split}
	h_{c=1}(x^{rel}) & = 0.3 \cdot \cos(2 \cdot \psi) + 0.3 \\
	h_{c=2}(x^{rel}) & = - 0.3 \cdot \cos(2 \cdot \psi) - 0.3,
\end{split}
\end{equation}
and the following parameters for root-information:
\begin{equation}\label{eq:Model_2_cov}
\begin{split}
	R_{c=1}(x^{rel}) & = 1.4 + 0.6 \cdot \cos(\psi)\\
	R_{c=2}(x^{rel}) & = 1.4 - 0.6 \cdot \cos(\psi)	.
\end{split}
\end{equation}
The goal is creating opposing $\Sigma_c(\psi)$ for the two classes, such that Lemma \ref{lemma:FGCondition} does not hold and may result in inaccurate classification while using \jlp. Fig.~\ref{fig:cls_model_2} presents a visualization of the model. Specifically, the problematic areas are around $\psi = 0^\circ$ where the models actually predict that when $\gamma^{c=1}>0.8$ the likelihood is actually higher for $c=2$, and vice-versa when $\psi = 180^\circ$ when $\gamma^{c=2}>0.8$ predicts a higher likelihood for $c=1$.

\begin{figure}[!htbp]
	
	\begin{subfigure}[b]{0.35\textwidth}
		\includegraphics[width=\textwidth]{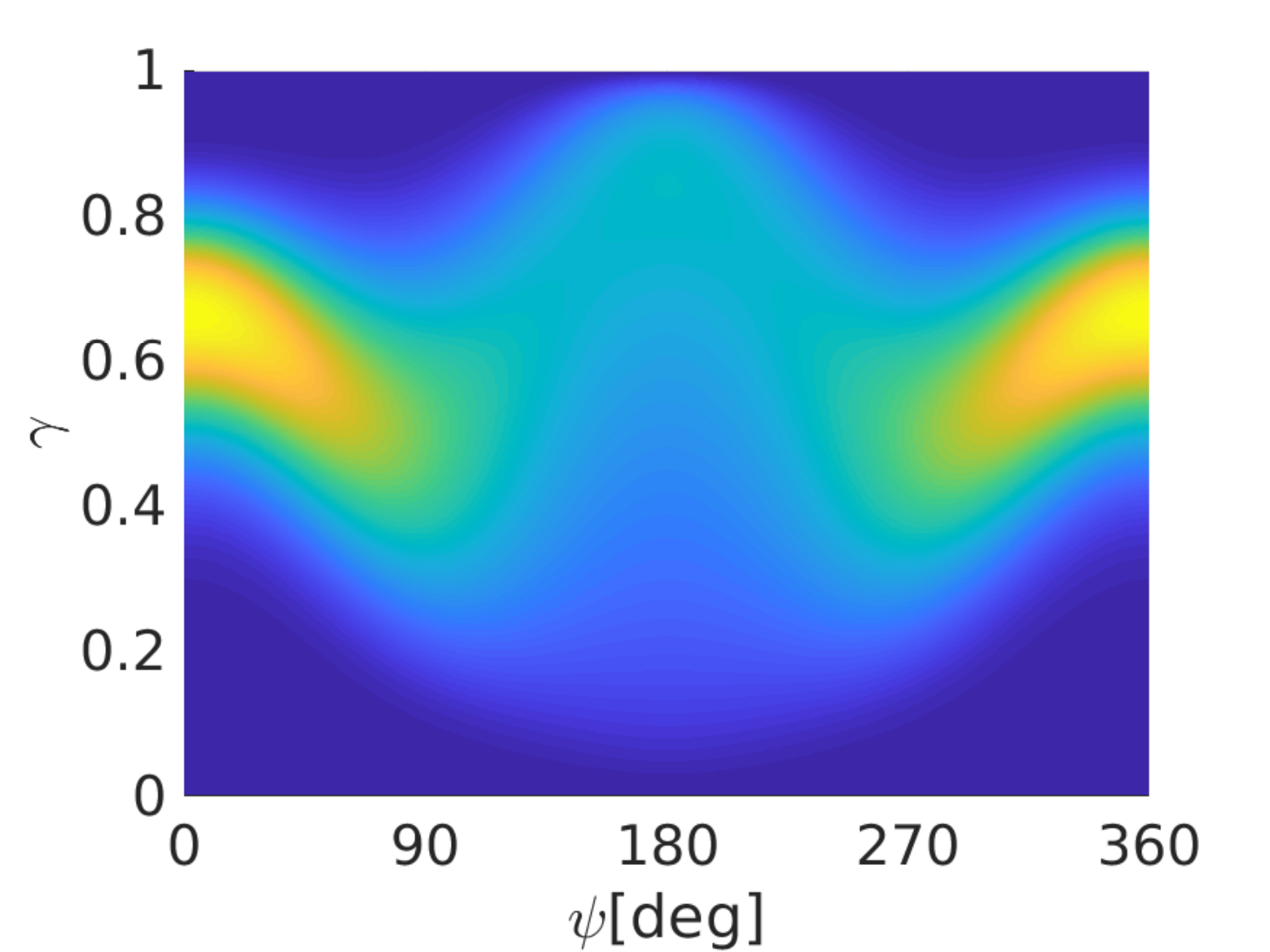}
		\caption{$\prob{\gamma^{c=1}|c=1,\psi}$}\label{fig:cls_model_2_c1}
	\end{subfigure}
	\begin{subfigure}[b]{0.35\textwidth}
		\includegraphics[width=\textwidth]{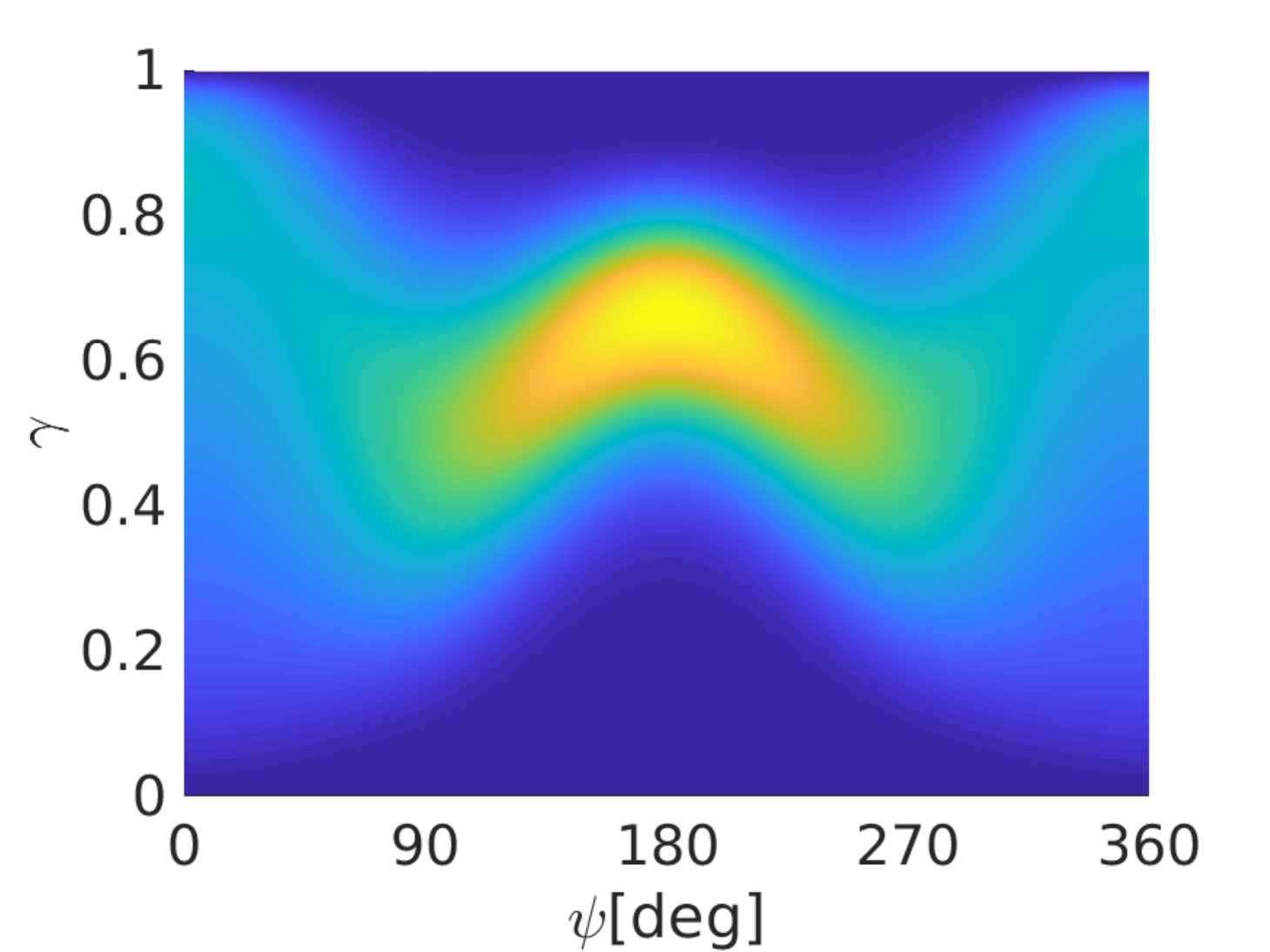}
		\caption{$\prob{\gamma^{c=1}|c=2,\psi}$}\label{fig:cls_model_2_c2}
	\end{subfigure}
	\caption{A visualization of the classifier uncertainty model used in Sec.~\ref{sec:Inf_Stat_JLP_Conf}. We present the value of $\prob{\gamma^{c=1}|c,\psi}$ as a function of relative orientation $\psi$ and $\gamma^{c=1}$ value, for classes $c=1$ and $c=2$ in \textbf{(a)} and \textbf{(b)} respectively. Blue and yellow colors correspond to low and high PDF values respectively.}% in green frames.}
	\label{fig:cls_model_2}
\end{figure}

Fig.~\ref{fig:JLP_Assumption_Histogram} presents the PDF values of $\loggamma_k$, and $\mathcal{L}^s_k$ with and without \jlp assumption at $\psi=0^\circ$. Here $\loggamma_k \sim \mathcal{N}(0.6, 0.25)$, coinciding with the classifier uncertainty model parameters of class $c=1$. In this figure the difference between the approximation and real $\mathcal{L}^s_k$ are evident, with the approximated $\mathcal{L}^s_k$ risking a larger chance of incorrect classification as the area below 0 is larger than that for the real $\mathcal{L}^s_k$.

\begin{figure}[!htbp]
	\includegraphics[width=0.5\textwidth]{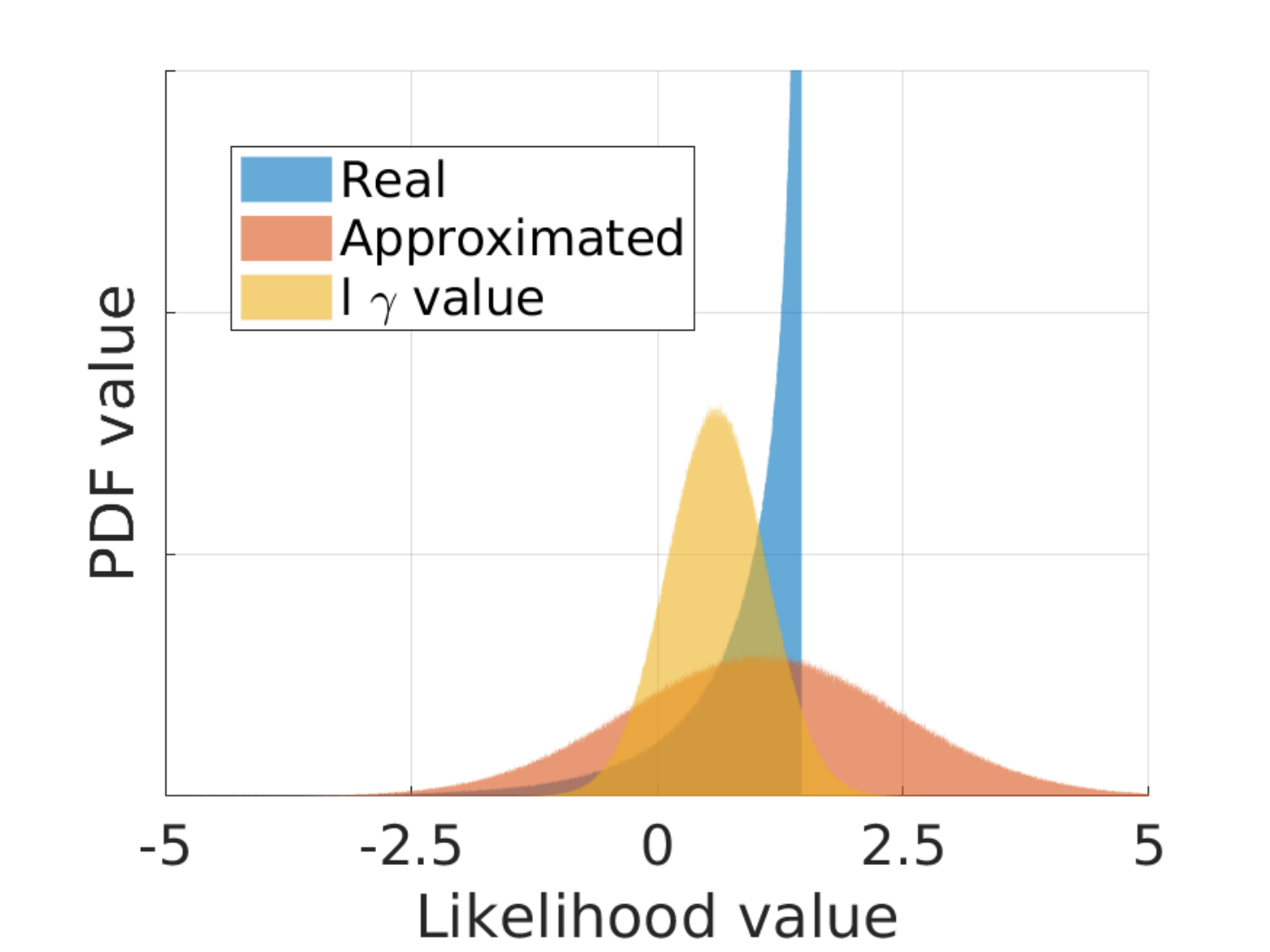}
	\caption{An approximate PDF value graph for \jlp assumption where $\psi = 0^\circ$ The yellow area represents the distribution of $\loggamma_k$, the red area represents $\mathcal{L}^s_k$ PDF when \jlp assumption is used, and the blue area represents the real PDF of $\mathcal{L}^s_k$.}% 
	\label{fig:JLP_Assumption_Histogram}
\end{figure}

In this scenario we use the same setting we used at Sec.~\ref{sec:Inf_Stat_Setting}, \ref{sec:Inf_Fixed}, and \ref{sec:Inf_Stat_Study}. and compare the MSDE scores in Fig.~\ref{fig:Inf_Conf_MSDE}. For Fig.~\ref{fig:Inf_Conf_MSDE_Ind_MH100} and \ref{fig:Inf_Conf_MSDE_Ind_JLP} we use the scenario from Sec.~\ref{sec:Inf_Fixed}.
In Fig.~\ref{fig:Inf_Conf_MSDE_Ind_MH100} MSDE results for \mh with 100 hybrid beliefs are shown as the most accurate, where objects 1 and 2 have more accurate classification than the rest. Fig.~\ref{fig:Inf_Conf_MSDE_Ind_JLP} presents MSDE results for \jlp, where we see significantly less accurate results. Finally, we perform a statistical study with 5 random object locations of 10 runs as in Sec.~\ref{sec:Inf_Stat_Study}, comparing between \mh with 10 hybrid belief, \jlp, and WEO, and see that statistically the difference between \mh and \jlp is not large even without Lemma \ref{lemma:FGCondition} holding, as opposed to the specific run from Figs.~\ref{fig:Inf_Conf_MSDE_Ind_MH100} and \ref{fig:Inf_Conf_MSDE_Ind_JLP}.

\begin{figure*}[!htbp]
	
	\begin{subfigure}[b]{0.32\textwidth}
		\includegraphics[width=\textwidth]{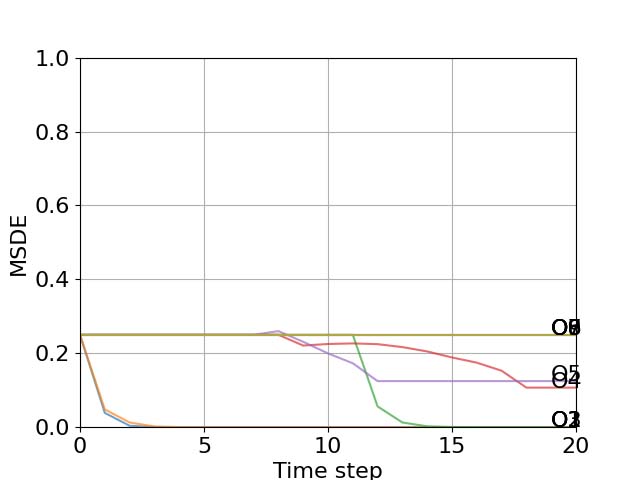}
		\caption{\mh 100 hybrid beliefs}\label{fig:Inf_Conf_MSDE_Ind_MH100}
	\end{subfigure}
	\begin{subfigure}[b]{0.32\textwidth}
		\includegraphics[width=\textwidth]{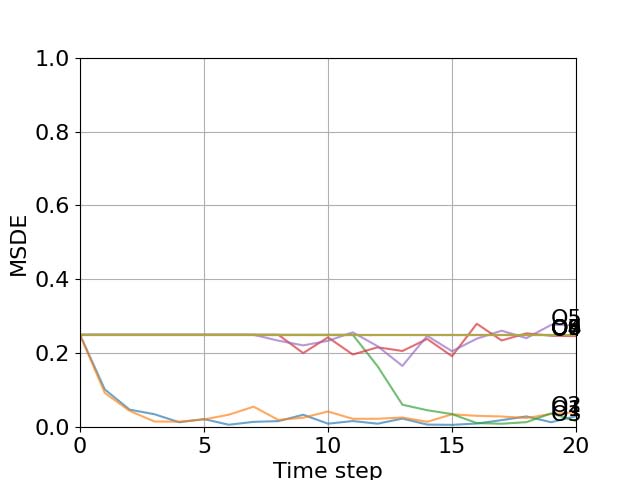}
		\caption{\jlp}\label{fig:Inf_Conf_MSDE_Ind_JLP}
	\end{subfigure}
	\begin{subfigure}[b]{0.32\textwidth}
		\includegraphics[width=\textwidth]{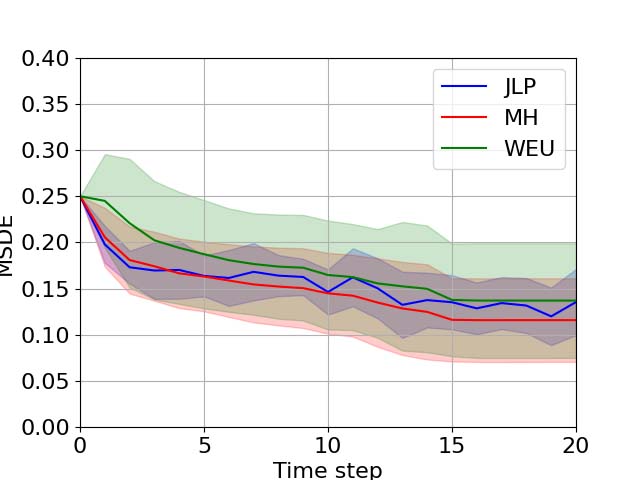}
		\caption{Average MSDE}\label{fig:Inf_Stat_MSDE_Conf}
	\end{subfigure}
	
	\caption{\textbf{(a)} shows MSDE results to time step per object for \mh with 100 hybrid beliefs which we consider most accurate, while \textbf{(b)} shows \jlp results for MSDE. \textbf{(c)} is a statistical study that compares between average MSDE over all objects for \weu in green, \jlp in blue, and \mh in red. The line corresponds to MSDE expectation and the colored area to one $\sigma$ range.}
	\label{fig:Inf_Conf_MSDE}
\end{figure*}

%------------------------------------------------------------------------
\subsubsection{Planning: Single Object}\label{sec:Plan_Single}

Next, we simulate a planning scenario of a single object using \sembsp. Relative to the object, there is an area with low epistemic uncertainty and high separation between classes, represented as a blue cone in Fig.~\ref{fig:Plan_Single_GT}.  We compare between two reward functions for planning. $R_1$ is the negative of the entropy of $\lambda$ as defined in Eq.~\eqref{eq:H_def}, while $R_2$ is the entropy of $\mathcal{E}(\lambda)$ as defined in Sec.~\ref{sec:Non_uncertainty_rewards}. For a future belief $b[\lambda_{k+1}]$:
\begin{equation}
\begin{split}
	R_1 & = - H(\lambda_{k+1})\\ 
	R_2 & = - H(\mathbb{E}(\lambda^o_{k+1}))\\ 
\end{split}
\end{equation}
For both reward functions we use \mhbsp and \jlpbsp. We use only $R_2$ for \weu as $R_1$ is not applicable because it does not consider epistemic uncertainty, while $R_2$ can use the posterior class probability as $\mathbb{E}(\lambda^o_{k+1})$.
Optimally, the robot would plan to go through the high separation low uncertainty zone. We have five possible motion primitives, as represented in Fig.~\ref{fig:Plan_Motion_Primitives_1} with a vision cone of $120^\circ$ emanating from the camera. We explore the planning decision tree using Monte Carlo Tree Search with a horizon length $L=10$ at each step, then perform the action with the highest reward. The setting for the classifier model, viewing radius and angle, motion, and geometric noise are the same as in the inference simulation. We use 10 hybrid beliefs for $MH$. The trajectory length is 20 time steps. 

\begin{figure}[!htbp]
	
	\begin{subfigure}[b]{0.35\textwidth}
		\includegraphics[width=\textwidth]{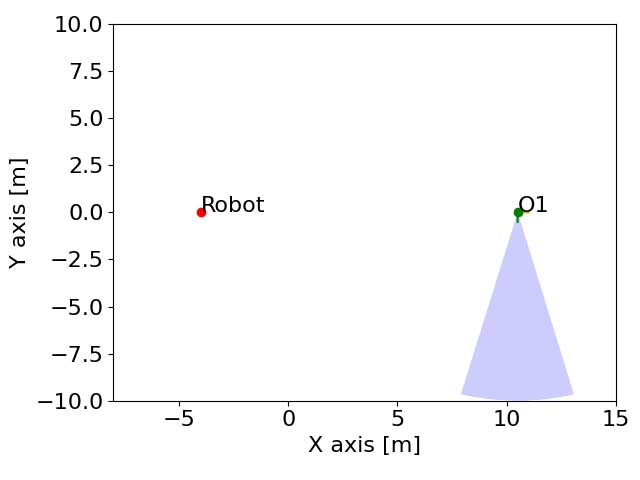}
		\caption{}
		\label{fig:Plan_Single_GT} 
	\end{subfigure}
	\begin{subfigure}[b]{0.35\textwidth}
		\includegraphics[width=\textwidth]{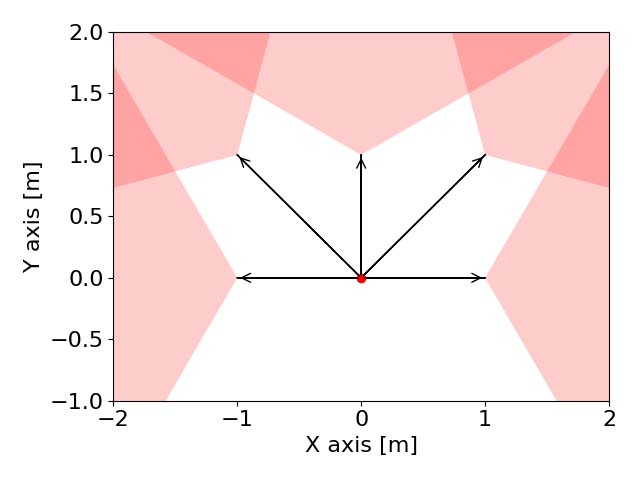}
		\caption{}
		\label{fig:Plan_Motion_Primitives_1} 
	\end{subfigure}
	\caption{
		\textbf{(a)} is the ground truth of the scenario in Sec.~\ref{sec:Plan_Fixed}. The red dot represents the robot's starting point. The green dots represent the objects' location with the corresponding object labels. The green line represents the object orientation, with the yellow line present $90^\circ$ of that orientation. The blue cones represent the observation angles in which the object are classified most accurately with the lowest epistemic uncertainty.
		\textbf{(b)} presents the five motion primitives in the scenario. The red dot represents the origin point, the black arrows the possible actions, and the blue cone is the field of view after the action.}%
	\label{fig:Plan_Single_GT_and_Prim}
\end{figure}

Fig \ref{fig:Plan_Single_Trajectories} presents the ground truth trajectories calculated by performing planning over $R_1$ and $R_2$ both for \jlpbsp and \mhbsp, and planning for $R_2$ for \weu. It is evident that the epistemic-uncertainty-aware methods seek to pass near the blue-cone area for more accurate classification with lower epistemic uncertainty. Methods that plan over $R_1$ tend to pass through the cone.

\begin{figure}[!htbp]
	
	\begin{subfigure}[b]{0.32\textwidth}
		\includegraphics[width=\textwidth]{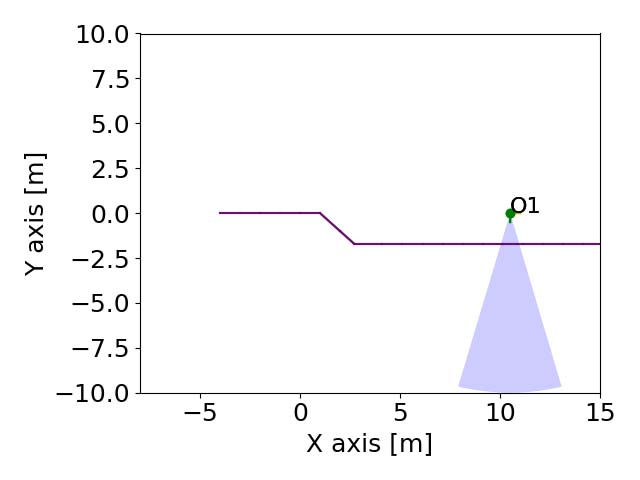}
		\caption{$R_1$}
		\label{fig:Plan_Single_Red} 
	\end{subfigure}
	\begin{subfigure}[b]{0.32\textwidth}
		\includegraphics[width=\textwidth]{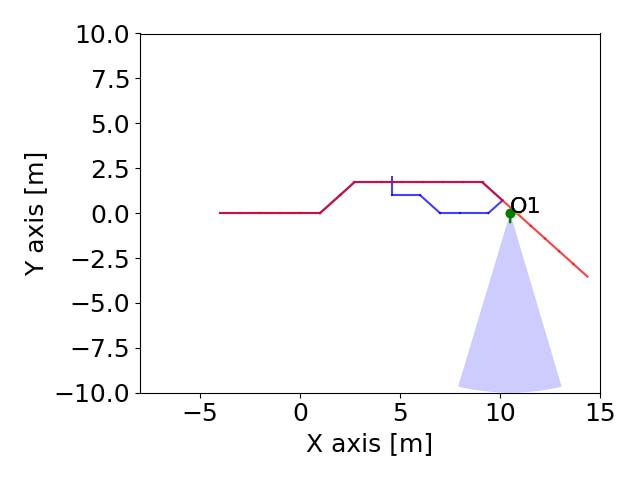}
		\caption{$R_2$}
		\label{fig:Plan_Single_Blue} 
	\end{subfigure}
	\begin{subfigure}[b]{0.32\textwidth}
		\includegraphics[width=\textwidth]{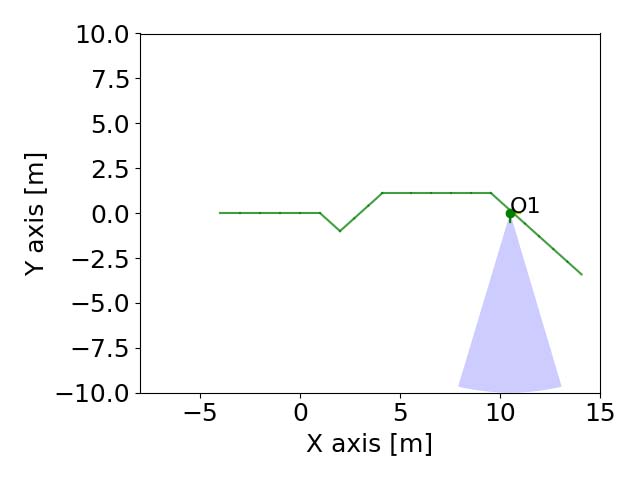}
		\caption{\weu}
		\label{fig:Plan_Single_Green} 
	\end{subfigure}
	
	\caption{This figure presents the ground truth of a planned trajectories. \textbf{(a)} for planning over $R_1$ for \mhbsp (purple) and \jlpbsp (black). \textbf{(b)} for planning over $R_2$ for \mhbsp (red) and \jlpbsp (blue). \textbf{(b)} for \weu (green). All are for the multiple object scenario. The object is shown in a green dot, with the green line representing the object orientation, with the yellow line present $90^\circ$ of that orientation. The blue cones represent the areas where observations have the lowest epistemic uncertainty.}%
	\label{fig:Plan_Single_Trajectories}
\end{figure}

The behavior presented in Fig~\ref{fig:Plan_Single_Trajectories} is reflected in Fig.~\ref{fig:Plan_Stat_Rewards} where the values of $H(\lambda_k)$ are shown as a function of time during inference after the corresponding action has been performed. The values of $H(\lambda_k)$ correlate to the epistemic uncertainty. Evidently, planning over $R_1$ yields lower epistemic uncertainty for both \mhbsp (Fig.~\ref{fig:Plan_Single_MH_R1}) and \jlpbsp (Fig.~\ref{fig:Plan_Single_JLP_R1}).

\begin{figure}[!htbp]
	
	\begin{subfigure}[b]{0.35\textwidth}
		\includegraphics[width=\textwidth]{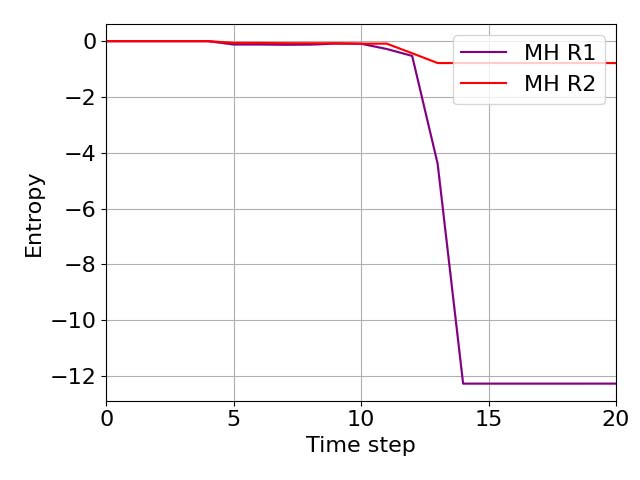}
		\caption{}
		\label{fig:Plan_Single_MH_R1} 
	\end{subfigure}
	\begin{subfigure}[b]{0.35\textwidth}
		\includegraphics[width=\textwidth]{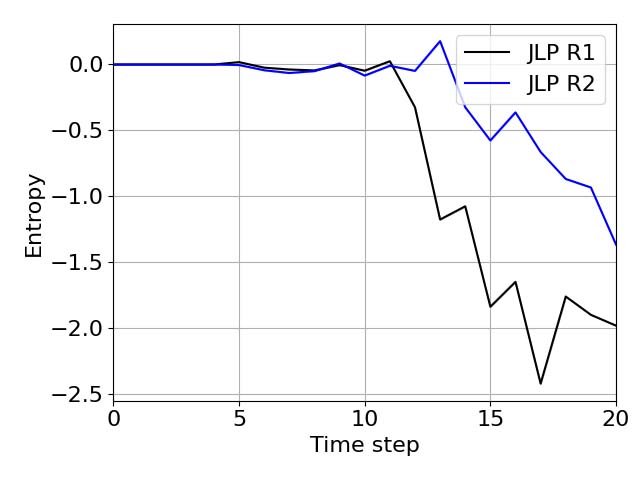}
		\caption{}
		\label{fig:Plan_Single_JLP_R1} 
	\end{subfigure}
	
	\caption{$H(\lambda_{20})$ values for \mh \textbf{(a)} and \jlp \textbf{(b)} as a function of the time step. In \textbf{(a)}, the purple and red plots represent $R_1$ and $R_2$ respectively, and similarly in $\textbf{(a)}$ , the black and blue plots represent $R_1$ and $R_2$ respectively.}%
	\label{fig:Plan_Stat_Rewards}
\end{figure}

Fig.~\ref{fig:Plan_Stat_MSDE_Single} presents MSDE results for all the methods, split into results for \mh in Fig.~\ref{fig:Plan_Stat_MSDE_R1_Single} and for \jlp in Fig.~\ref{fig:Plan_Stat_MSDE_R2_Single}, both showing comparison to \weu in the green plot. \weu performs significantly worse in this setting than all the other methods. When comparing planning over $R_1$ and $R_2$, the first presents better results than the latter for both \jlp and \mh. 

\begin{figure}[!htbp]
	
	\begin{subfigure}[b]{0.35\textwidth}
		\includegraphics[width=\textwidth]{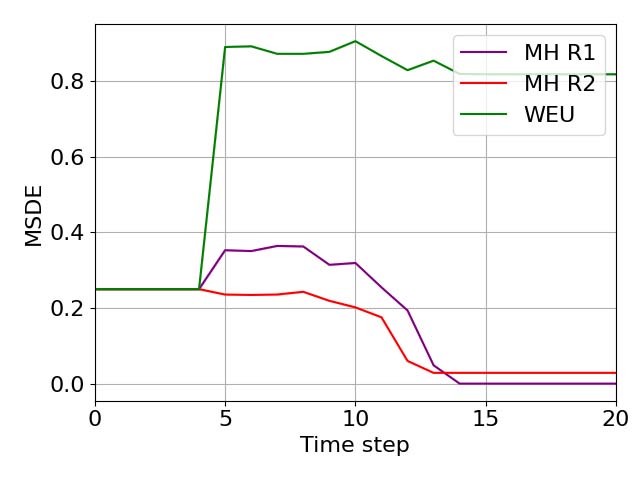}
		\caption{}
		\label{fig:Plan_Stat_MSDE_R1_Single} 
	\end{subfigure}
	\begin{subfigure}[b]{0.35\textwidth}
		\includegraphics[width=\textwidth]{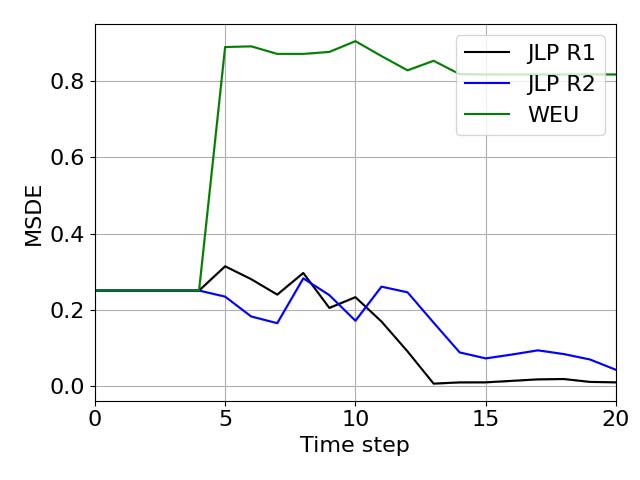}
		\caption{}
		\label{fig:Plan_Stat_MSDE_R2_Single} 
	\end{subfigure}
	\caption{MSDE values as a function of time step for \mh \textbf{(a)} and \jlp \textbf{(b)}, compared to \weu. In \textbf{(a)}, the purple and red plots represent $R_1$ and $R_2$ respectively, and similarly in $\textbf{(a)}$ , the black and blue plots represent $R_1$ and $R_2$ respectively. \weu is represented in both figure with a green plot.}%
	\label{fig:Plan_Stat_MSDE_Single}
\end{figure}

Fig.~\ref{fig:Bars_Plan_Single_Combined} presents the results at time $k=20$ for all methods as a bar graph with error margins for the ground truth class. We can compare the entropy from Fig.~\ref{fig:Plan_Single_Trajectories} and MSDE from Fig.~\ref{fig:Plan_Stat_MSDE_Single} with the bar graphs, with lower entropy values resulting in smaller posterior epistemic uncertainty. Similarly, lower MSDE values result in a more "certain" result in the bar graph, as we can see for methods that plan over $R_1$.

\begin{figure}[!htbp]
	\includegraphics[width=0.45\textwidth]{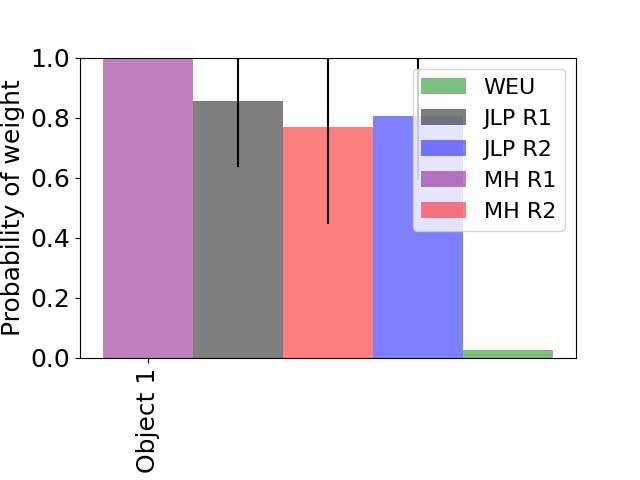}
	\caption{Probability of the object being class $c=2$ (ground truth) for our methods at time $k=20$. We compare planning over $R_1$ and $R_2$, \jlpbsp, \mhbsp, and \weu. Purple and red for $R_1$ and $R_2$ respectively using \mhbsp, black and blue for using  for $R_1$ and $R_2$ respectively using \jlpbsp, and green for \weu. The one $\sigma$ deviation is represented via the black line at each relevant bar, and represents the posterior model uncertainty.}% 
	\label{fig:Bars_Plan_Single_Combined}
\end{figure}

In Fig.~\ref{fig:Plan_Time_Single} we perform computation time comparisons between \weu, \mhbsp and \jlpbsp. The significant advantage in computational time for \jlpbsp is evident against \mhbsp, and while \weu is lower still, \jlpbsp also opens the possibility of reasoning about epistemic uncertainty.

\begin{figure}[!htbp]
	\includegraphics[width=0.5\textwidth]{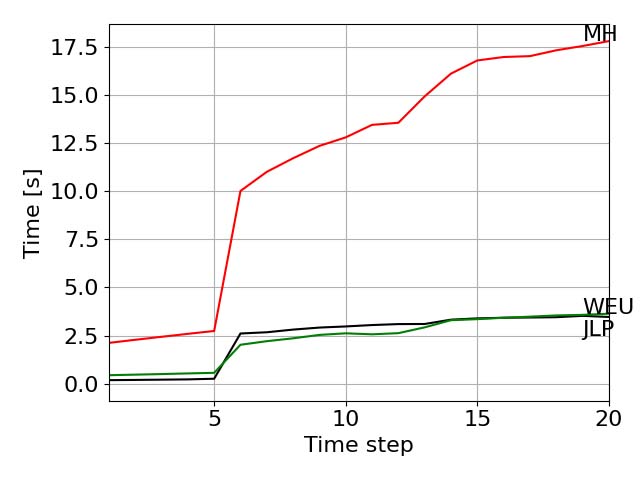}
	\caption{This figure compares run-time per inference step between realizations of \mh with 5 hybrid beliefs in red,  \jlp in black, and \weu in green.}% 
	\label{fig:Plan_Time_Single}
\end{figure}

%------------------------------------------------------------------------
\subsubsection{Planning: Single Run, Multiple Objects}\label{sec:Plan_Fixed}

We simulate a planning scenario of 9 objects, where they formed in a way that there are 3 zones of low uncertainty high expected classification scores, as shown in Fig.~\ref{fig:Plan_Fixed_GT}. Reward function $R_1$ is now modified to include a cap of $R_{max}=5$ per object to to encourage exploration and classification of all objects in the scene. We modify $R_1$ and $R_2$ to include all objects by summing the entropy of each marginal $\lambda_{k+1}$ per object. All in all, the explicit expression for the cost functions for a future $b[\lambda_k]$ is:
\begin{equation}\label{eq:Reward_Functions_Sim}
\begin{split}
	R_1 & = \sum_o \min(-H(\lambda^o_{k+1}), R_{max})\\ 
	R_2 & = - \sum_o H(\mathbb{E}(\lambda^o_{k+1}))\\ 
\end{split}
\end{equation}

Optimally, the robot would plan to go through all three zones to achieve accurate classification of all objects. As in Sec.~\ref{sec:Plan_Single}, We have five possible motion primitive, as presented in Fig.~\ref{fig:Plan_Motion_Primitives} with a cone of vision of $120^\circ$ emanating from the camera. We use MCTS for a horizon $L=10$. We use 10 hybrid beliefs for \mhbsp. The trajectory length is 20 time steps. As in the previous section, we plan for $R_1$ with \mhbsp and \jlpbsp, and for $R_2$ with \mhbsp, \jlpbsp, and \weu.

\begin{figure}[!htbp]
	
	\begin{subfigure}[b]{0.35\textwidth}
		\includegraphics[width=\textwidth]{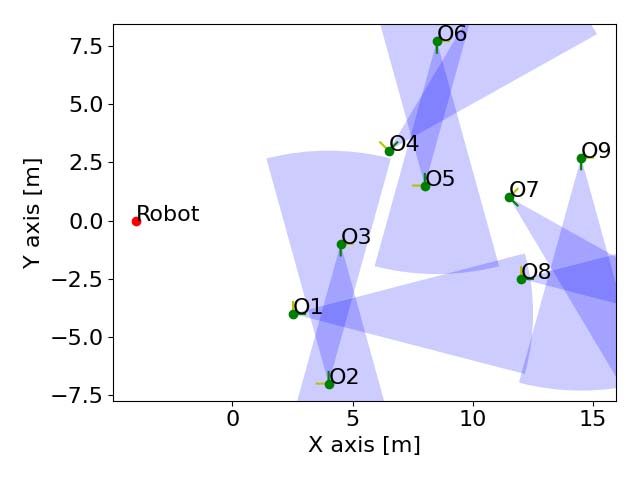}
		\caption{}
		\label{fig:Plan_Fixed_GT} 
	\end{subfigure}
	\begin{subfigure}[b]{0.35\textwidth}
		\includegraphics[width=\textwidth]{Plan_Motion_Primitives.jpg}
		\caption{}
		\label{fig:Plan_Motion_Primitives} 
	\end{subfigure}
	\caption{
	\textbf{(a)} is the ground truth of the scenario in Sec.~\ref{sec:Plan_Fixed}. The red dot represents the robot's starting point. The green dots represent the objects' location with the corresponding object labels. The green line represents the object orientation, with the yellow line present $90^\circ$ of that orientation. The blue cones represent the observation angles in which the objects are classified most accurately with the lowest epistemic uncertainty, with 3 overlapping areas as low epistemic uncertainty areas.
	\textbf{(b)} presents the five motion primitives in the scenario. The red dot represents the origin point, the black arrows the possible actions, and the blue cone is the field of view after the action.}%
	\label{fig:Plan_GT_and_Prim}
\end{figure}

Fig.~\ref{fig:Plan_Fixed_Trajectories} presents the trajectories created for all the methods. The ones that plan over $R_1$ create trajectories pass closer to the overlapping low uncertainty areas from Fig.~\ref{fig:Plan_Fixed_GT}, resulting eventually in more accurate classification compared to planning over $R_2$ for all methods, especially \weu.

\begin{figure}[!htbp]
	
	\begin{subfigure}[b]{0.32\textwidth}
		\includegraphics[width=\textwidth]{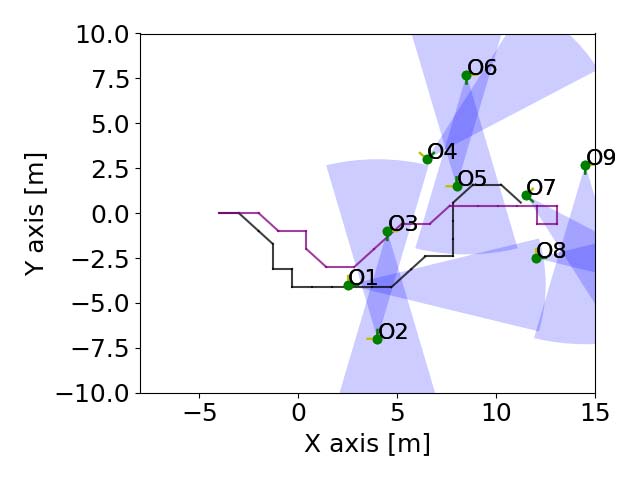}
		\caption{$R_1$}
		\label{fig:Plan_Fixed_Purple_Black} 
	\end{subfigure}
	\begin{subfigure}[b]{0.32\textwidth}
		\includegraphics[width=\textwidth]{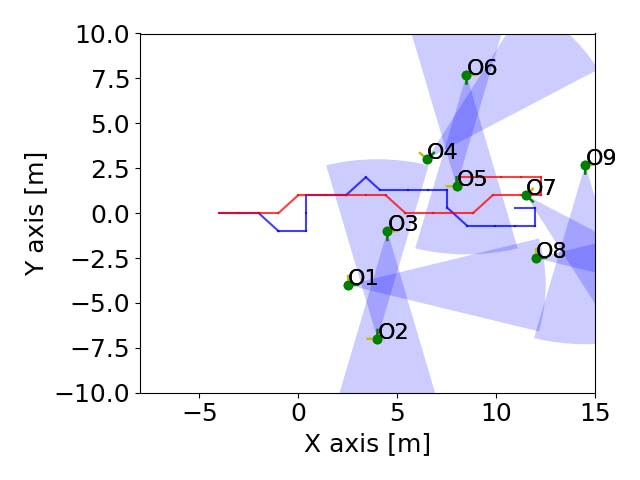}
		\caption{$R_2$}
		\label{fig:Plan_Fixed_Red_Blue} 
	\end{subfigure}
	\begin{subfigure}[b]{0.32\textwidth}
		\includegraphics[width=\textwidth]{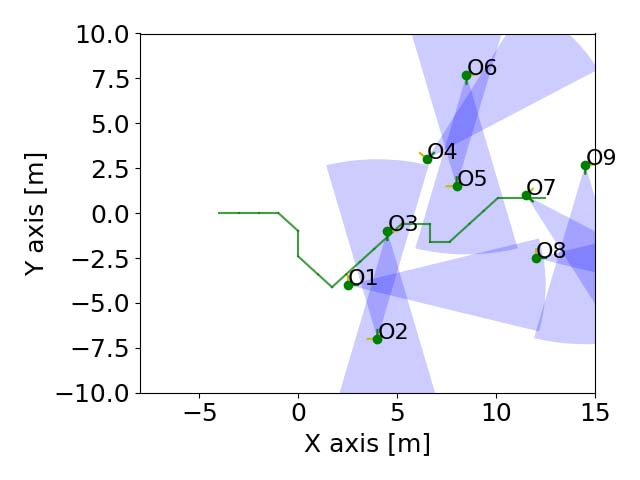}
		\caption{\weu}
		\label{fig:Plan_Fixed_Green} 
	\end{subfigure}
	
	\caption{This figure presents the ground truth of a planned trajectories. \textbf{(a)} for planning over $R_1$ for \mhbsp (purple) and \jlpbsp (black). \textbf{(b)} for planning over $R_2$ for \mhbsp (red) and \jlpbsp (blue). \textbf{(b)} for \weu (green). All are for the multiple object scenario. The object is shown in a green dot, with the green line representing the object orientation, with the yellow line present $90^\circ$ of that orientation. The blue cones represent the areas where observations have the lowest epistemic uncertainty.}%
	\label{fig:Plan_Fixed_Trajectories}
\end{figure}

Fig.~\ref{fig:Plan_Fixed_Rewards}  presents a comparison for $H(\lambda_k)$ at the inference phase, when comparing planning over $R_1$, and $R_2$ for \mhbsp in Fig.~\ref{fig:Plan_Fixed_MH_R1} and \jlpbsp in Fig.~\ref{fig:Plan_Fixed_JLP_R1}. In both figures planning over $R_1$ yields lower entropy, correlating to lower epistemic uncertainty. The effect is more noticeable for \mhbsp than \jlpbsp.

\begin{figure}[!htbp]
	
	\begin{subfigure}[b]{0.35\textwidth}
		\includegraphics[width=\textwidth]{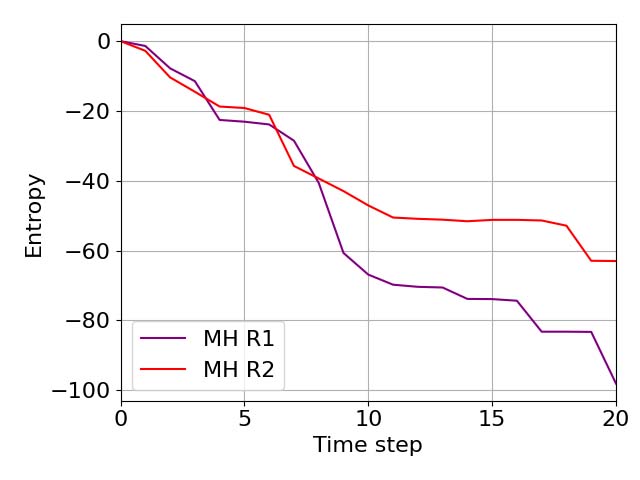}
		\caption{}
		\label{fig:Plan_Fixed_MH_R1} 
	\end{subfigure}
	\begin{subfigure}[b]{0.35\textwidth}
		\includegraphics[width=\textwidth]{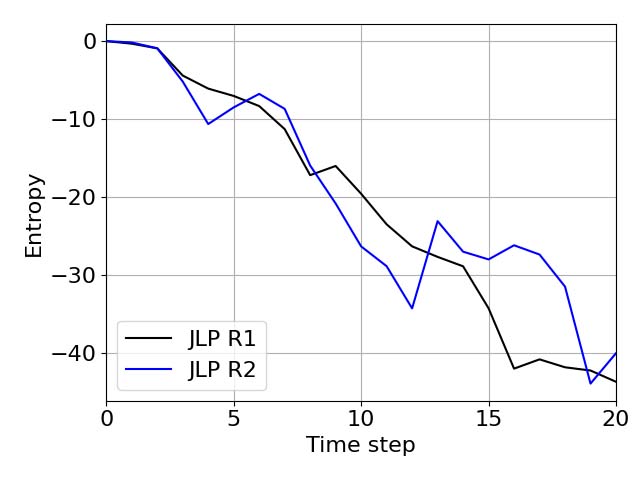}
		\caption{}
		\label{fig:Plan_Fixed_JLP_R1} 
	\end{subfigure}
	
	\caption{$H(\lambda_{20})$ values for \mhbsp \textbf{(a)} and \jlpbsp \textbf{(b)} as a function of the time step. In \textbf{(a)}, the purple and red plots represent $R_1$ and $R_2$ respectively, and similarly in $\textbf{(a)}$ , the black and blue plots represent $R_1$ and $R_2$ respectively.}%
	\label{fig:Plan_Fixed_Rewards}
\end{figure}

Fig.~\ref{fig:Plan_Fixed_MSDE} presents MSDE results for all the methods, split into results for \mhbsp in Fig.~\ref{fig:Plan_Fixed_MSDE_R1}  and for \jlpbsp in Fig.~\ref{fig:Plan_Fixed_MSDE_R2}, both showing comparison to \weu in the green plot. As in Sec~\ref{sec:Plan_Single}, planning over $R_1$ slightly outperforms planning over $R_2$, with \weu lagging far behind.

\begin{figure}[!htbp]
	
	\begin{subfigure}[b]{0.35\textwidth}
		\includegraphics[width=\textwidth]{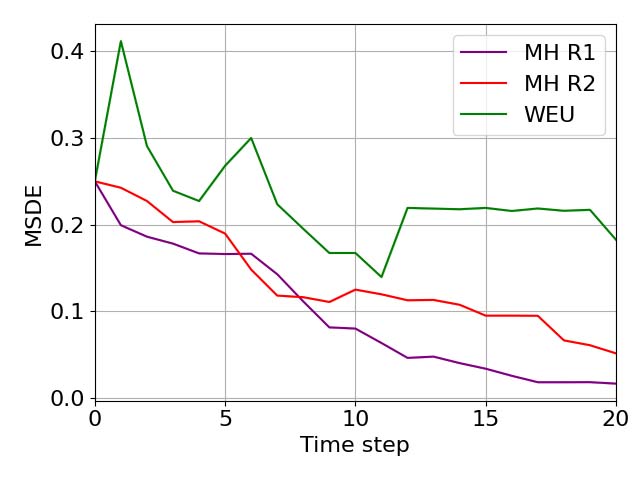}
		\caption{}
		\label{fig:Plan_Fixed_MSDE_R1} 
	\end{subfigure}
	\begin{subfigure}[b]{0.35\textwidth}
		\includegraphics[width=\textwidth]{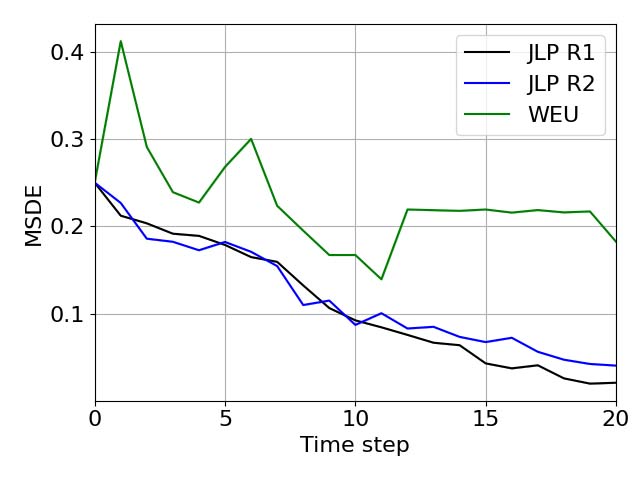}
		\caption{}
		\label{fig:Plan_Fixed_MSDE_R2} 
	\end{subfigure}
	\caption{Single-run study for multiple object scenario study for MSDE comparing planning over $R_1$ and $R_2$ for \mhbsp (\textbf{(a)}) and \jlpbsp (\textbf{(b)}), and \weu. }%
	\label{fig:Plan_Fixed_MSDE}
\end{figure}

Fig.~\ref{fig:Bars_Plan_Fixed_Combined} presents a bar-graph with error representation of the classification results at time $k=20$ for all objects. In general, planning over $R_1$ tend to have more accurate classification compared to planning over $R_2$ with lower uncertainty. On the other hand, \weu tends to go towards extremes of class probabilities 0 or 1, whether it is the correct class or not.

\begin{figure*}[!htbp]
	\includegraphics[width=0.9\textwidth]{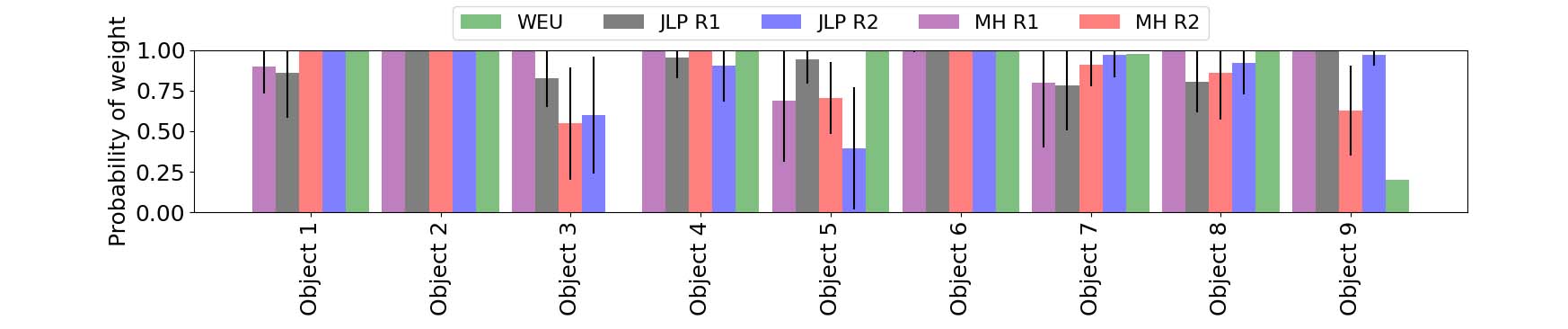}
	\caption{Probability of the objects being ground truth class for our methods at time $k=20$ for all objects. We compare planning over $R_1$ and $R_2$, \jlpbsp, \mhbsp, and \weu. Purple and red for $R_1$ and $R_2$ respectively using \mh, black and blue for using for $R_1$ and $R_2$ respectively using \jlpbsp, and green for \weu. The one $\sigma$ deviation is represented via the black line at each relevant bar, and represents the posterior model uncertainty.}% 
	\label{fig:Bars_Plan_Fixed_Combined}
\end{figure*}

In Fig.~\ref{fig:Plan_Fixed_Time} we present the computational time per step for all our approaches using $R_2$ reward function. For \mhbsp, we used 10 hybrid beliefs. This figure shows that \jlpbsp is slightly faster than \weu while also reasoning about posterior epistemic uncertainty, because the number of states in \jlpbsp scales linearly with the number of objects and candidate classes, as opposed to exponentially with \weu and \mhbsp. As in Sec.~\ref{sec:Plan_Single}, \jlpbsp is significantly more computationally efficient than \mhbsp.

\begin{figure}[!htbp]
	\includegraphics[width=0.45\textwidth]{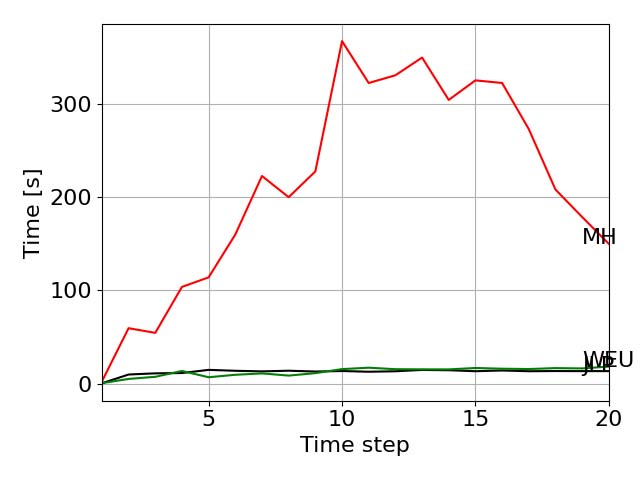}
	\caption{This figure compares run-time per inference step between realizations of \mhbsp with 5 hybrid beliefs in red,  \jlpbsp in black, and \weu in green.}% 
	\label{fig:Plan_Fixed_Time}
\end{figure}

\subsubsection{Planning: Statistical Study}\label{sec:Plan_Stat}

For the statistical study, we randomly corrupt geometric and semantic measurements with noise. We use the scenario from Sec.~\ref{sec:Plan_Fixed}, using $R_1$, and $R_2$ with \jlpbsp, and compare it to \weu. We perform 10 iteration, each with a planning horizon $L=10$, and present results for entropy and MSDE.
%For \mhbsp, 10 hybrid beliefs were used. 
Each run was performed to 20 time-steps.

Fig.~\ref{fig:Plan_Stat_Graphs} presents the statistical results for the sum of the entropy in Fig.~\ref{fig:Plan_Stat_JLP_R1}, and the MSDE results in Fig.~\ref{fig:Plan_Stat_MSDE_JLP}, with the colored areas representing one $\sigma$ deviation. All in all, planning over $R_1$ performs better over planning over $R_2$ for \jlpbsp, with lower entropy and MSDE. In addition, MSDE results compared to \weu are vastily superior for epistemic-uncertainty-aware methods.

\begin{figure}[!htbp]
	
%	\begin{subfigure}[b]{0.32\textwidth}
%		\includegraphics[width=\textwidth]{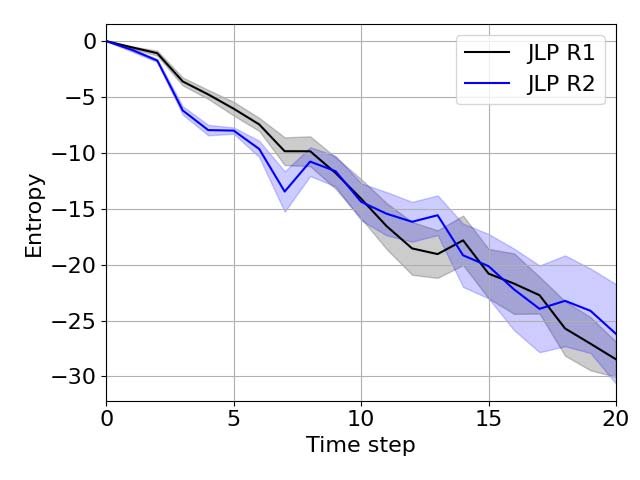}
%		\caption{}
%		\label{fig:Plan_Stat_JLP_R1} 
%	\end{subfigure}
	%
	\begin{subfigure}[b]{0.35\textwidth}
		\includegraphics[width=\textwidth]{Plan_Stat_JLP_R1.jpg}
		\caption{}
		\label{fig:Plan_Stat_JLP_R1} 
	\end{subfigure}
	\begin{subfigure}[b]{0.35\textwidth}
		\includegraphics[width=\textwidth]{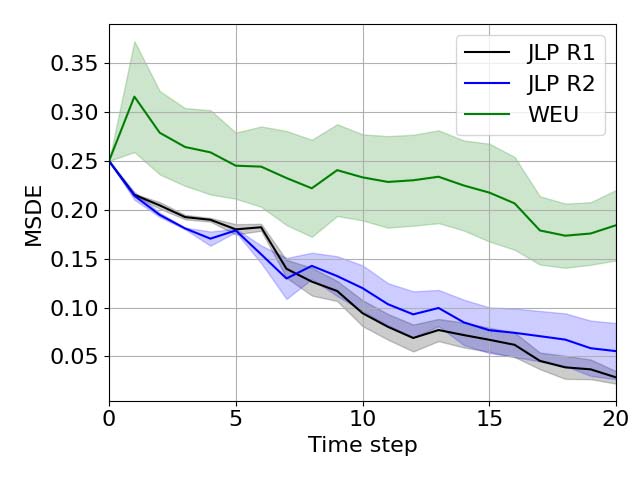}
		\caption{}
		\label{fig:Plan_Stat_MSDE_JLP} 
	\end{subfigure}
	\caption{Statistical study for the scenario in Sec.~\ref{sec:Plan_Stat}
		\textbf{(a)} Presents an comparison for the sum of entropy over all objects between trajectories for $R_1$ and $R_2$ as a function of time step for \mhbsp.
		\textbf{(b)} presents an MSDE comparison between \mhbsp, \jlpbsp, and \weu as a function of time step. In both, the line represent the statistical expectation, while the colored area represents a one $\sigma$ deviation.}%
	\label{fig:Plan_Stat_Graphs}
\end{figure}

%------------------------------------------------------------------
%-------------------------------------------------------------------
\subsection{Experiment}\label{sec:Exp}

\subsubsection{Setup}\label{sec:Exp_setup}

For the experiment, we consider a myopic planning scenario in a semantic SLAM setting, using Active Vision Dataset (AVD) \cite{Ammirato17icra} Home 005 with example images presented in Fig.~\ref{fig:AVD_set}. In this scenario, the objects are grouped to two groups, one on a table near the window back-lit by sunlight as seen in Fig.~\ref{fig:AVD_1}, and another on the kitchen counter seen in Fig.~\ref{fig:AVD_2}. We perform planning for a 20 time step trajectory, at each step performing myopic planning. We aim to compare between \jlpbsp and \weu for classification accuracy using MSDE \eqref{eq:MSDE}, differential entropy representing epistemic uncertainty, and computational time. The reward functions $R_1$ and $R_2$ are identical to those presented in Eq.~\eqref{eq:Reward_Functions_Sim}.

\begin{figure}[!htbp]
	
	\begin{subfigure}[b]{0.32\textwidth}
		\includegraphics[width=\textwidth]{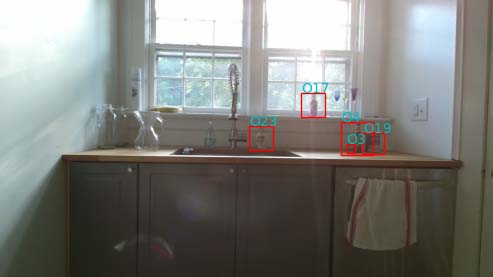}
		\caption{}
		\label{fig:AVD_1} 
	\end{subfigure}
	\begin{subfigure}[b]{0.32\textwidth}
		\includegraphics[width=\textwidth]{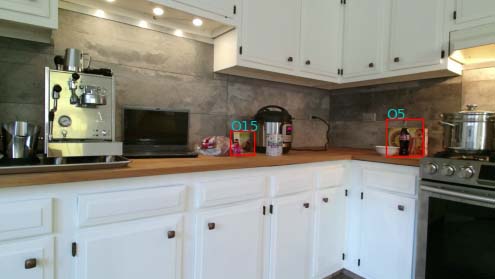}
		\caption{}
		\label{fig:AVD_2} 
	\end{subfigure}
	\begin{subfigure}[b]{0.32\textwidth}
		\includegraphics[width=\textwidth]{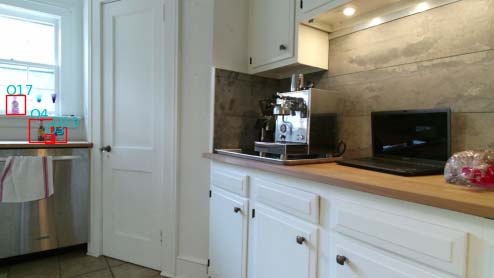}
		\caption{}
		\label{fig:AVD_3} 
	\end{subfigure}
	\caption{Example images of the Active Vision Dataset, home 005. The red boxes represent the bounding boxes for the objects, and the notation $Ox$ represent the $x$'th object.}%
	\label{fig:AVD_set}
\end{figure}

We consider five candidate classes: "Packet", "Book Jacket", "Pop Bottle", "Digital Clock", and "Soap Dispenser". For each class, we trained classifier uncertainty models using images from BigBIRD dataset \cite{Singh14icra}, with example images presented in Fig.~\ref{fig:BB_set}. For classification, we used VGG convolutional neural network \cite{Simonyan14arxiv} with dropout activated during test time. The $R_1$ upper limit $R_{max}$ per object is 500, as the increase number of objects increases the scale of $R_1$ values; Recall Lemma \ref{lemma:LG_entropy_exact}, the entropy depends on the covariance of $\loggamma$ via $H(\loglambda)$.

\begin{figure}[!htbp]
	
	\centering
	
	\begin{subfigure}[b]{0.12\textwidth}
		\includegraphics[width=\textwidth]{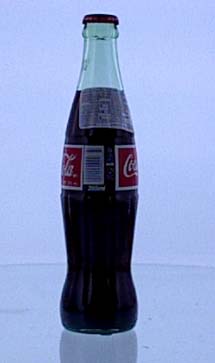}
		\caption{}
		\label{fig:BB_1} 
	\end{subfigure}
	\begin{subfigure}[b]{0.12\textwidth}
		\includegraphics[width=\textwidth]{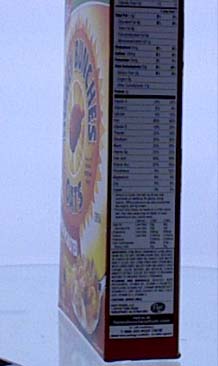}
		\caption{}
		\label{fig:BB_2} 
	\end{subfigure}
	\begin{subfigure}[b]{0.12\textwidth}
		\includegraphics[width=\textwidth]{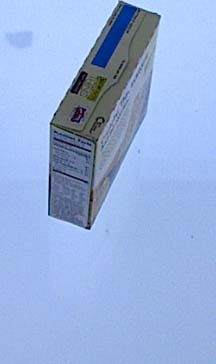}
		\caption{}
		\label{fig:BB_3} 
	\end{subfigure}
	\begin{subfigure}[b]{0.12\textwidth}
		\includegraphics[width=\textwidth]{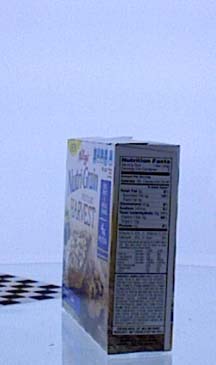}
		\caption{}
		\label{fig:BB_4} 
	\end{subfigure}
	\caption{Example images of the BigBIRD dataset for training the classifier models. \textbf{(a)} is an example for "pop bottle" class, while the rest are examples for "packet".}%
	\label{fig:BB_set}
\end{figure}

The classifier models were trained via PyTorch on fully connected networks. Recall Eq.~\eqref{eq:Classifier_Uncertainty_Model}, we train $h_c(x^{rel})$ and $\Sigma_c(x^{rel})$ from a dataset $D_c=\{ x^{rel} , \{ \loggamma \} \}$ per object, where $x^{rel} = [ \psi , \theta ]$ is parametrized by relative yaw angle $\psi$ and relative pitch angle $\theta$. $h_c$ and $\Sigma_c$ are represented by separate neural networks, up to a total of $2m$ networks. As seen in Sec.~\ref{sec:JLP}, all $\Sigma_{c=i} \equiv \Sigma_{c=j}$ for $i,j=1,...,m-1$ for the JLP factor to be Gaussian. This constraint limits the expressibility of $\Sigma_c$, thus not accurately representing the epistemic uncertainty from certain viewpoints of objects. As such, instead of enforcing a hard constraint on all $\Sigma_c$, we train the classifier uncertainty model with a loss function that imposes a penalty if $\Sigma_c$ for different $c$ are not similar, enforcing a soft constraint.

The loss function $L_h$ for the $h_c$ network is mean square error (MSE):
\begin{equation}
	L_h(h_c, \{\loggamma\}) = MSE(h_c, \{\loggamma\}) = \sum_{i=1}^{m} \left( h_c^i - \mathbb{E}(\loggamma^i) \right)^2,
\end{equation}
where $h_c^i$ is the $i$'th element of $h_c$. 
The loss function $L_{\Sigma}$ for the $\Sigma_c$ uses MSE over the covariance matrix elements, and adds a Forbenius norm term that acts as the soft constraint that makes the values of $\Sigma_c$ closer:
\begin{equation}
	L_{\Sigma}(h_c, \Sigma_c, \{\loggamma\}) = MSE(\Sigma_c, \Sigma(\loggamma)) 
	+ \kappa \cdot F_N(h_c, \Sigma_c) 
\end{equation}
where the MSE for the above loss function is defined:
\begin{equation}
	MSE(\Sigma_c, \Sigma(\loggamma)) = \frac{1}{(m-1)^2} \sum_{i=1}^{m} \sum_{j=1}^{m} ([\Sigma_c]_{ij} - [\Sigma(\loggamma)]_{ij})^2,
\end{equation}
$F_N(\cdot)$ is the Forbenius Norm, defined:
\begin{equation}
	F_N(\Sigma_c)  = Tr \left( (\Sigma^{-1}_{c=i} - \Sigma^{-1}_{c=m}) \cdot
	(\Sigma^{-1}_{c=i} - \Sigma^{-1}_{c=m})^T \right),
\end{equation}
and $\kappa$ is a positive constant. In our case, $\kappa = 0.005$.

\vspace{0.3cm}
\subsubsection{Results}

Fig.~\ref{fig:AVD_Paths} presents the paths created by the planning session. The path for planning over $R_1$ focuses on the object group on the kitchen counter, while the others focus more on the object on the table by the window. This can be explained by poorer visibility of the objects near the window, induced by the sunlight, therefore inducing higher epistemic uncertainty than the objects on the counter.

\begin{figure}[!htbp]
	
	\begin{subfigure}[b]{0.32\textwidth}
		\includegraphics[width=\textwidth]{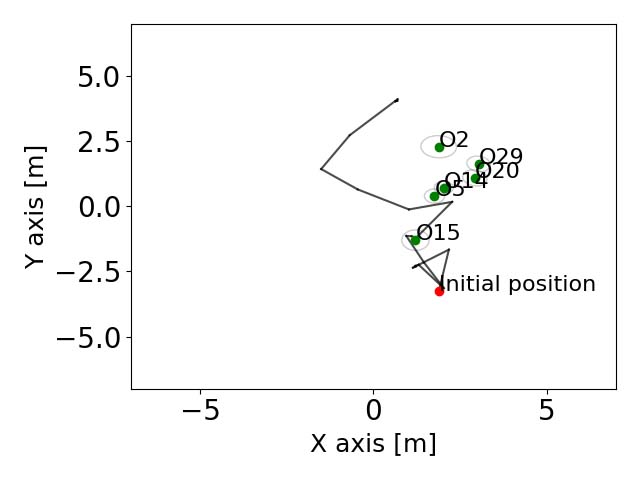}
		\caption{$R_1$}
		\label{fig:AVD_Black} 
	\end{subfigure}
	\begin{subfigure}[b]{0.32\textwidth}
		\includegraphics[width=\textwidth]{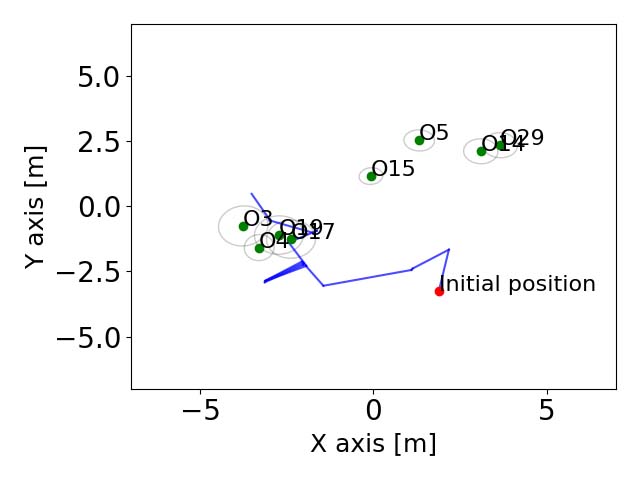}
		\caption{$R_2$}
		\label{fig:AVD_Blue} 
	\end{subfigure}
	\begin{subfigure}[b]{0.32\textwidth}
		\includegraphics[width=\textwidth]{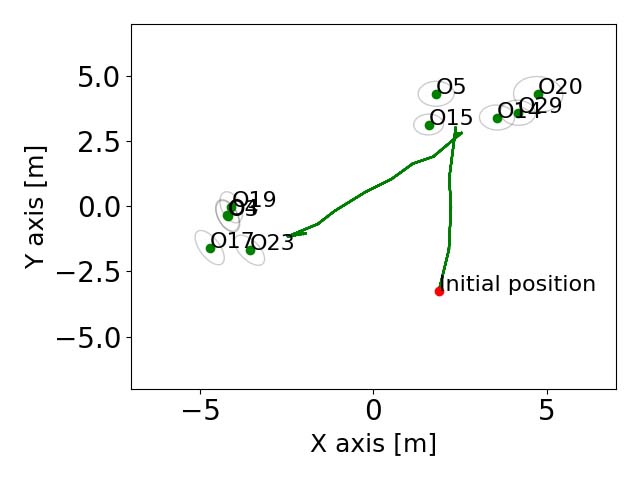}
		\caption{\weu}
		\label{fig:AVD_Green} 
	\end{subfigure}
	
	\caption{This figure presents the ground truth of a planned trajectories with object pose estimations. \textbf{(a)} for planning over $R_1$ for \jlpbsp in black. \textbf{(b)} for planning over $R_2$ for \jlpbsp in blue. \textbf{(c)} for \weu (green). All for the AVD scenario. The object estimation is shown in a green dot with corresponding estimation covariance of $3\sigma$ in gray. The red dots represent the starting position of each trajectory.}%
	\label{fig:AVD_Paths}
\end{figure}

The results of those trajectories chosen can be seen in Fig.~\ref{fig:AVD_Graphs}, where the entropy and MSDE results are presented. In Fig.~\ref{fig:AVD_JLP_R1} the lower epistemic uncertainty for planning with $R_2$ can be evident. In addition, the MSDE comparison in Fig.~\ref{fig:AVD_MSDE_JLP} significantly favors planning over $R_1$ over $R_2$ and especially compared to \weu, with epistemic-uncertainty-aware planning outperforms both.

\begin{figure}[!htbp]

	\begin{subfigure}[b]{0.35\textwidth}
		\includegraphics[width=\textwidth]{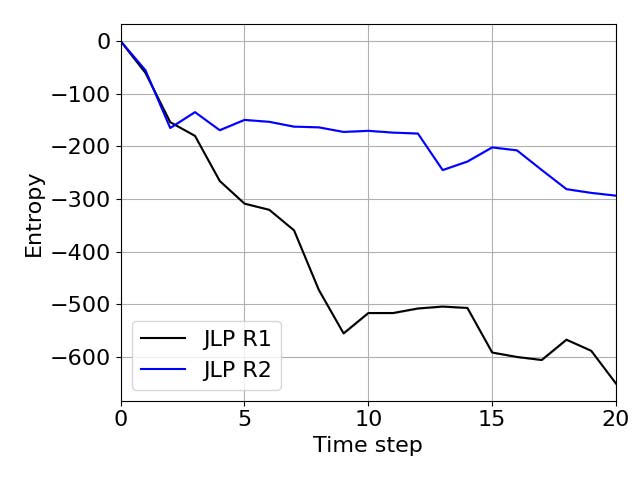}
		\caption{}
		\label{fig:AVD_JLP_R1} 
	\end{subfigure}
	\begin{subfigure}[b]{0.35\textwidth}
		\includegraphics[width=\textwidth]{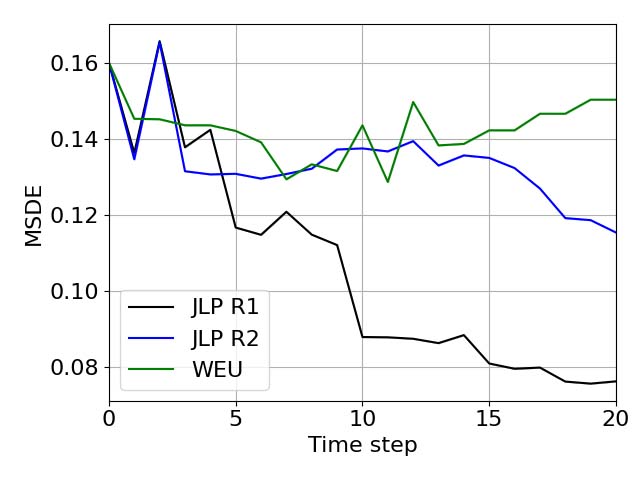}
		\caption{}
		\label{fig:AVD_MSDE_JLP} 
	\end{subfigure}
	\caption{Experimental results for the scenario in Sec.~\ref{sec:Exp_setup}
		\textbf{(a)} Presents an comparison for the sum of entropy over all objects between trajectories for $R_1$ and $R_2$ as a function of time step for \jlpbsp.
		\textbf{(b)} presents an MSDE comparison between \jlpbsp, and \weu as a function of time step.}%
	\label{fig:AVD_Graphs}
\end{figure}

Fig.~\ref{fig:AVD_Bars} shows the class probability of the ground truth class for all the objects for time-step $k=20$. While both \jlpbsp with $R_2$ and \weu observe an object more as the group near the window contains more objects, the objects that \jlpbsp with $R_1$ observes are classified more accurately.

\begin{figure*}[!htbp]
	\includegraphics[width=0.9\textwidth]{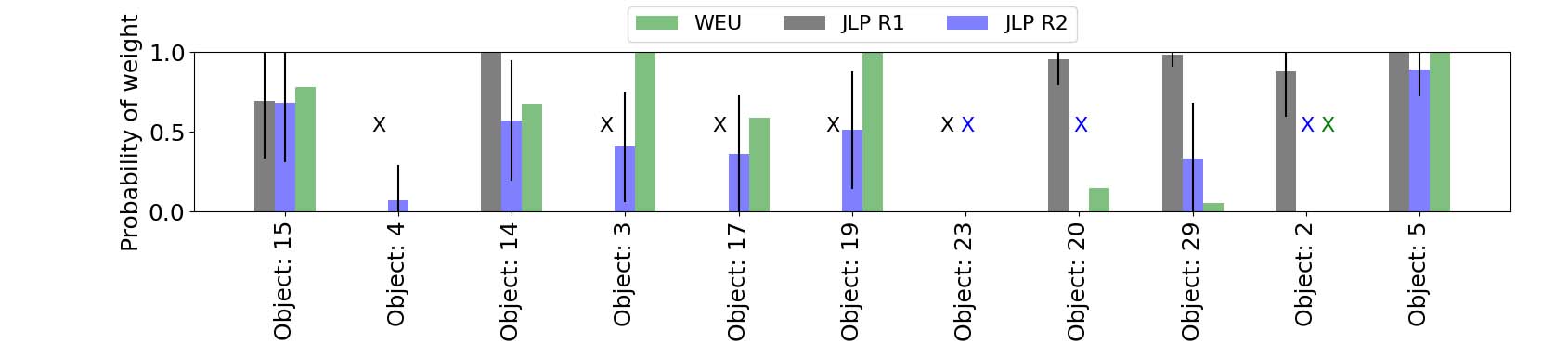}
	\caption{Probability of the objects being ground truth class for our methods at time $k=20$ for all objects. We compare planning over $R_1$ and $R_2$, \jlpbsp,and \weu. Black and blue for using for $R_1$ and $R_2$ respectively using \jlpbsp, and green for \weu. The one $\sigma$ deviation is represented via the black line at each relevant bar, and represents the posterior model uncertainty. The colored X marks represent that object wasn't observed by the corresponding method.}% 
	\label{fig:AVD_Bars}
\end{figure*}

Fig.~\ref{fig:AVD_Time} presents a computational time comparison between \jlpbsp and \weu. The figure shows a significant advantage for \jlpbsp over \weu, as this time the number of candidate classes is 5, instead of 2 in the simulation. \weu computational time per step drops with time steps as some class realization are pruned. As evident from the figure, \jlpbsp offers computational efficiency greater than \weu, while also opening access to model uncertainty, both for inference and planning.

\begin{figure}[!htbp]
	\includegraphics[width=0.45\textwidth]{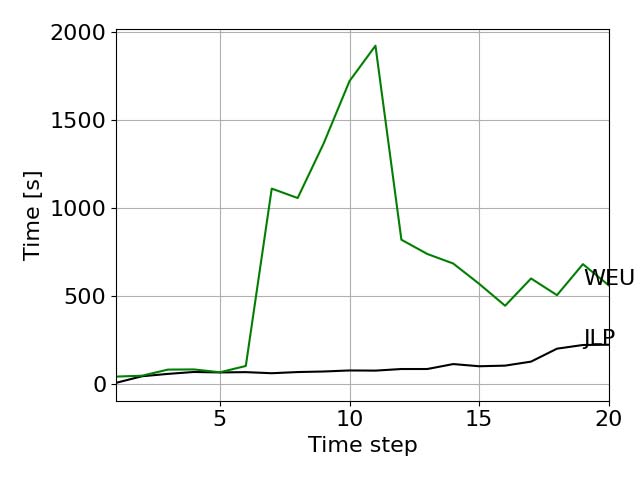}
	\caption{This figure compares run-time per inference step between realizations of \jlpbsp in black, and \weu in green for the AVD scenario.}% 
	\label{fig:AVD_Time}
\end{figure}

	%This work was partially supported by the Technion Autonomous Systems Program (TASP).
	
	% ======================
	\section{Conclusions}
	\label{sec:conclusions}
	% !TeX root = Paper Main.tex

We presented a unified semantic SLAM framework for inference and BSP that maintains a joint belief over robot and objects' poses and posterior class probability, addressing viewpoint-based classification aliasing and reasoning about epistemic uncertainty of the classifier. In particular, two approaches were introduced; Firstly, we introduced \mh which maintains simultaneously multiple hybrid beliefs over poses and object classes, with semantic class probability vector measurements varying with different predetermined weights. Secondly, we introduced \jlp, which is a more computationally efficient alternative that uses the novel \jlp factor. Furthermore, we introduced \mhbsp and \jlpbsp as the formulation of both approaches to a BSP framework, and introduced a novel information-theoretic reward to plan over future posterior epistemic uncertainty, improving classification performance over methods and reward functions that do not consider epistemic uncertainty. Both approaches leverage the coupling between relative poses and object classes via a viewpoint dependent classifier uncertainty model, which also allows us to predict future epistemic uncertainty for planning. In simulation and experiment we showed that reasoning about epistemic uncertainty improves classification performance both in inference and planning.

	\appendix
	\section{Appendix}
	\label{sec:Appendix} 
	\subsection{Proof of Lemma \ref{lemma:LG_entropy_exact}}\label{sec:Proof2}

The reverse logit transformation from $\loglambda$ to $\lambda$ is given by:
\begin{equation}\label{eq:app_reverse_transform}
\lambda = \left[ \frac{e^{\loglambda^1}}{1 + \sum_{i=1}^{m-1} e^{\loglambda^i}},...,
\frac{e^{\loglambda^{m-1}}}{1 + \sum_{i=1}^{m-1} e^{\loglambda^i}}, 
\frac{e^{1}}{1 + \sum_{i=1}^{m-1} e^{\loglambda^i}}	 \right]^T.
\end{equation}
Thus, $\lambda$ is LG distributed, and the probability density function is given as:
\begin{equation}
\prob{\lambda} = \frac{1}{\sqrt{2\pi |\Sigma|}} \cdot
\frac{1}{\prod_{i=1}^m \lambda^i} \cdot
e^{-\frac{1}{2} || \loglambda -  \mu||^2_{\Sigma} },
\end{equation}
with $\mu \in \mathbb{R}^{m-1}$ and $\Sigma \in \mathbb{R}^{(m-1) \times (m-1)}$ being the LG parameters. The term $\frac{1}{\prod_{i=1}^m \lambda^i}$ is the determinant of the transformation Jacobian, and is denoted as $|J(\lambda)|$. Thus we write $\prob{\lambda}$ as:
\begin{equation}
\prob{\lambda} = \prob{\loglambda} |J(\lambda)|,
\end{equation}
with $\prob{\loglambda} \triangleq \prob{\loglambda} = \prob{\loglambda}(\mu,\Sigma)$,
and write $H(\lambda)$ as:
\begin{equation}\label{eq:app_ent_basic}
H(\lambda) = - \int_\lambda \prob{\lambda} \cdot \log \prob{\lambda} d\lambda
\end{equation}
Then, we transform the integral variable back to $\loglambda$, as we have a closed form expression for $H(\loglambda)$. As $|J(\lambda)|$ is the transformation Jacobian, $|J(\lambda)| d\lambda = d\loglambda$. From there we can write the integral in Eq.~\eqref{eq:app_ent_basic} as a function of $\loglambda$:
\begin{equation}\label{eq:app_entropy_seperation}
\begin{split}
H(\lambda) = & - \int_\lambda \prob{\loglambda} \cdot |J(\lambda)| \cdot \log(\prob{\loglambda} \cdot |J(\lambda)|) d\lambda = \\
& - \int_\loglambda \prob{\loglambda} \cdot \log (\prob{\loglambda} \cdot |J(\lambda)| ) dy = \\
& - \int_\loglambda \prob{\loglambda} \cdot \log (\prob{\loglambda}) d\loglambda - \int_\loglambda \prob{\loglambda} \cdot \log (|J(\lambda)|) d\loglambda = \\ 
& H(\loglambda) - \int_\loglambda \prob{\loglambda} \cdot \log (|J(\lambda)|) d\loglambda.
\end{split}
\end{equation}
The term $\int_\loglambda \prob{\loglambda} \cdot \log (|J(\lambda)|) d\loglambda$ is positive, as $\prob{\loglambda}$ is always positive and $J(\lambda) > 1$, therefore $H(\lambda) < H(\loglambda)$. Next, we describe $\log (|J(\lambda)|)$ in as a function of $\loglambda$:
\begin{equation}
\begin{split}
\log|J(\lambda)| = & \log \left( \frac{1}{\prod_{i=1}^m \lambda^i} \right) = - \sum_{i=1}^{m-1} \log \lambda^i - \log \lambda^m = \\
& - \sum_{i=1}^{m-1} \left[ \log(e^{\loglambda^i}) + \log\left(1+\sum_{j=1}^{m-1} e^{\loglambda^j}\right) \right] \\ & - \log(1) + \log\left(1+\sum_{j=1}^{m-1} e^{\loglambda^j}\right) = \\
& - \sum_{i=1}^{m-1} \loglambda^i + m\cdot\log\left(1+\sum_{j=1}^{m-1} e^{\loglambda^j}\right).
\end{split}
\end{equation}
Now we plug the above expression for $\log|J(\lambda)|$ into Eq.~\eqref{eq:app_entropy_seperation} and express $H(\lambda)$ as a function of $\loglambda$:
\begin{equation}
\begin{split}
H(\lambda) = & H(\loglambda) \\ & + \int_\loglambda \prob{\loglambda} \cdot \left[ \sum_{i=1}^{m-1} \loglambda^i - m\cdot\log\left(1+\sum_{j=1}^{m-1} e^{\loglambda^j}\right) \right] d\loglambda.
\end{split}
\end{equation}
As $\int_\loglambda \prob{\loglambda} \loglambda^i d\loglambda = \mathbb{E}[\loglambda^i]$, we can simplify the above equation into the form shown in Lemma \ref{lemma:LG_entropy_exact}:
\begin{equation}\label{eq:LG_entropy_exact_2}
H(\lambda) = H(\loglambda) + \sum_{i=1}^{m-1} \mathbb{E}(\loglambda^i) - m\int_\loglambda \log\left(1+\sum_{j=1}^{m-1} e^{\loglambda^j}\right) \cdot \prob{\loglambda} d\loglambda.
\end{equation}

%----------------------------------------------
\subsection{Proof of Lemma \ref{lemma:LG_entropy_bounds}}\label{sec:Proof3}

\subsubsection{Upper Bound}

Let us look at the integral in Eq.~\eqref{eq:LG_entropy_exact_2}. The term $\log\left(1+\sum_{j=1}^{m-1} e^{\loglambda^j}\right)$ can be bounded from below by:
\begin{equation}
	\log\left(1+\sum_{j=1}^{m-1} e^{\loglambda^j}\right) \geq
	\log(\max_i \{1,e^{\loglambda^i}\}) = \max_i \{0,\loglambda^i\}. 
\end{equation}
Substituting the above equation to Eq.~\eqref{eq:LG_entropy_exact_2} yields the following inequality:
\begin{equation}\label{eq:app_pre_final_upper}
	H(\lambda) \leq H(\loglambda) + \sum_{i=1}^{m-1} \mathbb{E}(\loglambda^i) - m\int_\loglambda \max_i \{0,\loglambda^i\} \cdot \prob{\loglambda} d\loglambda.
\end{equation}
The integral term is similar to the expectation definition for $\loglambda_i$, except that it considers only positive $\loglambda_i$, making the resulting value from the integral larger than $\mathbb{E}(\loglambda_i)$.
For the next step, we consider the case where there is at least a single $\mathbb{E}[\loglambda^i] \geq 0$, and the case where for all $i$, $\mathbb{E}[\loglambda^i] < 0$. Considering both cases we can write:
\[
\begin{cases}
	\int_\loglambda \max_i \{0,\loglambda^i\} \cdot \prob{\loglambda} d\loglambda \geq 0 &
	\mathbb{E}(\loglambda^i) < 0: \forall i \\
	\int_\loglambda \max_i \{0,\loglambda^i\} \cdot \prob{\loglambda} d\loglambda \geq \max \mathbb{E}(\loglambda^i) & \exists \mathbb{E}(\loglambda^i) \geq 0.
\end{cases}
\]
Considering both cases:
\begin{equation}
	\int_\loglambda \max_i \{0,\loglambda^i\} \cdot \prob{\loglambda} d\loglambda \geq 
	\max \{ 0, \mathbb{E}\loglambda^i \}.
\end{equation}
Finally, we can substitute the above expression into Eq.~\eqref{eq:app_pre_final_upper} and get the expression in Lemma \ref{lemma:LG_entropy_bounds}:
\begin{equation}
	H(\lambda) \leq H(\loglambda) + \sum_{i=1}^{m-1} \mathbb{E}(\loglambda^i) - m\max \{ 0, \mathbb{E}(\loglambda^i) \}.
\end{equation}

\subsubsection{Lower Bound}

Let us look again at the integral in Eq.~\eqref{eq:LG_entropy_exact_2}. This time, the term $\log\left(1+\sum_{j=1}^{m-1} e^{\loglambda^j}\right)$ can be bounded from above by:
\begin{equation}
\begin{split}
	\log\left(1+\sum_{j=1}^{m-1} e^{\loglambda^j}\right) \leq &  
	\log(\max_i \{m,me^{\loglambda^i}\}) = \\ & \max_i \{0,\loglambda^i\} + \log(m). 
\end{split}
\end{equation}
Now, we substitute the above inequality into Eq.~\eqref{eq:LG_entropy_exact_2}, and we get the following expression:
\begin{equation}\label{eq:app_pre_final_lower}
H(\lambda) \leq H(\loglambda) + \sum_{i=1}^{m-1} \mathbb{E}(\loglambda^i) - m \log(m) - m\int_\loglambda \max_i \{0,\loglambda^i\} \cdot \prob{\loglambda} d\loglambda.
\end{equation}
This time we look for an upper bound for $\int_\loglambda \max_i \{0,\loglambda^i\} \cdot \prob{\loglambda} d\loglambda$. Let us consider that:
\begin{equation}\label{eq:app_lower_bound_integral}
	\int_0^\infty \frac{\loglambda^i }{\sqrt{2 \pi}} e^{-\frac{(\loglambda^i)^2}{2\Sigma_{ii}} 
	} d\loglambda^i = \sqrt{\frac{\Sigma_{ii}}{2\pi}} 
	\leq \sqrt{\frac{\Sigma^{max}_{ii}}{2\pi}}
\end{equation}
where $\Sigma_{ii}$ is the element $(i,i)$ in the diagonal of matrix $\Sigma$, and $\Sigma_{ii}^{max}$ is the largest element of $\Sigma$. Then we can bound $\int_\loglambda \max_i \{0,\loglambda^i\} \cdot \prob{\loglambda} d\loglambda$ by:
\[
\begin{cases}
	\int_\loglambda \max_i \{0,\loglambda^i\} \cdot \prob{\loglambda} d\loglambda \leq \sqrt{\frac{\Sigma_{ii}^{max}}{2\pi}} &
	\mathbb{E}(\loglambda^i) < 0: \forall i \\
	\int_\loglambda \max_i \{0,\loglambda^i\} \cdot \prob{\loglambda} d\loglambda \leq \max \mathbb{E}(\loglambda^i) + \sqrt{\frac{\Sigma_{ii}^{max}}{2\pi}} & \exists \mathbb{E}(\loglambda^i) \geq 0,
\end{cases}
\]
From the above equation, we reach:
\begin{equation}
	\int_\loglambda \max_i \{0,\loglambda^i\} \cdot \prob{\loglambda} d\loglambda \leq \max_i \{0,\mathbb{E}(\loglambda^i)\} + \sqrt{\frac{\Sigma_{ii}^{max}}{2\pi}},
\end{equation}
and by substituting into Eq.~\eqref{eq:LG_entropy_exact_2}, we reach the lower bound presented in Lemma \ref{lemma:LG_entropy_bounds}:
\begin{equation}
\begin{split}
	H(\lambda) \geq & H(\loglambda) + \sum_{i=1}^{m-1} \mathbb{E}(\loggamma^i) \\ & - m \cdot \max_i \{ 0 , \mathbb{E}(\loggamma^i) \} - m \log m - \sqrt{\frac{\sigma_{ii}^{max}}{2\pi}}.
\end{split}
\end{equation}

	%%
	%%\section*{Appendix A: Inference}
	%%\label{Sec:AppendixA}
	%%\input{Appendix-Inference}
	%%
	%%\section*{Appendix B: On Computational Complexity of DA-BSP}
	%%\label{Sec:Appendix-Complexity}
	%%\input{Appendix-Complexity}
	%%
	%%\section*{Appendix C: Degenerate Cases of Data Association Aware BSP}
	%%\label{Sec:Appendix-DegenerateCases}
	%%\input{Appendix-DegenerateCases}
	% ======================
	
%	\begin{appendices}
%		\section{Viewpoint-Dependent Semantic Factor}
%		\input{A-ClassifierModel}
%	\end{appendices}
%	\vspace{-0.3cm}
%	\bibliographystyle{unsrt}
	\bibliographystyle{IEEEtran}
	\bibliography{refs}
%	\bibliography{../../../References/refs}
%	\bibliography{/Users/indelman/Vadim/PROFESSIONAL/RESEARCH/PAPERS/References/refs}

\end{document}